\documentclass{article}

\PassOptionsToPackage{numbers, sort, comma}{natbib}
\usepackage[preprint]{neurips_2026}

\usepackage[utf8]{inputenc} 
\usepackage[T1]{fontenc}    
\usepackage{hyperref}       
\usepackage{url}            
\usepackage{booktabs}       
\usepackage{amsmath}        
\usepackage{amsfonts}       
\usepackage{amsthm}         
\usepackage{nicefrac}       
\usepackage{microtype}      
\usepackage[table,svgnames]{xcolor} 
\usepackage{wrapfig}
\usepackage[most]{tcolorbox} 
\usepackage{tikz} 
\usepackage{graphicx}
\usepackage{multirow}
\usepackage{rotating}
\usepackage{makecell}
\usepackage[table]{xcolor}
\usepackage{colortbl}
\usepackage{array}
\usepackage{caption}
\usepackage{pythonhighlight}
\usepackage{algorithm}
\usepackage[noEnd=false,indLines=true,commentColor=gray,italicComments=false]{algpseudocodex}
\usepackage{svg}
\tcbuselibrary{skins, breakable}

\definecolor{darkgray}{HTML}{2D2D2D}
\definecolor{coolgray}{HTML}{8A9BB0}
\definecolor{promptbg}{HTML}{F4F6F8}
\definecolor{promptframe}{HTML}{C8D0DA}

\newtcolorbox{prompt}[1][llm-prompt]{%
  enhanced,
  breakable,
  arc=5pt,
  colback=promptbg,
  colframe=promptframe,
  boxrule=1pt,
  left=12pt, right=12pt, top=10pt, bottom=10pt,
  fontupper={\ttfamily\small\color{darkgray}},
  before upper={%
    {\ttfamily\footnotesize\color{coolgray}\$ #1}\par\smallskip%
  },
}

\newcommand{\ph}[1]{\textcolor{gray}{\{#1\}}}

\definecolor{rankone}{HTML}{85B7EB}   
\definecolor{ranktwo}{HTML}{CCE2F8}   
\definecolor{rank1}{HTML}{F3C99A}     
\definecolor{rank2}{HTML}{F9E6D2}     
 
\newcolumntype{C}[1]{>{\centering\arraybackslash}p{#1}}
\newcolumntype{V}[1]{>{\centering\arraybackslash}m{#1}}
\newcolumntype{L}[1]{>{\raggedright\arraybackslash}p{#1}}

\theoremstyle{plain}
\newtheorem{proposition}{Proposition}
\newtheorem{corollary}{Corollary}
\newtheorem{finding}{Question}

\newcommand{\cnum}[1]{%
  \tikz[baseline=(C.base)]\node[draw,circle,inner sep=0.6pt] (C) {\scriptsize #1};%
}

\newtcolorbox{corollarybox}{%
  enhanced,
  colback=gray!10,
  colframe=gray!60,
  leftrule=2pt,
  rightrule=0pt,
  toprule=0pt,
  bottomrule=0pt,
  arc=0pt,
  boxsep=0pt,
  left=8pt,
  right=8pt,
  top=6pt,
  bottom=6pt
}

\definecolor{findingbg}{RGB}{235,247,249}
\definecolor{findingframe}{RGB}{158,218,225}
\newtcolorbox{findingbox}{%
  enhanced,
  colback=findingbg,
  colframe=findingframe,
  leftrule=2pt,
  rightrule=0pt,
  toprule=0pt,
  bottomrule=0pt,
  arc=0pt,
  boxsep=2pt,
  left=8pt,
  right=8pt,
  top=6pt,
  bottom=6pt
}
\newenvironment{bbox}
  {\begin{findingbox}\begin{finding}}
  {\end{finding}\end{findingbox}}

\makeatletter
\newcommand{\appendixcontents}{%
  \section*{Table of Contents}
  \@starttoc{apc}%
}
\newcommand{\appsection}[1]{%
  \section{#1}%
  \addcontentsline{apc}{section}{\protect\numberline{\thesection}#1}%
}
\newcommand{\appsubsection}[1]{%
  \subsection{#1}%
  \addcontentsline{apc}{subsection}{\protect\numberline{\thesubsection}#1}%
}
\makeatother

\title{Back to the Beginning of Heuristic Design:\\Bridging Code and Knowledge with LLMs}

\author{%
  Nguyen Viet Tuan Kiet$^1$ \quad Bui Dinh Pham$^1{}^\star$ \quad Dao Van Tung$^1{}^\star$ \\
  \textbf{Tran Cong Dao}$^2{}^\star$ \quad \textbf{Huynh Thi Thanh Binh}$^1{}^\dagger$ \\
  $^1$Hanoi University of Science and Technology, Hanoi, Vietnam\\
  $^2$University of Sydney, Sydney, Australia\\
  ${}^\star$Co-second authors $^\dagger$Corresponding author: \texttt{binhht@soict.hust.edu.vn}
}

\begin{document}

\maketitle

\begin{abstract}
Large language models (LLMs) have recently advanced automatic heuristic design (AHD) for combinatorial optimization (CO), where candidate heuristics are iteratively proposed, evaluated, and refined. Most existing approaches search over executable programs and distill insights from execution feedback to guide later iterations. Because this process moves from low-level implementations to high-level principles, we refer to it as a \textit{bottom-up} paradigm. We argue that this view is incomplete and introduce a complementary \textit{top-down} perspective: knowledge becomes the primary search object and code merely instantiates and tests it, making what is learned explicit and reusable across problems and trajectories. We formalize this shift through a statistical-learning view that exposes a distortion--compression trade-off, and instantiate it in both population-based and tree-based AHD frameworks. Across CO and tasks beyond it, knowledge-first search improves discovery efficiency, transfer, and generalization, often outperforming code-centric pipelines, while combining both strategies yields further gains. Our results suggest that progress in AHD depends on iteratively constructing and evolving interpretable hypotheses that retain value beyond a single search trajectory.

\end{abstract}

\section{Introduction}

The rapid progress of large language models (LLMs) has catalyzed a broad spectrum of research directions centered on automating the discovery of structured computational artifacts, including automatic algorithm design (AAD)~\cite{liu2024evolution,novikov2025alphaevolve,agrawal2025gepa}, reward design (RD) for reinforcement learning~\cite{maeureka}, symbolic regression (SR)~\cite{holt2024automatically}, and scientific discovery~\cite{shojaee2025llmsr}. This momentum stems from a distinctive combination of capabilities: strong code synthesis~\cite{jiang2026survey}, access to extensive prior knowledge, and the ability to iteratively refine outputs through lightweight feedback and in-context adaptation~\cite{brown2020language,kojima2022large}. Together, these properties position LLMs not merely as passive generators of proposals, but as active components in search and optimization processes that evolve over multiple rounds.

Within this landscape, automatic heuristic design (AHD)~\cite{liu2024evolution,ye2024reevo} has emerged as a particularly effective paradigm for tackling combinatorial optimization (CO) problems. In AHD, the goal is to discover high-performing heuristics that generalize across problem instances, typically as components embedded within a broader solver. Solvers range from classical metaheuristics, such as ant colony optimization (ACO)~\cite{585892}, guided local search (GLS)~\cite{Voudouris2003} and large neighborhood search (LNS)~\cite{ye2025large}, to hybrid frameworks that combine neural combinatorial optimization (NCO)~\cite{kwon2020pomo} with heuristic signals. Rather than replacing these solvers, AHD aims to enhance their performance by refining key decision rules or scoring functions.

Recent LLM-driven frameworks instantiate heuristic design as a closed-loop process, where candidate solutions are proposed, evaluated, and iteratively refined. More recently, this paradigm has expanded into a diverse methodological landscape, encompassing evolutionary algorithms~\cite{liu2023algorithm,liu2024evolution,ye2024reevo,chentongchen2026hifoprompt}, Monte Carlo tree search~\cite{zheng2025monte,nguyen2026motif}, meta-optimization frameworks~\cite{shi2026generalizable}, and reinforcement learning-based approaches~\cite{huang2026calm}. These methods differ in how they explore the search space and leverage feedback signals, leading to a broad spectrum of design choices for LLM-driven search. Empirical results show that such pipelines can match or even surpass human-designed heuristics on standard benchmarks, while significantly reducing manual effort. 

Despite these advances, the dominant view of these systems remains largely operational: LLMs are treated as generators of candidate programs, and performance gains are attributed to faster exploration of the search space. This view obscures a more fundamental mechanism. In practice, LLM-guided search is rarely memoryless; it relies heavily on intermediate artifacts---reflections, critiques, summaries, and abstractions---that accumulate across iterations and implicitly shape future decisions. We argue that these artifacts are not incidental scaffolding but constitute a form of search knowledge that deserves to be treated as a first-class object of study.

Building on this observation, we contrast two complementary paradigms. In the prevailing \emph{bottom-up} view, candidate code is the primary search object and knowledge is inferred post-hoc from explored solutions; the quality and scope of acquired knowledge are therefore entangled with the trajectory of generated code. We instead propose a \emph{top-down} alternative in which knowledge is the primary search object: the algorithm searches in the space of reusable abstractions, while code is generated only as an instantiation for evaluation, and feedback updates the belief over knowledge rather than synthesizing new knowledge from code.

\paragraph{Contributions.}
In summary, we make four contributions:
\cnum{1} \textbf{\textit{Conceptually}}, we identify the distinction between code-centric and knowledge-centric search in LLM-based AHD, thereby making explicit a design axis that prior work largely leaves implicit, and reinterpret existing frameworks as predominantly bottom-up systems where knowledge is induced as a byproduct of evaluated programs.
\cnum{2} \textbf{\textit{Theoretically}}, we formulate AHD from a statistical learning perspective and show that top-down search induces a distortion--compression trade-off: it may lose fine-grained implementation details, but can reduce adaptive complexity by searching over a more compact knowledge space.
\cnum{3} \textbf{\textit{Methodologically}}, we propose top-down variants of population-based and tree-based AHD as a complementary paradigm that fills the underexplored role of knowledge as a first-class search object, and further introduce a dual code--knowledge variant that combines the strengths of both paradigms.
\cnum{4} \textbf{\textit{Empirically}}, we evaluate these paradigms across diverse settings, including cross-domain transfer, dual code--knowledge design, sparse evaluation, and tasks beyond CO, showing that top-down search improves efficiency, transferability, and generalization across both CO and non-CO benchmarks.

\section{Related Works}

\paragraph{Automatic Heuristic Design (AHD).}
AHD, originally studied under hyper-heuristics~\cite{burke2013hyper}, aims to automatically design heuristic components within a solver for a target problem class. Early GP-based methods~\cite{burke2009exploring} required carefully crafted representations, while recent LLM-based frameworks such as FunSearch~\cite{romera2024mathematical} enable program-level heuristic search at a much larger scale. We provide a more detailed discussion in Appendix~\ref{app:related_works}.

\emph{Population-based AHD} maintains and evolves a set of candidate programs. AEL~\cite{liu2023algorithm} and EoH~\cite{liu2024evolution} encode heuristics as searchable code functions, while ReEvo~\cite{ye2024reevo} introduces reflective evolution to refine programs using execution feedback. Later methods improve this code-centric search process through diversity control, performance prediction, or richer prompting, as in HSEvo~\cite{dat2025hsevo}, Hercules~\cite{wu2025efficient}, and HiFo~\cite{chentongchen2026hifoprompt}. However, even when these methods use reflections or knowledge pools, such insights are typically induced from evaluated code and used as auxiliary context for the same search process, rather than treated as the primary object being searched and transferred.

\emph{Tree-based AHD} methods replace population evolution with structured exploration. MCTS-AHD~\cite{zheng2025monte} combines Monte Carlo Tree Search with evolutionary operators to explore heuristic programs more comprehensively, allowing weak early candidates to remain available for later expansion. MOTIF~\cite{nguyen2026motif} extends this direction to multi-component heuristic design through multiple trees and game-inspired LLM-agent interactions. These methods improve how the code space is explored, but the tree nodes and search decisions are still organized around candidate heuristics or solver components, leaving the role of knowledge as a secondary mechanism rather than a first-class search object.

\paragraph{LLMs for Code Generation.}
Our top-down view is motivated by evidence that LLMs often benefit from explicit intermediate structures before producing final code. Least-to-Most Prompting~\cite{zhou2023leasttomost} and Parsel~\cite{zelikman2023parsel} show that decomposition can improve complex reasoning and algorithmic implementation. In code generation, planning-based methods such as \citet{zhang2023planning}, CodePlan~\cite{wen2025codeplan}, and Reasoning-as-Logic-Units~\cite{li2025reasoning} further show that plans, pseudocode, or logic units can guide program synthesis more effectively than direct decoding. These results support our premise that abstract knowledge can be useful before code is generated; however, they do not study how such knowledge should be searched, evaluated, and evolved in AHD.

\section{Bottom-Up and Top-Down AHD}
\label{main:method}

\subsection{Preliminary}

\paragraph{Notation.}
We represent a CO domain by $d=(\mathcal{X}_d,\mathcal{Y}_d,f_d)$, where $\mathcal{X}_d$ is the instance space, $\mathcal{Y}_d$ is the solution space, $\mathcal{Y}_d(\mathbf{x})\subseteq\mathcal{Y}_d$ is the set of feasible solutions for $\mathbf{x}\in\mathcal{X}_d$, and $f_d:\mathcal{X}_d\times\mathcal{Y}_d\to\mathbb{R}$ is the objective function, following recent work~\cite{nguyen2026motif}. Without loss of generality, we focus on minimization; maximization problems can be reduced to minimization by replacing $f_d$ with $-f_d$.

A solver family is parameterized by $\theta\in\Theta$, where each $\theta$ induces a solver $\pi_\theta:\mathcal{X}_d\to\mathcal{Y}_d$. The parameter $\theta$ may represent hyperparameters~\cite{xu2026autoep}, heuristic scoring rules~\cite{liu2024evolution,ye2024reevo,zheng2025monte}, program components, or combinations thereof~\cite{nguyen2026motif}. For an instance $\mathbf{x}\in\mathcal{X}_d$, the induced loss of $\theta$ is $\ell_d(\theta;\mathbf{x}) := f_d(\mathbf{x},\pi_\theta(\mathbf{x}))$.

As a running example, in the TSP, an instance $\mathbf{x}$ is a distance matrix, a solution $\mathbf{y}\in\mathcal{Y}_d$ is a tour permutation, and $f_d(\mathbf{x},\mathbf{y})$ is the corresponding tour length. In this case, $\pi_\theta$ may be a greedy construction heuristic, where $\theta$ parameterizes the scoring rule used to select the next city.

\paragraph{Problem Formulation.}
Let $\mathcal{P}$ be an unknown distribution over instances in $\mathcal{X}_d$, and let $\mathcal{D}=\{\mathbf{x}_i\}_{i=1}^n\sim\mathcal{P}^n$ be a dataset of $n$ instances drawn i.i.d.\ from $\mathcal{P}$. For $\theta\in\Theta$, we define the population risk and empirical risk by $L_{\mathcal{P}}(\theta) := \mathbb{E}_{\mathbf{x}\sim\mathcal{P}}[\ell_d(\theta;\mathbf{x})]$ and $\widehat{L}_{\mathcal{D}}(\theta) := \frac{1}{n}\sum_{i=1}^n \ell_d(\theta;\mathbf{x}_i)$, respectively.

We consider an LLM-based AHD framework $\mathcal{F}$ that adaptively searches over a heuristic space $\Theta$ and may maintain auxiliary knowledge states in a space $\mathcal{K}$, where each $K\in\mathcal{K}$ represents an intermediate knowledge state such as a design principle, motif, rule, or textual insight. At round $t$, the framework has access to a search history $H^{t-1}$ and constructs a context for the LLM from that history. The LLM, parameterized by $\phi$, induces conditional sampling distributions over both heuristics and knowledge states, which we denote by $Q_{\phi,t}^{\Theta}(\cdot\mid\cdot)$ and $Q_{\phi,t}^{\mathcal{K}}(\cdot\mid\cdot)$, respectively.

At a high level, $\mathcal{F}$ iteratively proposes heuristic candidates, evaluates them on $\mathcal{D}$ through their empirical risks, and updates its search state accordingly. The search dynamics are therefore governed by the order in which the framework proposes heuristics, acquires empirical feedback, and updates auxiliary knowledge states. After $T$ rounds, the framework returns a candidate $\widehat{\theta}_{\mathcal{F},T}(\mathcal{D})$. The overall objective is to minimize its expected population risk:
\begin{equation}
\mathfrak{R}_n(\mathcal{F}) := \mathbb{E}_{\mathcal{D}\sim\mathcal{P}^n}\,\mathbb{E}_{\mathcal{F}(\cdot\mid\mathcal{D})}\!\left[L_{\mathcal{P}}\bigl(\widehat{\theta}_{\mathcal{F},T}(\mathcal{D})\bigr)\right],
\end{equation}
where the inner expectation is over the framework's internal randomness, including LLM sampling.

\subsection{Bottom-Up AHD}

Bottom-up AHD follows a code-first workflow: the framework first proposes a heuristic candidate, then evaluates it on $\mathcal{D}$, and finally abstracts knowledge from the resulting empirical feedback. We write $A^t=\alpha(\theta^t)$ for the heuristic artifact observed by the reflector, which may be the source code itself, a trace, or a summary presented to the LLM.

Formally, at round $t$ the framework samples and evaluates a candidate according to
\begin{equation}
\theta^t \sim Q_{\phi,t}^{\Theta}(\cdot\mid H^{t-1}), \qquad \widehat{L}^t = \widehat{L}_{\mathcal D}(\theta^t).
\end{equation}
It then constructs an intermediate context $C^t = \Psi_{\mathrm{BU}}(H^{t-1},A^t,\widehat{L}^t)$ and generates a knowledge state
\begin{equation}
K^t \sim Q_{\phi,t}^{\mathcal K}(\cdot\mid C^t,A^t,\widehat{L}^t).
\end{equation}
The round terminates with a history update $H^t = \Xi_{\mathrm{BU}}(H^{t-1},A^t,\widehat{L}^t,K^t)$. This factorization emphasizes retrospective abstraction: the framework explores $\Theta$ directly, obtains empirical evidence through evaluation, and then uses the observed artifact together with its measured quality to synthesize a higher-level design insight.

\subsection{Top-Down AHD}

Top-down AHD follows a principle-first workflow: the framework first proposes a knowledge state, then instantiates it as a heuristic candidate, and finally uses empirical evaluation to validate or refine that knowledge. Formally, at round $t$ the process starts with
\begin{equation}
K^t \sim Q_{\phi,t}^{\mathcal K}(\cdot\mid H^{t-1}).
\end{equation}
The framework then constructs an intermediate context $C^t = \Psi_{\mathrm{TD}}(H^{t-1},K^t)$ and realizes a heuristic candidate according to
\begin{equation}
\theta^t \sim Q_{\phi,t}^{\Theta}(\cdot\mid C^t,K^t), \qquad \widehat{L}^t = \widehat{L}_{\mathcal D}(\theta^t).
\end{equation}
The history is subsequently updated as $H^t = \Xi_{\mathrm{TD}}(H^{t-1},K^t,A^t,\widehat{L}^t)$. This factorization emphasizes hypothesis-driven search: rather than searching directly over all heuristics in $\Theta$, the framework first moves in the more abstract space $\mathcal K$ and then realizes a sampled knowledge state as executable code, whose empirical performance is used to assess the plausibility of that principle.


From a Bayesian perspective, top-down search induces a latent-variable model at each round. Writing $C:=H^{t-1}$, let $q_t(K\mid C)$ denote the proposal law over knowledge states and $q_t(\theta\mid K,C)$ the conditional synthesis law over heuristics. The reflector does not reason over $\theta^t$ as an extensional object; instead, it observes the artifact $A^t=\alpha(\theta^t)$ together with the empirical feedback $\widehat{L}^t=\widehat{L}_{\mathcal D}(\theta^t)$. Under this view, bottom-up refinement is naturally interpreted as approximate posterior updating:
\begin{equation}
q_t(K\mid A^t,\widehat{L}^t,C) \propto q_t(\widehat{L}^t\mid A^t,K,C)\,q_t(A^t\mid K,C)\,q_t(K\mid C).
\label{eq:bayes}
\end{equation}
Here, $q_t(K\mid C)$ is the prior over latent design principles, $q_t(A^t\mid K,C)$ captures how well $K$ explains the observed artifact, and $q_t(\widehat{L}^t\mid A^t,K,C)$ captures how well $K$ predicts the observed empirical quality given that artifact. Thus, the role of empirical feedback is not merely to rank candidates across rounds, but also to refine the posterior belief over latent design principles within each round. A detailed derivation is provided in Appendix~\ref{app:equation_6}.

This Bayesian view suggests that the advantage of top-down search depends on whether the knowledge variable $K$ provides a faithful yet compressive coordinate system for heuristic quality. To formalize this trade-off, introduce an abstraction map $\kappa:\Theta\to\mathcal K$ and a measurable realization map $g:\mathcal K\to\Theta$ satisfying $\kappa(g(K))=K$ for all $K\in\mathcal K$. The composite $g\circ\kappa$ replaces each heuristic by a canonical representative of its knowledge class. We quantify the population-level loss of this compression by
\begin{equation}
\delta_{\mathcal P}(g,\kappa) := \sup_{\theta\in\Theta}\Bigl(L_{\mathcal P}\bigl(g(\kappa(\theta))\bigr)-L_{\mathcal P}(\theta)\Bigr).
\end{equation}
The quantity $\delta_{\mathcal P}(g,\kappa)$ is small when heuristics that share the same knowledge state also have similar population quality.

For clarity, we state the exact terminal-search version. Let $\widehat K$ denote the terminal knowledge state returned by top-down search and let $\widehat\theta_{\mathrm{TD}}:=g(\widehat K)$ be its realized heuristic. Assume that
\begin{equation}
\widehat K \in \arg\min_{K\in\mathcal K}\widehat L_{\mathcal D}\bigl(g(K)\bigr),
\qquad
\widehat\theta_{\mathrm{BU}} \in \arg\min_{\theta\in\Theta}\widehat L_{\mathcal D}(\theta).
\end{equation}

\begin{proposition}[Distortion--compression trade-off between top-down and bottom-up search]
Assume that $0\le \ell_d(\theta;\mathbf{x})\le 1$ almost surely for every $\theta\in\Theta$ and $\mathbf{x}\sim\mathcal P$, and let $L_{\mathcal P}^\star:=\inf_{\theta\in\Theta}L_{\mathcal P}(\theta)$. Then
\begin{equation}
\mathbb E\bigl[L_{\mathcal P}(\widehat\theta_{\mathrm{TD}})\bigr]-L_{\mathcal P}^\star \le \delta_{\mathcal P}(g,\kappa)+\sqrt{\frac{\mathbb{I}(\widehat K;\mathcal D)}{2n}}, \qquad \mathbb E\bigl[L_{\mathcal P}(\widehat\theta_{\mathrm{BU}})\bigr]-L_{\mathcal P}^\star \le \sqrt{\frac{\mathbb{I}(\widehat\theta_{\mathrm{BU}};\mathcal D)}{2n}}.
\end{equation}
Here $\mathbb{I}(\cdot;\cdot)$ denotes mutual information. The expectations are taken over the draw of $\mathcal D$ and the internal randomness of the framework. Proof and discussion are provided in Appendix~\ref{app:proof_prop_1}.
\label{eq:prop_1}
\end{proposition}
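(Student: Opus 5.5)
The plan is to combine the standard empirical-risk-minimization decomposition with the mutual-information generalization bound of Xu and Raginsky. That bound says the following. Let $W$ be any (possibly randomized) output of a learning algorithm run on $\mathcal D\sim\mathcal P^n$, and suppose the loss $\ell(w;\mathbf{x})$ is $\sigma$-sub-Gaussian under $\mathbf{x}\sim\mathcal P$ for every fixed $w$. Then $\bigl|\mathbb E[L_{\mathcal P}(W)-\widehat L_{\mathcal D}(W)]\bigr|\le\sqrt{2\sigma^2\,\mathbb I(W;\mathcal D)/n}$. Since the loss takes values in $[0,1]$, Hoeffding's lemma gives $\sigma^2=1/4$, so the bound becomes exactly $\sqrt{\mathbb I(W;\mathcal D)/(2n)}$. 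The framework's internal randomness, including LLM sampling, is simply part of the conditional law $P_{W\mid\mathcal D}$, so it is covered automatically.

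\textbf{Bottom-up bound.} I would take $W=\widehat\theta_{\mathrm{BU}}$ and write
\[
\mathbb E[L_{\mathcal P}(\widehat\theta_{\mathrm{BU}})]=\mathbb E[\widehat L_{\mathcal D}(\widehat\theta_{\mathrm{BU}})]+\mathbb E[L_{\mathcal P}(\widehat\theta_{\mathrm{BU}})-\widehat L_{\mathcal D}(\widehat\theta_{\mathrm{BU}})].
\]
The second term is at most $\sqrt{\mathbb I(\widehat\theta_{\mathrm{BU}};\mathcal D)/(2n)}$ by the bound above. For the first term, fix any data-independent $\theta\in\Theta$. The argmin property gives $\widehat L_{\mathcal D}(\widehat\theta_{\mathrm{BU}})\le\widehat L_{\mathcal D}(\theta)$ pointwise, and taking expectations yields $\mathbb E[\widehat L_{\mathcal D}(\theta)]=L_{\mathcal P}(\theta)$, which is unbiased because $\theta$ does not depend on $\mathcal D$. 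Taking the infimum over $\theta$ replaces $L_{\mathcal P}(\theta)$ by $L^\star_{\mathcal P}$, which gives the second inequality.

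\textbf{Top-down bound.} I would apply the same bound with $W=\widehat K$ and the composite loss $\tilde\ell(K;\mathbf{x}):=\ell_d(g(K);\mathbf{x})\in[0,1]$. This is legitimate because $g$ is measurable, and it produces the term $\sqrt{\mathbb I(\widehat K;\mathcal D)/(2n)}$ for $\widehat\theta_{\mathrm{TD}}=g(\widehat K)$. For the empirical term, fix any $\theta\in\Theta$ and use the comparator $K_\theta:=\kappa(\theta)$, which is data-independent. Optimality of $\widehat K$ gives $\widehat L_{\mathcal D}(g(\widehat K))\le\widehat L_{\mathcal D}(g(\kappa(\theta)))$. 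Taking expectations gives $\mathbb E[\widehat L_{\mathcal D}(g(\widehat K))]\le L_{\mathcal P}(g(\kappa(\theta)))$. By the definition of $\delta_{\mathcal P}(g,\kappa)$, the right-hand side is at most $L_{\mathcal P}(\theta)+\delta_{\mathcal P}(g,\kappa)$. Taking the infimum over $\theta$ then gives the first inequality. Note that the identity $\kappa(g(K))=K$ is not actually needed for the bound; it only justifies reading $g\circ\kappa$ as a canonical representative.

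\textbf{Main obstacle.} The delicate step is checking that the mutual-information bound applies cleanly. First, the comparators $\theta$ and $\kappa(\theta)$ must be fixed before $\mathcal D$ is drawn; only the learned quantity may depend on the data. Second, the relevant information term must be the one for the object actually fed into the loss. In the top-down case this is $\widehat K$, not the whole search history. By the data-processing inequality, $\mathbb I(g(\widehat K);\mathcal D)\le\mathbb I(\widehat K;\mathcal D)$, so stating the bound with $\widehat K$ is valid and is slightly looser than using $g(\widehat K)$.

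I would also note that no measurable-selection issue arises, because the argmins are assumed to exist. If they did not, the same argument would go through with $\varepsilon$-minimizers, letting $\varepsilon\to0$ at the end. Finally, if $\mathbb I(\widehat K;\mathcal D)=\infty$ the bound is vacuous but still true.
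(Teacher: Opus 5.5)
Your proof is correct and is essentially the paper's argument: the same empirical-risk decomposition combined with the mutual-information generalization bound for $[0,1]$-valued losses, which the paper re-derives via Hoeffding's lemma and Donsker--Varadhan rather than citing Xu--Raginsky. The differences are minor refinements: you compare against arbitrary data-independent comparators $\theta$ and $\kappa(\theta)$ and then take an infimum, which avoids the paper's tacit assumption that a population minimizer $\theta^\star$ exists, and you apply the bound to $\widehat K$ through the composite loss $\ell_d(g(K);\mathbf{x})$ instead of to $g(\widehat K)$ followed by data processing.
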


Proposition~1 isolates the trade-off between the two paradigms. Top-down search pays a representation cost, captured by $\delta_{\mathcal P}(g,\kappa)$, because collapsing many heuristics into a single knowledge state may discard fine-grained implementation details. In return, it enjoys a potentially smaller adaptive generalization cost, since the information term is controlled at the level of the terminal knowledge state $\widehat K$ rather than the full heuristic output. Bottom-up search, by contrast, incurs no representation distortion because it operates directly in $\Theta$, but its adaptive complexity is governed by $\mathbb{I}(\widehat\theta_{\mathrm{BU}};\mathcal D)$.

Consequently, top-down AHD is favored when the knowledge representation is both coherent and compressive: coherent, in the sense that heuristics sharing the same knowledge state have similar population quality and hence $\delta_{\mathcal P}(g,\kappa)$ is small; and compressive, in the sense that the terminal knowledge state carries substantially less information about the sampled dataset than a directly selected heuristic, i.e., $\mathbb{I}(\widehat K;\mathcal D)\ll \mathbb{I}(\widehat\theta_{\mathrm{BU}};\mathcal D)$. Conversely, bottom-up AHD is preferable when heuristic quality depends strongly on low-level details that are not faithfully captured by the abstraction map $\kappa$, in which case the distortion term may dominate the benefit of compression.

\subsection{From Bottom-Up to Top-Down}

\begin{figure}[t]
  \centering%
  \includegraphics[width=\textwidth]{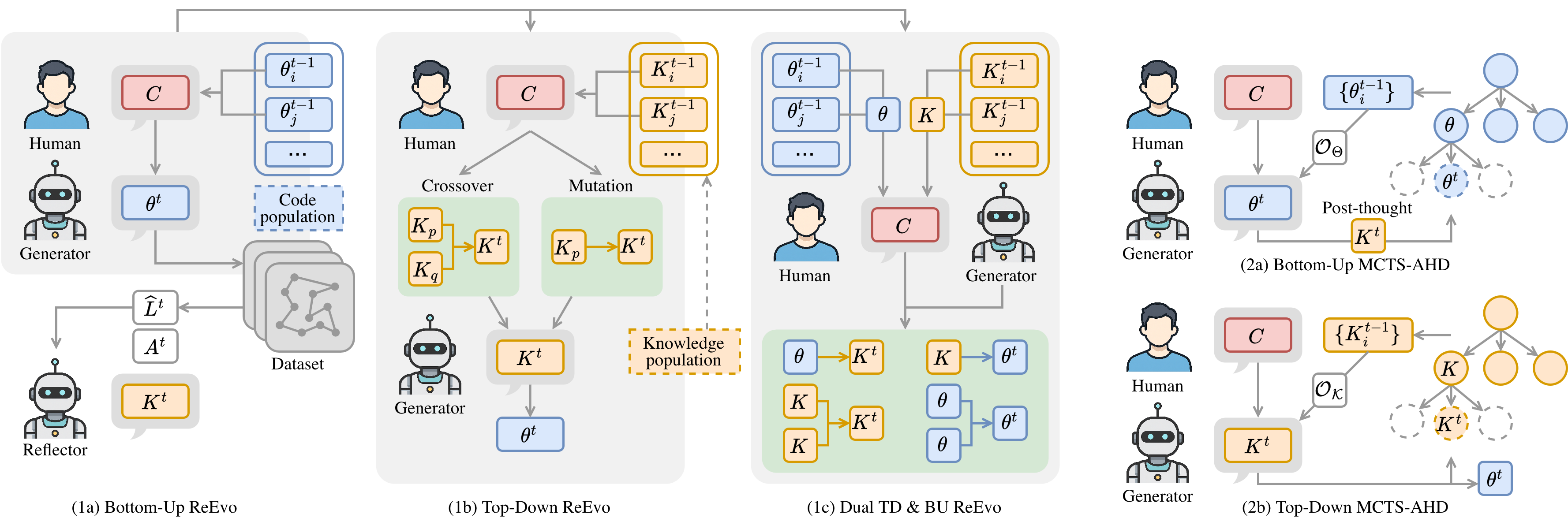}
  \caption{Comparison of bottom-up and top-down paradigms for AHD.
(1a) ReEvo BU evolves executable programs and distills knowledge from execution feedback.
(1b) ReEvo TD directly evolves a population of knowledge units via genetic operators, which are then instantiated into code.
(1c) Dual formulation jointly evolves code and knowledge with bidirectional interaction.
(2a) MCTS-AHD BU searches in the program space with post-hoc abstraction, whereas 
(2b) MCTS-AHD TD performs search directly in the knowledge space and derives executable programs from selected knowledge.}
  \label{fig:main}
\end{figure}

\paragraph{Existing AHD as Bottom-Up Search.}
The distinction above allows us to reinterpret existing AHD frameworks under a common lens. Frameworks such as FunSearch~\cite{romera2024mathematical}, ReEvo~\cite{ye2024reevo}, HSEvo~\cite{dat2025hsevo}, MCTS-AHD~\cite{zheng2025monte}, MOTIF~\cite{nguyen2026motif}, and HiFo-Prompt~\cite{chentongchen2026hifoprompt} differ in their search operators and control flow, yet they share the same fundamental priority: the primary search object is the heuristic code itself. In other words, these methods first move in the code space $\Theta$, then evaluate the resulting candidate on $\mathcal D$, and only afterward update auxiliary context or knowledge from the observed behavior. At a high level, the contrast between the two search orders can be summarized as
\begin{equation}
\boxed{\text{BU:}\; \theta^t \longrightarrow \widehat{L}^t \longrightarrow K^t \longrightarrow \theta^{t+1}}
\qquad
\boxed{\text{TD:}\; K^t \longrightarrow \theta^t \longrightarrow \widehat{L}^t \longrightarrow K^{t+1}}
\label{eq:bu_td_chain}
\end{equation}
In bottom-up AHD, knowledge $K^t$ is induced \emph{after} evaluating a realized heuristic, whereas in top-down AHD, knowledge $K^t$ is itself the state being evolved, and heuristic code is synthesized only to test whether that knowledge leads to good performance. Figure~\ref{fig:main} illustrates this contrast.

\paragraph{Top-Down Population-Based Search.}
A top-down population-based framework reverses the bottom-up priority by treating knowledge, rather than code, as the population state. Let $\mathcal P_K^t=\{K_i^t\}_{i=1}^{M}$ denote the knowledge population at generation $t$. One generation can be summarized as
\begin{equation}
\mathcal P_K^t
\xrightarrow{\text{instantiate}}
\{\theta_i^t\}_{i=1}^{M}
\xrightarrow{\text{evaluate on }\mathcal D}
\{\widehat L_{\mathcal D}(\theta_i^t)\}_{i=1}^{M}
\xrightarrow{\text{evolve in }\mathcal K}
\mathcal P_K^{t+1}.
\label{eq:td_pop_chain}
\end{equation}
Thus, evaluation is still carried out through executable code, but selection and variation are performed over knowledge states. For example, a top-down version of ReEvo~\cite{ye2024reevo} applies crossover (CX) and mutation (MT) to knowledge states rather than code candidates:
\begin{equation}
(K_p^t,K_q^t,\widehat L_{\mathcal D}(\theta_p^t),\widehat L_{\mathcal D}(\theta_q^t))
\mapsto K_{\mathrm{CX}}^{t+1},
\qquad
(K_p^t,\widehat L_{\mathcal D}(\theta_p^t))
\mapsto K_{\mathrm{MT}}^{t+1}.
\label{eq:td_pop_ops}
\end{equation}
The short-term and long-term reflection mechanisms of ReEvo are lifted in the same way: instead of reflecting on code edits directly, they summarize why certain knowledge states lead to better or worse instantiated heuristics, and the resulting reflections guide subsequent updates in $\mathcal K$.

\paragraph{Dual Top-Down and Bottom-Up Population-Based Search.}
The population-based setting can also be extended to a dual search process that maintains two populations in parallel: a knowledge population $\mathcal P_{\mathcal K}^{t}=\{K_i^t\}_{i=1}^{M_{\mathcal K}}$ and a code population $\mathcal P_{\Theta}^{t}=\{\theta_j^t\}_{j=1}^{M_{\Theta}}$. The two populations expose different search coordinates for the same objective. A knowledge individual $K_i^t$ is evaluated through its realization $g(K_i^t)$, with score $\widehat L_{\mathcal D}(g(K_i^t))$, while a code individual $\theta_j^t$ is evaluated by $\widehat L_{\mathcal D}(\theta_j^t)$.

Each iteration consists of a knowledge-side phase and a code-side phase. On the knowledge side, crossover combines two knowledge states after short-term reflection on their relative performance, while distillation (DS) converts high-performing executable behavior into a new knowledge state:
\begin{equation}
\begin{aligned}
(K_p^t,K_q^t,\widehat L_{\mathcal D}(g(K_p^t)),\widehat L_{\mathcal D}(g(K_q^t)))
&\xrightarrow{\mathrm{ST}_{\mathcal K}}
r_{\mathcal K}^t
\xrightarrow{\mathrm{CX}_{\mathcal K}}
K_{\mathrm{CX}}^{t+1},\\
(K^t,\theta^t,\widehat L_{\mathcal D}(g(K^t)),\widehat L_{\mathcal D}(\theta^t),R_{\mathrm{LT}}^t)
&\xrightarrow{\mathrm{DS}}
K_{\mathrm{DS}}^{t+1}.
\end{aligned}
\label{eq:dual_knowledge_phase}
\end{equation}
Here, $r_{\mathcal K}^t$ is a short-term reflection explaining why one knowledge state is better than another, and $R_{\mathrm{LT}}^t$ denotes the accumulated long-term reflection shared across both populations.

On the code side, crossover remains a bottom-up operator over executable heuristics, while grounding (GR) uses a strong knowledge state to generate a new code candidate:
\begin{equation}
\begin{aligned}
(\theta_p^t,\theta_q^t,\widehat L_{\mathcal D}(\theta_p^t),\widehat L_{\mathcal D}(\theta_q^t))
&\xrightarrow{\mathrm{ST}_{\Theta}}
r_{\Theta}^t
\xrightarrow{\mathrm{CX}_{\Theta}}
\theta_{\mathrm{CX}}^{t+1},\\
(K^t,\theta^t,\widehat L_{\mathcal D}(g(K^t)),\widehat L_{\mathcal D}(\theta^t),R_{\mathrm{LT}}^t)
&\xrightarrow{\mathrm{GR}}
\theta_{\mathrm{GR}}^{t+1}.
\end{aligned}
\label{eq:dual_code_phase}
\end{equation}
The short-term reflections $r_{\mathcal K}^t$ and $r_{\Theta}^t$ are then merged into the next long-term reflection $R_{\mathrm{LT}}^{t+1}$, which provides shared guidance for both DS and GR in later iterations.

\paragraph{Top-Down Tree-Based Search.}
The same shift applies to tree-based AHD. Let $\mathcal T^t$ denote the search tree at iteration $t$, where each node stores a knowledge state and its associated tree statistics. If $v_t$ is the selected leaf with knowledge state $K_{v_t}$, then top-down expansion first proposes child knowledge states, realizes them as code, and backs up their evaluated scores:
\begin{equation}
K_{v_t}
\xrightarrow{\text{expand}}
\{K_{t,j}\}_{j=1}^{m}
\xrightarrow{\text{instantiate}}
\{\theta_{t,j}\}_{j=1}^{m}
\xrightarrow{\text{evaluate}}
\{\widehat L_{\mathcal D}(\theta_{t,j})\}_{j=1}^{m}
\xrightarrow{\text{backup}}
\mathcal T^{t+1}.
\label{eq:td_tree_chain}
\end{equation}
This yields a top-down version of MCTS-AHD~\cite{zheng2025monte}: node values estimate the utility of knowledge states, expansion creates new principles rather than direct code variants, and the backed-up rewards come from the executable heuristics synthesized from those principles.

\section{Experiments}
\label{main:experiments}

We provide the definitions of all benchmark problems and algorithmic backbones in Appendices~\ref{app:problems} and~\ref{app:algorithms}, respectively. Implementation details, additional results, and qualitative analyses are reported in Appendices~\ref{app:setup},~\ref{app:add_exp}, and~\ref{app:qualitative_anal}. Overall, our experiments span more than 15 problems and 7 algorithms.


\subsection{Top-Down vs. Bottom-Up AHD}

\begin{figure}[t]
  \centering
  \begin{minipage}[c]{0.67\textwidth}
    \centering
    \includegraphics[width=\linewidth]{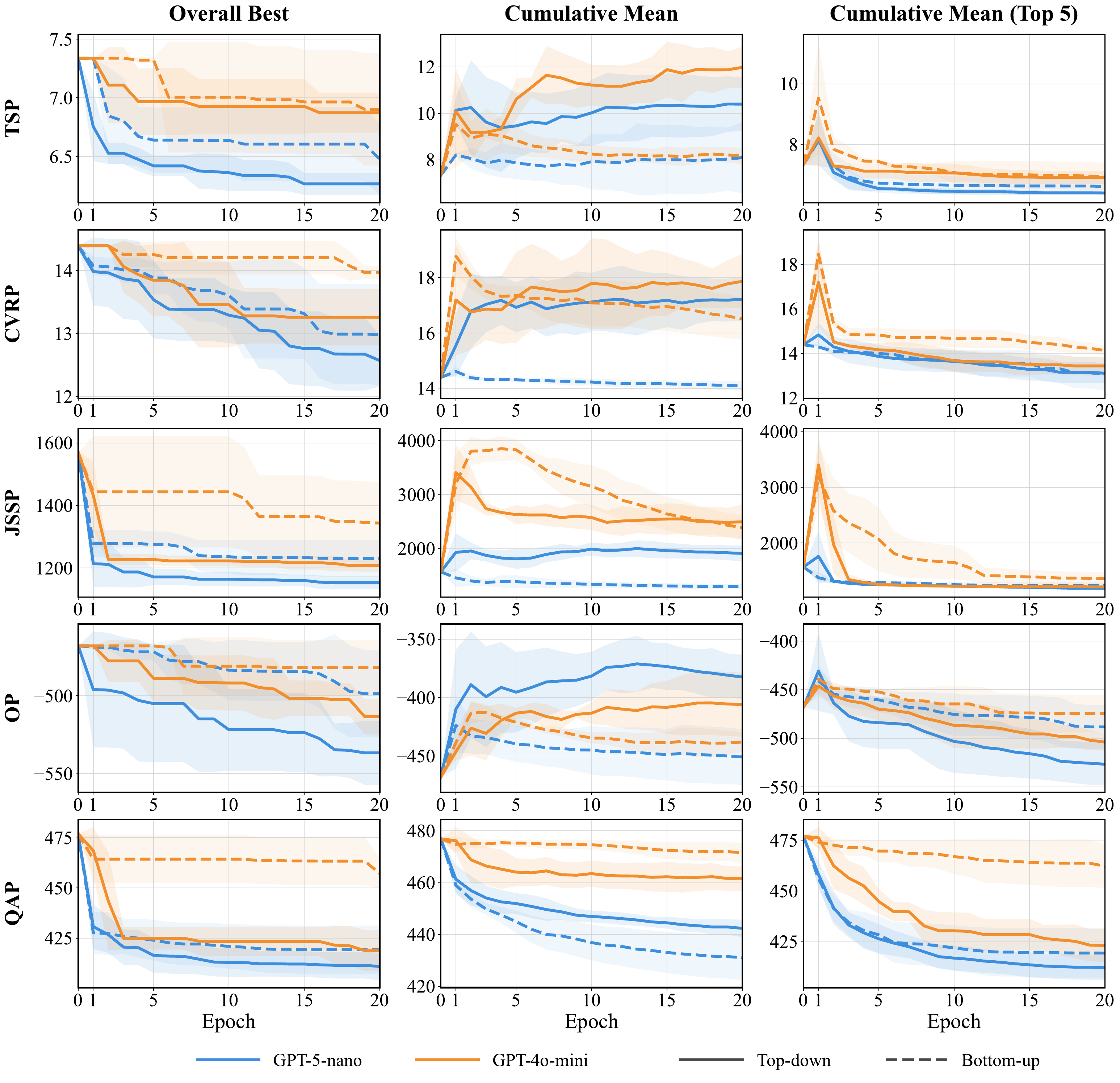}
  \end{minipage}
  \hfill
  \begin{minipage}[c]{0.32\textwidth}%
\begin{tcolorbox}[colback=white,colframe=black,boxrule=0.5pt,arc=0pt,boxsep=1pt,left=3pt,right=3pt,top=3pt,bottom=3pt]
\textbf{FunSearch BU}\par
\medskip{\scriptsize
Initialize a top-$k$ code archive $\mathcal{E}_{\theta}^{0}$.\\
For $t=1,\ldots,T$:\\
1. Propose: $\mathcal E_\theta^{t-1},K^{t-1}\rightarrow \{\theta^t_i\}$\\
2. Evaluate: $\{\theta^t_i\} \rightarrow \{\widehat{L}^t_i\}$\\
3. Update: $\mathcal E_\theta^{t-1},\{\theta^t_i,\widehat{L}_i^t\} \rightarrow \mathcal E_\theta^t$\\
4. Reflect: $\mathcal E_\theta^t \rightarrow K^t$}
\end{tcolorbox}
\begin{tcolorbox}[colback=white,colframe=black,boxrule=0.5pt,arc=0pt,boxsep=1pt,left=3pt,right=3pt,top=3pt,bottom=3pt]
\textbf{FunSearch TD}\par
\medskip{\scriptsize
Initialize a top-$k$ knowledge archive $\mathcal E_K^0$.\\
For $t=1,\ldots,T$:\\
1. Propose: $\mathcal E_K^{t-1} \rightarrow \{K^t_i\}$\\
2. Implement: $\{K^t_i\} \rightarrow \{\theta^t_i\}$\\
3. Evaluate: $\{\theta^t_i\} \rightarrow \{\widehat{L}^t_i\}$\\
4. Update: $\mathcal E_K^{t-1},\{K^t_i,\theta^t_i,\widehat{L}_i^t\}\rightarrow \mathcal E_K^t$}
\end{tcolorbox}
  \end{minipage}
\caption{\textbf{Left:} Training trajectories of constructive heuristics on five CO tasks (mean over 5 runs). Lower is better. 
\textbf{Right:} Bottom-up (code-centric) and top-down (knowledge-centric) search pipelines.}

  \label{fig:fig_1}
\end{figure}
We first compare top-down and bottom-up AHD in the simplest implementation to isolate the effect of changing the primary search object. As shown in Figure~\ref{fig:fig_1}, top-down search achieves better best-so-far objectives than bottom-up search in most cases for both \texttt{GPT-5-nano} and \texttt{GPT-4o-mini}. In some settings, especially with \texttt{GPT-5-nano}, its cumulative mean is also higher, suggesting broader exploration and more diverse code behaviors, since objective values reflect the behavior induced by generated programs. Bottom-up search appears more concentrated in those cases, but this narrower exploration does not necessarily yield stronger elites, as reflected by the top-5 cumulative mean.

This advantage with \texttt{GPT-5-nano} also suggests that top-down AHD is not merely ``reasoning before coding'' inside a stronger model. Rather, it provides a distinct search structure in which knowledge is explicitly maintained and evolved across iterations.

\subsection{Cross-Domain Generalization}

\begin{table}[t]
\centering
\caption{Performance gap (\%) relative to the best result for each problem type and size (lower is better).
Cell shading highlights the top-2 methods within each column (darker indicates better rank); the average rank is computed over all 12 columns.
CPT denotes \emph{cross-problem transfer}: TSP ACO uses the TSP constructive heuristic as source, while all other settings use TSP ACO as source.}
\label{tab:tab_1}
\setlength{\tabcolsep}{5pt}
\renewcommand{\arraystretch}{1.2}
\resizebox{\linewidth}{!}{%
\begin{tabular}{|c|l|*{3}{C{1.3cm}|}*{3}{C{1.3cm}|}*{3}{C{1.3cm}|}*{3}{C{1.3cm}|}C{1.2cm}|}
\hline
\multirow{2}{*}{\textbf{Method}} & \multicolumn{1}{c|}{\multirow{2}{*}{\textbf{Variant}}}
  & \multicolumn{3}{c|}{\textbf{TSP}}
  & \multicolumn{3}{c|}{\textbf{CVRP}}
  & \multicolumn{3}{c|}{\textbf{OVRP}}
  & \multicolumn{3}{c|}{\textbf{LVRP}}
  & \textbf{Avg.} \\
\cline{3-14}
 & \multicolumn{1}{c|}{} & 100 & 200 & 500 & 100 & 200 & 500 & 100 & 200 & 500 & 100 & 200 & 500 & \textbf{Rank} \\
\hline
\multirow{2}{*}{---}
  & ACO     & 13.61 & 23.53 & 36.74 & 17.33 & 23.44 & 28.10 & 25.44 & 32.12 & 37.03 & 14.01 & 19.77 & 23.71 & --- \\
\cline{2-15}
  & DeepACO &  0.01 &  0.00 &  0.00 &  0.00 &  0.00 &  0.00 &  2.22 &  0.00 &  0.00 &  0.00 &  0.00 &  0.00 & --- \\
\hline
\hline
\multirow{4}{*}{EA}
  & BU       & 1.02 & 5.17 & 11.33 & 4.94 & 7.66 & 8.82 & 5.64 & 8.80 & 12.11 & 4.35 & 7.02 & 8.41 & 6.33 \\
  & TD       & \cellcolor{rankone}0.00 & \cellcolor{ranktwo}2.47 & \cellcolor{ranktwo}5.93 & 4.10 & \cellcolor{ranktwo}5.56 & \cellcolor{rankone}5.59 & 4.96 & \cellcolor{ranktwo}5.82 & \cellcolor{ranktwo}6.70 & \cellcolor{ranktwo}2.36 & \cellcolor{ranktwo}3.43 & \cellcolor{ranktwo}3.87 & \cellcolor{rankone}2.08 \\
  & BU + CPT & 1.01 & 5.22 & 10.97 & 4.91 & 9.17 & 11.81 & 5.33 & 12.92 & 25.45 & 3.28 & 6.46 & 9.08 & 6.25 \\
  & TD + CPT & 1.46 & 6.03 & 12.50 & 4.75 & 6.79 & 7.53 & 5.61 & 7.52 & 8.46 & \cellcolor{rankone}2.18 & \cellcolor{rankone}3.32 & \cellcolor{rankone}3.75 & 4.67 \\
\hline
\multirow{4}{*}{MCTS}
  & BU       & 0.55 & 2.48 & 6.69 & 3.86 & 6.59 & 8.83 & 5.60 & 7.42 & 8.99 & 4.97 & 7.81 & 9.18 & 5.12 \\
  & TD       & \cellcolor{ranktwo}0.07 & \cellcolor{rankone}2.01 & \cellcolor{rankone}4.87 & \cellcolor{rankone}2.63 & \cellcolor{rankone}4.84 & \cellcolor{ranktwo}6.19 & \cellcolor{ranktwo}4.65 & 6.17 & 6.92 & 4.97 & 7.79 & 9.06 & 3.04 \\
  & BU + CPT & 0.48 & 3.39 & 7.77 & 6.08 & 10.43 & 13.06 & 5.93 & 8.35 & 10.88 & 2.71 & 5.78 & 7.95 & 5.67 \\
  & TD + CPT & 0.35 & 3.03 & 6.78 & \cellcolor{ranktwo}3.28 & 5.83 & 7.15 & \cellcolor{rankone}0.00 & \cellcolor{rankone}1.45 & \cellcolor{rankone}3.57 & 4.25 & 6.21 & 7.18 & \cellcolor{ranktwo}2.83 \\
\hline
\end{tabular}%
}
\end{table}

A useful abstraction should not only guide a single training trajectory, but also remain valuable beyond the trajectory in which it was discovered. This property has received limited empirical attention in prior AHD studies, where intermediate reflections are usually evaluated only through their immediate effect on the ongoing search. We therefore study whether the final artifacts produced by a completed search can be reused to guide new searches on related target domains.

\paragraph{Offline Transfer.}
We reuse the output of a completed source-domain search as an external artifact for the target-domain search. Specifically, bottom-up transfer injects the final code artifact, while top-down transfer injects the final knowledge artifact. In both cases, the artifact is included in the prompt at every step, and the LLM is instructed to use it as inspiration when designing target candidates. Table~\ref{tab:tab_1} considers both cross-solver transfer, from constructive TSP to ACO TSP, and within-solver transfer, from TSP to VRP variants under the same ACO backbone. Transfer does not always improve over training from scratch, but top-down transfer adapts better in most cases. In some settings, such as TSP $\to$ OVRP/LVRP, it even gives the best overall results. We also provide an information-theoretic analysis in Appendix~\ref{app:transfer}.

\paragraph{Online Transfer.}
Table~\ref{tab:tab_2} evaluates online transfer with NCO on multi-distribution TSP, where the LLM searches for heuristic functions that generate bias indicators for the solver. We consider two settings: \emph{white-box}, where the LLM is informed of the distributional structure, and \emph{black-box}, where it only knows that the distribution has shifted and must discover the structure through search. We run single-task search for pure top-down and bottom-up variants, while EoH-S~\cite{liu2026eoh} and the dual TD\&BU variant use a multi-task setting with four times the single-task budget. Here, transfer occurs during search through interaction between the code population, which captures specialized implementations, and the knowledge population, which captures more general abstractions. ReEvo TD\&BU achieves the best overall performance, while pure top-down still consistently outperforms bottom-up.

\begin{table}[t]
\centering
\setlength{\tabcolsep}{5pt}
\renewcommand{\arraystretch}{1.2}
\caption{Performance gap (\%) relative to the LKH3 baseline across TSP distributions and problem sizes (lower is better).
Cell shading highlights the best method within each block.
$^\dagger$ denotes the multi-task setting, where a single method jointly learns four heuristics, one for each distribution.}
\label{tab:tab_2}
\resizebox{\linewidth}{!}{%
\begin{tabular}{|c|l|*{3}{C{1cm}|}*{3}{C{1cm}|}*{3}{C{1cm}|}*{3}{C{1cm}|}C{1.15cm}|}
\hline
\multirow{2}{*}{\textbf{Type}} & \multicolumn{1}{c|}{\multirow{2}{*}{\textbf{Variant}}}
  & \multicolumn{3}{c|}{\textbf{TSP Uniform}}
  & \multicolumn{3}{c|}{\textbf{TSP Clustered}}
  & \multicolumn{3}{c|}{\textbf{TSP Diagonal}}
  & \multicolumn{3}{c|}{\textbf{TSP Barbell}}
  & \textbf{Avg.} \\
\cline{3-14}
 & \multicolumn{1}{c|}{} & 100 & 200 & 500 & 100 & 200 & 500 & 100 & 200 & 500 & 100 & 200 & 500 & \textbf{Rank} \\
\hline
\hline
\multicolumn{15}{|c|}{\textbf{White-Box}} \\
\hline
\multirow{5}{*}{Node} & POMO & 1.85 & 4.65 & 16.26 & 3.49 & 7.99 & 20.34 & 2.64 & 9.56 & 21.16 & 9.15 & 13.65 & 28.18 & 4.96 \\
 & + ReEvo BU & \cellcolor{rank1}\textbf{1.58} & 4.41 & 13.10 & 3.34 & 6.93 & 15.12 & 2.27 & 8.00 & 14.69 & \cellcolor{rank1}7.75 & \cellcolor{rank1}10.25 & 15.82 & 2.75 \\
 & + ReEvo TD & 1.71 & 4.20 & 12.14 & \cellcolor{rank1}3.28 & \cellcolor{rank1}\textbf{6.20} & \cellcolor{rank1}11.18 & 2.35 & \cellcolor{rank1}7.76 & \cellcolor{rank1}13.23 & 7.93 & 10.57 & 16.97 & 2.21 \\
 & + EoH-S$^\dagger$ & 1.85 & 4.12 & 12.70 & 3.48 & 6.80 & 13.11 & 2.29 & 8.09 & 14.74 & 8.13 & 10.32 & 15.61 & 3.04 \\
 & + ReEvo TD \& BU$^\dagger$ & 1.78 & \cellcolor{rank1}\textbf{3.96} & \cellcolor{rank1}11.02 & 3.28 & 6.72 & 14.80 & \cellcolor{rank1}2.26 & 7.97 & 14.48 & 8.28 & 10.46 & \cellcolor{rank1}15.58 & \cellcolor{rank1}2.04 \\
\hline
\multirow{5}{*}{Edge} & DIFUSCO + LS & 7.33 & 9.62 & 14.04 & 4.81 & 8.68 & 15.30 & 4.52 & 10.57 & 15.05 & 8.15 & 12.85 & 18.73 & 4.83 \\
 & + ReEvo BU & 6.18 & 7.49 & 10.83 & 4.60 & 7.18 & 11.20 & 4.57 & \cellcolor{rank1}7.07 & \cellcolor{rank1}8.47 & 7.67 & 9.96 & 12.19 & 2.67 \\
 & + ReEvo TD & 6.63 & 8.57 & 10.19 & 5.10 & 7.83 & 10.93 & \cellcolor{rank1}3.72 & 8.20 & 9.28 & \cellcolor{rank1}7.13 & \cellcolor{rank1}\textbf{9.50} & 12.38 & 2.67 \\
 & + EoH-S$^\dagger$ & 5.57 & 7.46 & 12.83 & 4.74 & 7.58 & 11.68 & 4.40 & 8.39 & 8.64 & 7.32 & 10.52 & 13.04 & 3.00 \\
 & + ReEvo TD \& BU$^\dagger$ & \cellcolor{rank1}5.20 & \cellcolor{rank1}7.27 & \cellcolor{rank1}9.22 & \cellcolor{rank1}4.40 & \cellcolor{rank1}6.61 & \cellcolor{rank1}10.24 & 4.19 & 9.83 & 14.53 & 7.54 & 9.81 & \cellcolor{rank1}11.85 & \cellcolor{rank1}1.83 \\
\hline
\hline
\multicolumn{15}{|c|}{\textbf{Black-Box}} \\
\hline
\multirow{4}{*}{Node} & + ReEvo BU & --- & --- & --- & \cellcolor{rank1}\textbf{3.22} & 6.90 & 16.57 & 2.29 & 8.18 & 15.05 & 7.99 & \cellcolor{rank1}10.40 & 17.09 & 2.94 \\
 & + ReEvo TD & --- & --- & --- & 3.48 & 6.64 & 12.98 & 2.29 & 8.07 & 14.72 & 7.93 & 11.08 & 19.28 & 2.72 \\
 & + EoH-S$^\dagger$ & \cellcolor{rank1}1.86 & \cellcolor{rank1}4.15 & 11.77 & 3.23 & 6.59 & 14.11 & 2.28 & 8.14 & 15.02 & 8.42 & 11.49 & 20.17 & 2.58 \\
 & + ReEvo TD \& BU$^\dagger$ & 1.94 & 4.19 & \cellcolor{rank1}11.67 & 3.41 & \cellcolor{rank1}6.53 & \cellcolor{rank1}12.83 & \cellcolor{rank1}2.24 & \cellcolor{rank1}7.82 & \cellcolor{rank1}13.71 & \cellcolor{rank1}7.89 & 10.71 & \cellcolor{rank1}{16.18} & \cellcolor{rank1}\textbf{1.42} \\
\hline
\multirow{4}{*}{Edge} & + ReEvo BU & --- & --- & --- & 4.54 & 7.56 & 11.60 & 3.33 & 8.58 & 12.58 & 7.57 & 10.05 & 11.79 & 3.00 \\
 & + ReEvo TD & --- & --- & --- & 4.85 & 7.14 & \cellcolor{rank1}\textbf{10.09} & 1.97 & 7.35 & 9.14 & \cellcolor{rank1}\textbf{6.95} & \cellcolor{rank1}9.59 & \cellcolor{rank1}\textbf{10.72} & \cellcolor{rank1}1.89 \\
 & + EoH-S$^\dagger$ & \cellcolor{rank1}5.21 & 7.37 & 8.97 & 4.71 & \cellcolor{rank1}6.84 & 10.41 & 4.91 & 8.75 & 9.26 & 7.06 & 10.98 & 18.02 & 2.58 \\
 & + ReEvo TD \& BU$^\dagger$ & 6.13 & \cellcolor{rank1}6.80 & \cellcolor{rank1}\textbf{8.07} & \cellcolor{rank1}4.40 & 7.11 & 10.24 & \cellcolor{rank1}\textbf{0.27} & \cellcolor{rank1}\textbf{4.54} & \cellcolor{rank1}\textbf{6.49} & 7.78 & 11.99 & 18.26 & 2.00 \\
\hline
\end{tabular}%
}
\end{table}

\begin{figure}[h]
  \centering
  \begin{minipage}[t]{0.58\linewidth}
    \vspace{0pt}
    \centering
    \includegraphics[width=\linewidth]{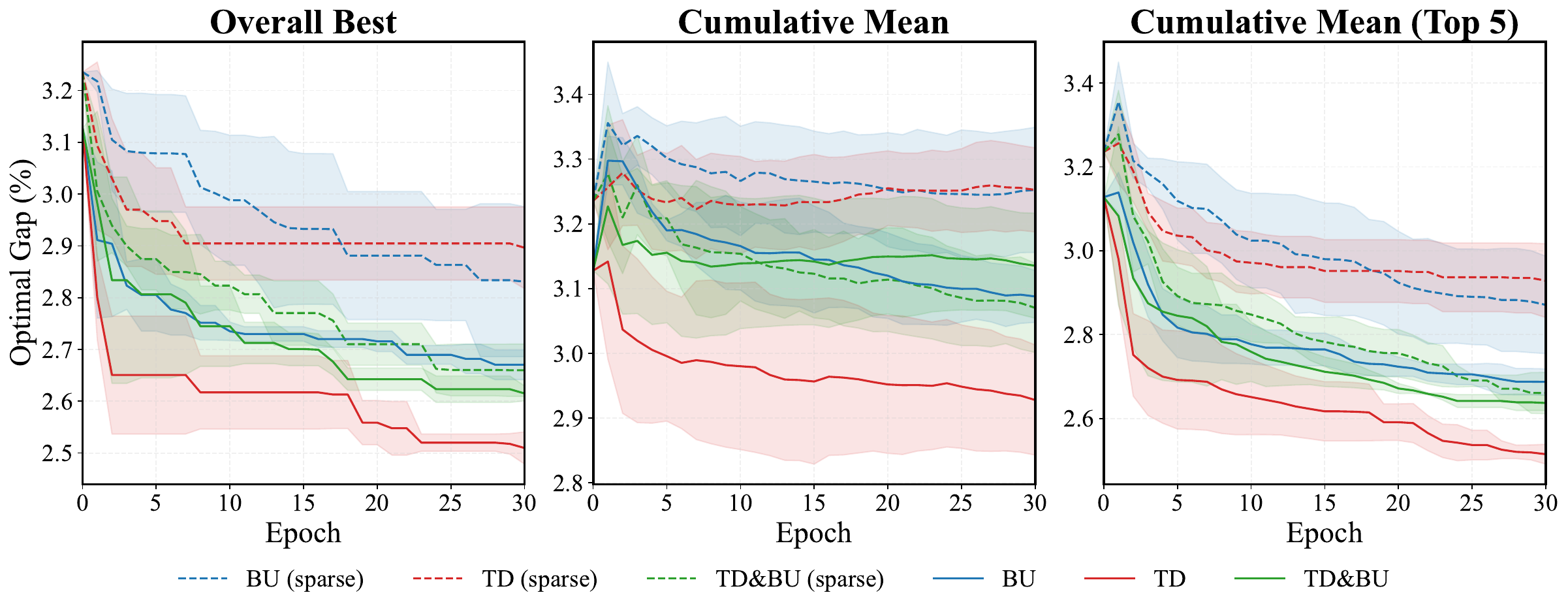}
    \captionof{figure}{Optimality-gap trajectories during training for TD, BU, and TD\&BU on 10 instances TSP200. \emph{Sparse} uses half as many evaluations, while unevaluated code/knowledge candidates still guide the search process.}
    \label{fig:fig_3}
  \end{minipage}\hfill%
  \begin{minipage}[t]{0.40\linewidth}
    \vspace{0pt}
    \centering
    \setlength{\tabcolsep}{5pt}
    \renewcommand{\arraystretch}{1.2}
    \captionof{table}{Test optimality gap (\%) on TSPLib for GLS heuristic functions discovered by different frameworks; lower is better. Instances are grouped by size.}
    \label{tab:tab_3}
    \resizebox{\linewidth}{!}{%
    \begin{tabular}{|l|C{2cm}|C{2cm}|C{2cm}|}
    \hline
    \textbf{Method} & \textbf{$[100,200)$} & \textbf{$[200,500)$} & \textbf{$[500,1000)$} \\
    \hline
    \textbf{Hyperparameters} & 6/25/4 & 15/300/200 & 30/1200/1000 \\
    \hline
    EoH & 2.4255 & 1.7359 & 1.2621 \\
    HSEvo & 2.4431 & 1.6530 & 1.4237 \\
    HiFo & 2.3211 & 1.6229 & 1.3456 \\
    ReEvo BU & 2.2321 & 1.9492 & 1.4294 \\
    ReEvo TD & 2.2443 & 1.4822 & 0.9559 \\
    ReEvo TD \& BU & 2.2908 & 1.6083 & 1.2418 \\
    MCTS-AHD BU & 2.4626 & 1.6780 & 1.3251 \\
    MCTS-AHD TD & 2.2820 & 1.7555 & 1.1134 \\
    \hline
    ReEvo BU (sparse) & 2.4221 & 1.8463 & 1.4093 \\
    ReEvo TD (sparse) & \textbf{2.1714} & \textbf{1.2455} & \textbf{0.8676} \\
    ReEvo TD \& BU (sparse) & 2.3143 & 1.5696 & 1.2926 \\
    \hline
    \end{tabular}%
    }
  \end{minipage}%
\end{figure}

\subsection{Sparse Evaluation}

In AHD, the main cost usually lies not in sampling heuristics from the LLM, but in evaluating generated code on a training set. This cost is especially high when each candidate must be tested on many CO instances, when the solver is slow, or when larger training sets are used to improve generalization. We therefore study a sparse-evaluation setting in which only a subset of generated candidates are empirically evaluated, while the rest are retained as unevaluated artifacts for subsequent search. In our experiments, the unevaluated ratio is $0.5$, halving the number of empirical evaluations relative to full evaluation.

Figure~\ref{fig:fig_3} shows that sparse evaluation weakens training-time progress. Across top-down, bottom-up, and dual variants, training trajectories under sparse evaluation are consistently worse than under full evaluation. This is expected: fewer evaluations provide a weaker optimization signal during search.

However, Table~\ref{tab:tab_3} shows that ReEvo TD under sparse evaluation achieves the best test performance among contemporary AHD methods despite using only half of the evaluation budget. This suggests that sparse evaluation does more than save compute: it may also change the statistical character of search by reducing overfitting to the finite training set while still exploiting structural information from unevaluated artifacts. We analyze this effect in more detail in Appendix~\ref{app:sparse_theory}.

This observation is consistent with the Bayesian view in Equation~\ref{eq:bayes}. Under sparse evaluation, an unevaluated candidate lacks the empirical feedback $\widehat{L}^t$, so the refinement over knowledge drops the evaluative likelihood and becomes
\begin{equation}
q_t(K\mid A^t,C)
\propto
q_t(A^t\mid K,C)\,q_t(K\mid C).
\label{eq:sparse_bayes}
\end{equation}
Thus, sparse evaluation does not discard unevaluated candidates; it changes their role from performance-based evidence to structure-only evidence. This particularly favors top-down search, whose primary object is the latent knowledge state $K$: unevaluated artifacts can still support or refine design principles through $q_t(A^t\mid K,C)$. In contrast, bottom-up refinement relies more directly on empirical feedback to explain why a heuristic is good or bad. This explains why ReEvo TD can remain effective, and even generalize better, with only sparse candidate evaluation.

\begin{table}[t]
\centering
\setlength{\tabcolsep}{5pt}
\renewcommand{\arraystretch}{1.2}
\caption{Results on tasks beyond CO, including SCO, SR, and PE. We report the performance gap (\%) for SCO, NMSE for SR, and the optimal gap (\%) for PE; lower is better for all metrics.}
\label{tab:sco_sr_bo}
\resizebox{\linewidth}{!}{%
\begin{tabular}{|l|C{1.6cm}|C{1.6cm}|C{1.6cm}|C{1.6cm}|C{1.6cm}|C{1.6cm}|C{1.6cm}|C{1.6cm}|C{1.2cm}|}
\hline
\multicolumn{1}{|c|}{\multirow{2}{*}{\textbf{Method}}}
  & \multicolumn{2}{c|}{\small\textbf{SCO1: CTS}}
  & \multicolumn{2}{c|}{\small\textbf{SCO2: FTR}}
  & \multicolumn{2}{c|}{\small\textbf{SCO3: OAS}}
  & \multicolumn{2}{c|}{\small\textbf{SCO4: WPF}}
  & \textbf{Avg.} \\
\cline{2-9}
  & Train & Test & Train & Test & Train & Test & Train & Test & \textbf{Rank} \\
\hline
Baseline    & 15.9636 & 16.6758 & 31.7629 & 33.0416 & 66.7779 & 55.9252 & 5.9036 & 5.5159 & 7.00 \\
Beam Search &  0.2082 &  0.0180 &  0.1089 &  \cellcolor{rank1}0.0000 &  1.9681 &  1.7258 & 5.4104 & 4.4148 & 4.38 \\
ReEvo BU    &  0.0544 &  0.0450 &  0.5398 &  0.7253 &  0.4073 &  0.5485 & 1.9906 & 1.5379 & 4.25 \\
ReEvo TD    &  0.0118 &  0.3019 &  0.1570 &  0.4041 &  0.1421 &  0.2594 & 1.6605 & 1.1447 & 3.12 \\
MCTS-AHD BU &  0.1035 &  0.3263 &  0.9195 &  0.8330 &  0.4243 &  0.6077 & 1.5004 & 1.1579 & 5.00 \\
MCTS-AHD TD &  \cellcolor{rank1}0.0000 &  0.0543 &  0.2621 &  0.2941 &  0.2990 &  0.3033 & \cellcolor{rank1}0.0000 & 0.5096 & 2.62 \\
ReEvo TD \& BU  &  0.0927 &  \cellcolor{rank1}0.0000 &  \cellcolor{rank1}0.0000 &  0.2279 &  \cellcolor{rank1}0.0000 &  \cellcolor{rank1}0.0000 & 0.2417 & \cellcolor{rank1}0.0000 & \cellcolor{rank1}1.62 \\
\hline
\end{tabular}%
}
\resizebox{\linewidth}{!}{%
\begin{tabular}{|l|C{1.6cm}|C{1.6cm}|C{1.6cm}|C{1.6cm}|C{1.6cm}|C{1.6cm}|C{1.6cm}|C{1.6cm}|C{1.2cm}|}
\hline
\multicolumn{1}{|c|}{\multirow{2}{*}{\textbf{Method}}}
  & \multicolumn{2}{c|}{\small\textbf{SR1: oscillation1}}
  & \multicolumn{2}{c|}{\small\textbf{SR2: oscillation2}}
  & \multicolumn{2}{c|}{\small\textbf{SR3: bactgrow}}
  & \multicolumn{2}{c|}{\small\textbf{SR4: stressstrain}}
  & \textbf{Avg.} \\
\cline{2-9}
  & ID & OOD & ID & OOD & ID & OOD & ID & OOD & \textbf{Rank} \\
\hline
LLM-SR    & 5.8289e-6 & 6.4234e-4 & 6.1132e-2 & 1.4298e-5 & 0.3321 & 0.7854 & 2.5635e-2 & 0.1324 & 4.38 \\
ReEvo BU    & 6.8259e-6 & 8.1374e-4 & 6.0956e-2  & 1.6365e-2  & 0.5140 & 1.0234 & 3.5795e-2 & 0.1031 & 5.00 \\
ReEvo TD    & 2.4107e-6 & 5.0879e-4 & 4.4905e-7  & 7.7363e-6  & 0.2960 & 0.6860 & 2.4853e-2 & \cellcolor{rankone}0.0024 & 2.75 \\
MCTS-AHD BU & 2.3592e-5 & 6.8650e-5 & 1.1300e-9  & 5.7365e-9  & 0.6311 & 0.8205 & 5.4489e-2 & 0.6145 & 4.50 \\
MCTS-AHD TD & 1.3280e-6 & \cellcolor{rankone}2.8437e-5 & 5.1693e-8  & 1.5996e-10 & 0.5211 & 0.7710 & 2.5151e-2 & 0.0853 & 2.63 \\
ReEvo TD \& BU  & \cellcolor{rankone}2.1078e-7 & 1.4555e-4 & \cellcolor{rankone}4.1422e-12 & \cellcolor{rankone}7.6179e-11 & \cellcolor{rankone}0.1024 & \cellcolor{rankone}0.3227 & \cellcolor{rankone}2.3892e-2 & 0.3227 & \cellcolor{rankone}1.75 \\
\hline
\end{tabular}%
}

\resizebox{\linewidth}{!}{%
\begin{tabular}{|l|C{1.6cm}|C{1.6cm}|C{1.6cm}|C{1.6cm}|C{1.6cm}|C{1.6cm}|C{1.6cm}|C{1.6cm}|C{1.2cm}|}
\hline
\multicolumn{1}{|c|}{\multirow{2}{*}{\textbf{Method}}}
  & \multicolumn{2}{c|}{\small\textbf{PE1: Alpha Amylase}}
  & \multicolumn{2}{c|}{\small\textbf{PE2: Hydrophobic Core}}
  & \multicolumn{2}{c|}{\small\textbf{PE3: Imine Reductase}}
  & \multicolumn{2}{c|}{\small\textbf{PE4: Rhodopsin}}
  & \textbf{Avg.} \\
\cline{2-9}
  & Train & Test & Train & Test & Train & Test & Train & Test & \textbf{Rank} \\
\hline
EI & 3.6571 & 6.9354 & 1.1914 & 2.6524 & 14.5710 & 30.8325 & 5.4527 & 12.1314 & 5.62 \\
UCB & 3.6033 & 3.9545 & 1.0315 & 7.3742 & 13.1581 & 43.0866 & 3.7037 & 14.4903 & 5.75 \\
ReEvo BU    & 1.3097 & 5.5851 & \cellcolor{rank1}0.0000 & 3.3189 & 8.1131 & 31.0003 & \cellcolor{rank1}1.0288 & 17.1862 & 4.00 \\
ReEvo TD    & \cellcolor{rank1}0.6606 & \cellcolor{rank1}2.2575 & \cellcolor{rank1}0.0000 & 0.1021 & 7.8712 & 28.1180 & 2.4691 & 16.7460 & \cellcolor{rank1}2.50 \\
MCTS-AHD BU & 2.2834 & 7.1935 & \cellcolor{rank1}0.0000 & 1.2213 & 9.9525 & 33.7088 & 3.2922 & 11.2890 & 4.50 \\
MCTS-AHD TD & 3.1039 & 3.4084 & \cellcolor{rank1}0.0000 & \cellcolor{rank1}0.0000 & 9.6215 & 32.4178 & 2.5620 & \cellcolor{rank1}10.4465 & 3.06 \\
ReEvo TD \& BU  & 1.2365 & 5.5269 & \cellcolor{rank1}0.0000 & \cellcolor{rank1}0.0000 & \cellcolor{rank1}5.6994 & \cellcolor{rank1}22.7130 & 2.7778 & 13.7321 & 2.56 \\
\hline
\end{tabular}%
}
\end{table}

\subsection{Evaluation on Tasks Beyond CO}
Beyond CO benchmarks, we test whether top-down search remains useful in distinct settings, with task definitions and solver backbones in Appendices~\ref{app:sco},~\ref{app:sr}, and~\ref{app:pe}. We consider four synthetic sequential combinatorial optimization (SCO) problems, which limit familiar LLM priors; four symbolic regression (SR) datasets, where scientific understanding can guide equation discovery; and four protein engineering (PE) datasets, where the goal is to design acquisition functions for Bayesian optimization over protein candidates in a structured black-box domain.

Table~\ref{tab:sco_sr_bo} shows that the advantage of top-down search is not restricted to conventional CO. On SCO, top-down variants generally outperform bottom-up ones, suggesting that explicit knowledge states help even when the task is synthetic and less familiar to the LLM. On SR and PE, where domain-level hypotheses are naturally meaningful, searching over interpretable knowledge can guide the generation of better executable formulas or acquisition functions. Across these settings, the dual code--knowledge variant often achieves the best average rank, indicating that bottom-up implementation feedback and top-down hypothesis evolution provide complementary signals.


\bibliographystyle{unsrtnat}
\bibliography{refs}


\newpage

\appendix

\begin{center}
\vspace*{0.25em}
\rule{\linewidth}{1pt}\\[0.5em]
{\LARGE\bfseries Appendix}\\[0.35em]
{\large\textit{Back to the Beginning of Heuristic Design:}\\[0.2em]Bridging Code and Knowledge with LLMs}
\rule{\linewidth}{1pt}
\end{center}

\appendixcontents
\vspace{0.75em}

\rule{\linewidth}{1pt}

\appsection{Q\&A}
\label{app:qa}

\begin{bbox}
What is the key novelty of this work?
\end{bbox}
The key novelty of this work is a \emph{knowledge-first view} of LLM-driven AHD. Instead of treating knowledge as auxiliary context distilled after code evaluation, we make it the primary search object: the framework proposes, evolves, transfers, and refines reusable design principles, while code serves as the executable realization used to test them. This shift opens a new conceptual direction for AHD and is supported by an \emph{information-theoretic perspective}, which explains the trade-off between the compression benefit of searching over knowledge and the distortion cost of abstracting away implementation details. Empirically, the paper shows that this view is useful not only in standard AHD comparisons, but also in less explored settings such as cross-domain transfer, dual code--knowledge evolution, tasks beyond CO, and especially \emph{sparse evaluation}, where unevaluated artifacts can still contribute structural evidence for knowledge refinement. Thus, the novelty lies in making knowledge a first-class search object and opening up new AHD settings centered on transfer, sparse evaluation, and generalization beyond standard CO benchmarks.

\begin{bbox}
Is there additional computational overhead compared to bottom-up methods?
\end{bbox}
No. The top-down variants do not introduce additional overhead compared to their bottom-up counterparts under our protocol. As detailed in Appendix~\ref{app:setup}, we keep the \emph{evaluation budget}, the number of \emph{LLM calls}, and the main implementation settings matched across paradigms, so the comparison isolates the effect of changing the search object rather than increasing computation.

\begin{bbox}
What is the main limitation of treating knowledge as a first-class search object?
\end{bbox}
The main limitation is the \emph{fidelity of abstraction}. Treating knowledge as a first-class search object is useful only when the knowledge state preserves the performance-relevant structure of the heuristic. If the abstraction is too coarse, it may discard low-level implementation details crucial for a specific solver, problem class, or evaluation regime; then different code realizations of the same knowledge can behave quite differently. In this case, the benefit of searching in a more compact knowledge space may be outweighed by the \emph{distortion} introduced by abstraction, so bottom-up code-level search can be preferable. We provide empirical \emph{failure cases} of top-down search in Appendix~\ref{app:failure}.

\begin{bbox}
What does top-down enable that bottom-up fundamentally cannot?
\end{bbox}
Top-down enables \emph{direct search over reusable principles} before committing to a particular implementation. Because knowledge states are generated and maintained as first-class objects, they can be selected, recombined, transferred, and refined even when their current code realizations are imperfect or only partially evaluated. Bottom-up search, in its native form, can only abstract knowledge after code has been generated and evaluated, so its knowledge is tied to the explored code trajectory and to the availability of empirical feedback. This makes top-down especially useful for transfer and sparse evaluation: a knowledge state can preserve problem-level or solver-level invariants across domains, and unevaluated artifacts can still contribute structural evidence for refining future hypotheses. If a bottom-up method is modified to explicitly maintain and evolve such knowledge states, it effectively moves toward the top-down or dual code--knowledge formulation.

\begin{bbox}
How does the method compare in terms of interpretability?
\end{bbox}
The top-down AHD is more interpretable because the search process exposes the intermediate \emph{knowledge states} that guide heuristic generation. In bottom-up AHD, interpretability usually comes after the fact: one inspects successful code or reads reflections distilled from evaluated programs. In top-down AHD, the rationale is explicit before implementation, so each candidate can be examined as a design hypothesis and then linked to the code and empirical performance it produces. This makes it easier to understand why a heuristic was proposed, how knowledge evolves across iterations, and which principles transfer across tasks. However, interpretability is not automatic: the generated knowledge may still be incomplete, overly abstract, or imperfectly realized in code, so the final behavior should be interpreted through both the knowledge state and its executable instantiation.

\begin{bbox}
Does the approach rely on the LLM implicitly understanding the task domain?
\end{bbox}
Yes, to some extent. Top-down search benefits from the LLM's \emph{domain understanding}, because the first object it proposes is a knowledge state rather than executable code; if this prior knowledge is accurate, it can guide the search toward reusable design principles, but if the task description is misleading or the LLM's prior is wrong, the resulting knowledge can bias the search in an unfavorable direction. This is why top-down is not expected to dominate in all settings: its advantage depends partly on whether the LLM can form meaningful abstractions for the target domain. The model-dependent behavior in Figure~\ref{fig:fig_1}, where a stronger model such as \texttt{GPT-5-nano} makes the top-down advantage clearer in the FunSearch setting, is consistent with this interpretation. We further study cases where the problem description induces incorrect or unhelpful knowledge in Appendix~\ref{app:failure}.

\begin{bbox}
Why is sparse evaluation important for AHD?
\end{bbox}
Sparse evaluation is important because \emph{empirical evaluation is the main scalable cost} in AHD. Once the LLM provider, prompting protocol, and number of generated candidates are fixed, the latency and cost of LLM calls are largely fixed by the experimental budget; similarly, the overhead of parsing, compiling, or executing the generated program interface is usually secondary. The cost that grows substantially is evaluating each candidate on training instances, especially when the downstream solver is slow or when a larger training set is used to improve generalization. Parallel and distributed execution can reduce wall-clock time, and we use such acceleration in our experiments, but it does not remove the underlying evaluation budget: it still requires more workers, more total compute, and more solver executions. Sparse evaluation therefore studies a practically important regime where AHD must learn from fewer evaluated candidates while still exploiting unevaluated artifacts as structural evidence. This setting is especially relevant for top-down search, because knowledge states can continue to be refined even when only part of the generated code is empirically evaluated.

\begin{bbox}
Are the improvements over bottom-up AHD only marginal?
\end{bbox}
No. The comparison is designed to be \emph{controlled}: top-down search changes the primary search object within the same AHD backbone, such as ReEvo BU vs. ReEvo TD and MCTS-AHD BU vs. MCTS-AHD TD, while keeping the evaluation budget, LLM calls, and implementation protocol matched. Thus, the gains should be read not merely as raw numerical improvements, but as evidence that changing from code-centric to knowledge-centric search can improve the same underlying framework. In several settings, including Tables~\ref{tab:tab_1},~\ref{tab:tab_2}, and~\ref{tab:tab_3}, the improvements are visibly non-marginal; and in mature CO benchmarks, even moderate reductions in optimality gap are meaningful because progress becomes increasingly difficult near strong baselines.

\begin{center}
Additional questions may be incorporated into this Q\&A in future revisions.
\end{center}

\newpage

\appsection{Related Works}
\label{app:related_works}

\appsubsection{LLMs for CO}

\paragraph{Combinatorial Optimization (CO).}
CO studies optimization problems whose feasible solutions are discrete structures such as subsets, permutations, assignments, routes, schedules, or graphs. Classical references such as \citet{papadimitriou1998combinatorial} and \citet{wolsey1999integer} formalize CO through algorithmic, polyhedral, and complexity-theoretic perspectives, while \citet{Karp1972} establishes the central role of NP-completeness for many canonical CO problems. Because exact optimization can be computationally prohibitive at scale, practical CO has long relied on approximation algorithms, local search, metaheuristics, and problem-specific heuristics to obtain high-quality solutions under limited computational budgets.

Machine learning has provided a complementary route for constructing CO solvers from data. Pointer Networks~\citep{NIPS2015_29921001} introduced an architecture for predicting output sequences whose elements point to positions in the input, making it naturally applicable to routing and ordering problems. Neural Combinatorial Optimization~\citep{bello*2017neural} trains neural policies with reinforcement learning to construct solutions for problems such as TSP and knapsack. The graph-embedding framework of \citet{khalil2017learning} learns greedy policies for graph optimization problems such as minimum vertex cover, maximum cut, and TSP, while the attention model of \citet{kool2018attention} substantially improves neural construction policies for routing problems. These methods replace hand-engineered decision rules with learned policies, but they typically require task-specific training, architectures, or distributions.

\paragraph{Automatic Heuristic Design (AHD).}
AHD aims to reduce the manual effort required to craft effective heuristics for hard optimization problems. Hyper-heuristics, surveyed by \citet{burke2013hyper}, search over heuristics or heuristic components rather than directly over solutions, thereby raising the level of search from instance-level decisions to algorithm-level design. Related automated algorithm configuration methods such as ParamILS~\citep{hutter2009paramils} and SMAC~\citep{10.1007/978-3-642-25566-3_40} optimize the parameters of existing solvers over a target distribution of problem instances. These lines of work share a common motivation: strong heuristic performance often depends on expert-designed algorithmic choices, and automating those choices can improve robustness, transferability, and development efficiency.

Recent LLM-based approaches extend automatic heuristic design by using language models to generate, modify, and evaluate executable algorithmic components. FunSearch~\citep{romera2024mathematical} pairs a pretrained LLM with an automated evaluator to evolve programs and obtain new constructions for mathematical and algorithmic problems. Evolution of Heuristics~\citep{liu2024evolution} uses LLMs and evolutionary computation to search over both natural-language heuristic ideas and executable code, applying the framework to CO benchmarks. ReEvo~\citep{ye2024reevo} formulates LLM-based heuristic generation as language hyper-heuristics and introduces reflective evolution to guide the search over heuristic space. These works show that LLMs can act not only as code generators, but also as operators for algorithmic variation, recombination, and refinement.

\appsubsection{LLMs for BO}

\paragraph{Bayesian Optimization (BO).}
BO is a sample-efficient framework for optimizing expensive black-box objectives, where gradients are unavailable and each evaluation may require costly simulation, training, or real-world experimentation. A standard BO loop maintains a probabilistic surrogate model, often a Gaussian process, over the unknown objective and selects new candidates by maximizing an acquisition function that trades off exploration and exploitation. Classical BO builds on Bayesian global optimization by \citet{Mockus1978} and efficient global optimization by \citet{JonesSW98}, while GP-UCB by \citet{srinivas2010gaussian} provides regret guarantees and the work of \citet{snoek2012practical} established BO as a practical tool for machine learning hyperparameter tuning. A broader review by \citet{7352306} summarizes BO as a general paradigm for sequential model-based optimization under limited evaluation budgets.

\paragraph{Language-Augmented Bayesian Optimization.}
Recent work has begun to combine BO with large language models, either by using LLMs to improve BO components or by using BO to optimize LLM-facing artifacts. LLAMBO~\citep{liu2024large} frames BO problems in natural language and uses LLMs for warm-starting, surrogate modeling, and candidate generation, showing benefits in hyperparameter optimization when observations are sparse. InstructZero~\citep{pmlr-v235-chen24e} uses BO to optimize a low-dimensional soft prompt that is decoded by an open-source LLM into natural-language instructions for a black-box LLM. In this formulation, BO does not directly search over discrete instructions, but instead searches over a continuous latent space whose points induce candidate prompts.

More recent methods further adapt BO to LLM-driven search and prompt selection. BOPRO~\citep{ICLR2025_a79237d6} uses Bayesian optimization to guide LLM generation by selecting previous generations for exploration or exploitation, applying the approach to word search, molecule optimization, and program-like hypothesis search. HbBoPs~\citep{schneider2025hyperbandbased} combines a structural-aware Gaussian-process surrogate with Hyperband-style multi-fidelity evaluation for black-box prompt selection, modeling instructions and few-shot exemplars as separate prompt components. Together, these works show that BO can interact with LLMs at several levels: as a controller for prompt search, as a surrogate-assisted mechanism for LLM-generated candidates, or as a model-based optimizer whose components are augmented by language models.

\appsubsection{LLMs for AD}

\paragraph{Algorithm Design (AD).} AD is the broader practice of creating, refining, or discovering procedures that solve a target task efficiently, including numerical solvers, reinforcement-learning reward functions, multi-stage LLM pipelines, and mathematical constructions. Whereas AHD focuses on heuristic components inside an existing solver, AD treats the whole algorithm, including control flow, subroutines, scoring rules, and even the evaluation metric, as the search target. Traditionally, AD has relied on human expertise, theoretical analysis, and trial-and-error experimentation, with only limited automation through methods such as algorithm configuration and operator selection. With large language models, recent work instead casts AD as a search problem over textual artifacts such as programs, prompts, or natural-language descriptions, where LLMs act as both proposal mechanisms and reasoning engines while automated evaluators provide feedback. This perspective, discussed by \citet{wu2024evolutionary}, \citet{surina2025algorithm}, and \citet{liu2026systematic}, unifies program synthesis, evolutionary code search, and prompt optimization under a common view of the LLM as a general-purpose algorithm designer.

\paragraph{LLM-Driven Algorithm and Prompt Discovery.} A representative line of work uses LLMs to propose, mutate, and refine programs against an automatic evaluator. AlphaEvolve~\citep{novikov2025alphaevolve} applies this idea to full algorithms, with results on matrix multiplication and data-center scheduling. Eureka~\citep{maeureka} evolves dense reward functions for robotics, LLaMEA~\citep{van2024llamea} generates black-box metaheuristics from scratch, and DeepEvolve~\citep{liu2025scientific} adds deep research and external knowledge retrieval. Similar evaluator-guided search also appears in LLM pipelines and prompts: DSPy~\citep{khattab2023dspy} compiles compound LLM systems, MIPROv2~\citep{opsahl2024optimizing} combines bootstrapped traces with Bayesian-style prompt search, and GEPA~\citep{agrawal2025gepa} uses reflective prompt evolution, outperforming reinforcement-learning-based adaptation while using up to 35-fold fewer rollouts. Together, these works show that LLM-driven, evaluator-guided search supports algorithm design at several levels, from executable programs to scientific algorithms and LLM systems themselves.

\appsubsection{LLMs for SD}


\paragraph{Symbolic Regression.} Similar to algorithm design, LLMs have been widely explored as a promising direction for symbolic regression (SR) \cite{shojaeellm, grayeli2024symbolic, guo2026coevo, xia2025srscientist, wang2025drsr}. LLM-SR \cite{shojaeellm} leverages LLMs to generate symbolic programs by integrating scientific prior knowledge with a multi-island evolutionary search process driven by data-based feedback, while refining the program parameters through numerical optimization. CoEvo \cite{guo2026coevo} enables open-ended discovery through the introduction of an external knowledge library. Meanwhile, LaSR \cite{grayeli2024symbolic} proposes a concept-learning approach for equation discovery, where abstract textual concepts extracted by LLMs are combined with evolutionary search (via PySR \cite{cranmer2023interpretable}) and LLM-guided exploration to evolve new hypotheses. DrSR \cite{wang2025drsr} introduces a closed-loop reasoning framework that improves SR performance through data-driven insight and reflective reasoning. SR-Scientist \cite{xia2025srscientist} proposes an agentic framework that incorporates tool-calling capabilities, such as data analysis modules, enabling LLMs to write and execute code for dataset analysis.

\paragraph{Scientific Discovery.} LLMs are opening up new directions in scientific discovery (SD). They have been applied to challenging mathematical problems \cite{trinh2024solving}, mathematical proof assistance \cite{collins2024evaluating}, scientific literature retrieval \cite{ajith2024litsearch} and code generation for analytical and computational tasks \cite{huang2024mlagentbench, tian2024scicode}. Recent work further shows that LLMs can propose novel research ideas, some of which are judged comparable to or more novel than ideas from human experts \cite{wang2024scimon, sican, baek2025researchagent}, although follow-up studies find that stronger ideation does not always translate into stronger empirical outcomes during execution \cite{si2025ideation}. Beyond ideation, LLMs have been used to accelerate chemistry research \cite{jablonka2024leveraging} and, when paired with evolutionary search, to discover new mathematical programs and algorithms \cite{romera2024mathematical}.


\appsection{Theoretical Foundations}
\label{app:theory}

\appsubsection{Bayesian Interpretation of Equation~\ref{eq:bayes}}
\label{app:equation_6}

Equation~\ref{eq:bayes} formalizes the relationship between the top-down and bottom-up views of AHD as two complementary directions of the same latent-variable process. At round $t$, let $C:=H^{t-1}$ denote the current search context. The top-down view treats the knowledge state $K^t$ as the primary latent variable, from which a heuristic $\theta^t$ is synthesized and then evaluated. In particular, the generative chain is
\[
K^t \longrightarrow \theta^t \longrightarrow (A^t,\widehat{L}^t),
\]
where $A^t=\alpha(\theta^t)$ denotes the heuristic artifact observed by the reflector, such as source code, a trace, or a summary, and $\widehat{L}^t=\widehat{L}_{\mathcal D}(\theta^t)$ is the empirical feedback obtained by evaluating $\theta^t$ on $\mathcal D$.

Conditional on $C$, the top-down mechanism specifies a prior $q_t(K\mid C)$ over knowledge states and a synthesis law $q_t(\theta\mid K,C)$ over heuristics. To model what the reflector actually observes, let $q_t(A,\widehat{L}\mid \theta,K,C)$ be the observation kernel induced by the heuristic artifact and its empirical evaluation. Marginalizing out $\theta$ yields the knowledge-conditioned observation model
\[
q_t(A,\widehat{L}\mid K,C) := \int_{\Theta} q_t(A,\widehat{L}\mid \theta,K,C)\,q_t(\theta\mid K,C)\,d\theta.
\]
The resulting joint law of $(K^t,A^t,\widehat{L}^t)$ conditional on $C$ is therefore
\[
q_t(K,A,\widehat{L}\mid C) = q_t(A,\widehat{L}\mid K,C)\,q_t(K\mid C).
\]

The bottom-up view proceeds in the reverse direction. Once the reflector observes $(A^t,\widehat{L}^t)$, it updates its belief over latent knowledge states according to Bayes' rule:
\[
q_t(K\mid A^t,\widehat{L}^t,C)
=
\frac{q_t(A^t,\widehat{L}^t\mid K,C)\,q_t(K\mid C)}{q_t(A^t,\widehat{L}^t\mid C)},
\]
where the normalizing constant is
\[
q_t(A^t,\widehat{L}^t\mid C)
=
\int_{\mathcal K} q_t(A^t,\widehat{L}^t\mid K,C)\,q_t(K\mid C)\,dK.
\]
Using the factorization $q_t(A^t,\widehat{L}^t\mid K,C)=q_t(\widehat{L}^t\mid A^t,K,C)\,q_t(A^t\mid K,C)$ gives Equation~\ref{eq:bayes}:
\[
q_t(K\mid A^t,\widehat{L}^t,C)
\propto
\underbrace{q_t(\widehat{L}^t\mid A^t,K,C)}_{\text{evaluative likelihood}}
\underbrace{q_t(A^t\mid K,C)}_{\text{structural likelihood}}
\underbrace{q_t(K\mid C)}_{\text{prior over knowledge}}.
\]

Each factor in Equation~\ref{eq:bayes} has a distinct meaning. The prior $q_t(K\mid C)$ captures which design principles are plausible before inspecting the current candidate. The structural likelihood $q_t(A^t\mid K,C)$ measures how well the observed artifact matches what one would expect from knowledge state $K$. The evaluative likelihood $q_t(\widehat{L}^t\mid A^t,K,C)$ measures how well that same knowledge state predicts the observed quality of the heuristic, after accounting for the artifact itself. Thus, $A^t$ answers the question of what the heuristic appears to be doing, while $\widehat{L}^t$ answers the question of how well that behavior performs on $\mathcal D$.

This distinction is important. If the reflector had direct access to the full extensional object $\theta^t$, then $\widehat{L}^t=\widehat{L}_{\mathcal D}(\theta^t)$ would be deterministic given $\theta^t$ and the evaluative term would become redundant within the current round. However, the reflector does not reason over $\theta^t$ in that strong mathematical sense; it reasons over the coarsened artifact $A^t$. Under this more realistic observation model, $\widehat{L}^t$ contributes genuine posterior information by disambiguating knowledge states that may explain the artifact similarly well but imply different expected performance.

This can be seen explicitly from the posterior odds ratio. For any two knowledge states $K_1,K_2\in\mathcal K$,
\[
\frac{q_t(K_1\mid A^t,\widehat{L}^t,C)}{q_t(K_2\mid A^t,\widehat{L}^t,C)}
=
\frac{q_t(\widehat{L}^t\mid A^t,K_1,C)}{q_t(\widehat{L}^t\mid A^t,K_2,C)}
\cdot
\frac{q_t(A^t\mid K_1,C)}{q_t(A^t\mid K_2,C)}
\cdot
\frac{q_t(K_1\mid C)}{q_t(K_2\mid C)}.
\]
Hence, when two knowledge states are similarly compatible with the observed artifact, the empirical feedback $\widehat{L}^t$ becomes the decisive Bayes factor. In this sense, bottom-up refinement should be understood not merely as heuristic ranking across rounds, but as within-round posterior updating over latent design principles.

Finally, Equation~\ref{eq:bayes} should be viewed as an idealized Bayesian target. In practice, a bottom-up LLM reflector does not compute the exact posterior; rather, it approximates this update through prompting and sampling. Nonetheless, the equation clarifies the conceptual role of the reflector: it infers which latent design principles are most consistent with both the observed structure of the heuristic and its measured empirical quality.

\appsubsection{Proof and Discussion of Proposition~\ref{eq:prop_1}}
\label{app:proof_prop_1}

Before proving Proposition~\ref{eq:prop_1}, we briefly justify the bounded-loss assumption. Suppose there exist constants $a_d<b_d$ such that
\[
a_d \le \ell_d(\theta;\mathbf{x}) \le b_d
\qquad
\text{for all }\theta\in\Theta \text{ and all }\mathbf{x}\in\operatorname{supp}(\mathcal P).
\]
Then one may normalize the per-instance loss by
\[
\bar{\ell}_d(\theta;\mathbf{x})
:=
\frac{\ell_d(\theta;\mathbf{x})-a_d}{b_d-a_d},
\]
so that $0\le \bar{\ell}_d(\theta;\mathbf{x})\le 1$. All risks, distortions, and excess-risk bounds are then rescaled by the constant factor $b_d-a_d$. Since our goal is to compare the statistical trade-off between top-down and bottom-up search rather than preserve the original units of the objective, assuming $0\le \ell_d(\theta;\mathbf{x})\le 1$ is without loss of generality up to a fixed change of scale. For fixed-size CO benchmarks with bounded instance support, such a uniform bound is natural.

We now prove Proposition~\ref{eq:prop_1}. The proof relies on a standard information-theoretic generalization bound specialized to bounded losses.

\paragraph{Lemma.}
Let $\mathcal D=(\mathbf{x}_1,\dots,\mathbf{x}_n)\sim\mathcal P^n$, and let $A=A(\mathcal D)$ be any data-dependent random variable taking values in $\Theta$. Assume that $0\le \ell_d(\theta;\mathbf{x})\le 1$ almost surely for every $\theta\in\Theta$ and $\mathbf{x}\sim\mathcal P$. Then
\[
\left|
\mathbb E\bigl[L_{\mathcal P}(A)-\widehat{L}_{\mathcal D}(A)\bigr]
\right|
\le
\sqrt{\frac{\mathbb{I}(A;\mathcal D)}{2n}}. 
\]

\begin{proof}
For a fixed $\theta\in\Theta$, define
\[
G(\mathcal D,\theta)
:=
L_{\mathcal P}(\theta)-\widehat{L}_{\mathcal D}(\theta)
=
\frac{1}{n}\sum_{i=1}^n Z_i(\theta),
\]
where
\[
Z_i(\theta)
:=
L_{\mathcal P}(\theta)-\ell_d(\theta;\mathbf{x}_i).
\]
Since $0\le \ell_d(\theta;\mathbf{x}_i)\le 1$, we have $-1\le Z_i(\theta)\le 1$, and by definition of $L_{\mathcal P}(\theta)$,
\[
\mathbb E[Z_i(\theta)]
=
L_{\mathcal P}(\theta)-\mathbb E[\ell_d(\theta;\mathbf{x}_i)]
=
0.
\]
Hoeffding's lemma therefore implies that, for every $\lambda\in\mathbb R$,
\[
\mathbb E\bigl[e^{\lambda Z_i(\theta)}\bigr]\le e^{\lambda^2/8}.
\]
Using independence of $\mathbf{x}_1,\dots,\mathbf{x}_n$, we obtain
\[
\mathbb E\bigl[e^{\lambda G(\mathcal D,\theta)}\bigr]
=
\prod_{i=1}^n \mathbb E\bigl[e^{(\lambda/n)Z_i(\theta)}\bigr]
\le
\prod_{i=1}^n e^{\lambda^2/(8n^2)}
=
e^{\lambda^2/(8n)}.
\]
Thus, for each fixed $\theta$, the centered generalization gap $G(\mathcal D,\theta)$ is $1/(4n)$-sub-Gaussian with respect to the randomness of $\mathcal D$.

Now let $P$ denote the joint law of $(A,\mathcal D)$ and let $Q:=P_A\otimes P_{\mathcal D}$ be the product of the marginals. By the Donsker--Varadhan variational formula, for every $\lambda>0$,
\[
\mathbb{I}(A;\mathcal D)
=
D(P\|Q)
\ge
\lambda\,\mathbb E_P\bigl[G(\mathcal D,A)\bigr]
-
\log \mathbb E_Q\bigl[e^{\lambda G(\mathcal D,A)}\bigr].
\]
Under $Q$, conditioning on $A=\theta$ and using the sub-Gaussian bound above gives
\[
\mathbb E_Q\bigl[e^{\lambda G(\mathcal D,A)}\bigr]
=
\mathbb E_{A}\Bigl[\mathbb E_{\mathcal D}\bigl[e^{\lambda G(\mathcal D,A)}\mid A\bigr]\Bigr]
\le
e^{\lambda^2/(8n)}.
\]
Hence
\[
\mathbb{I}(A;\mathcal D)
\ge
\lambda\,\mathbb E\bigl[G(\mathcal D,A)\bigr]
-
\frac{\lambda^2}{8n},
\]
which implies
\[
\mathbb E\bigl[G(\mathcal D,A)\bigr]
\le
\frac{\mathbb{I}(A;\mathcal D)}{\lambda}
+
\frac{\lambda}{8n}.
\]
Optimizing the right-hand side over $\lambda>0$ by choosing $\lambda=\sqrt{8n\,\mathbb{I}(A;\mathcal D)}$ yields
\[
\mathbb E\bigl[G(\mathcal D,A)\bigr]
\le
\sqrt{\frac{\mathbb{I}(A;\mathcal D)}{2n}}.
\]
Applying the same argument to $-G(\mathcal D,A)$ gives
\[
-\mathbb E\bigl[G(\mathcal D,A)\bigr]
\le
\sqrt{\frac{\mathbb{I}(A;\mathcal D)}{2n}}.
\]
Combining the two inequalities proves that
\[
\left|
\mathbb E\bigl[L_{\mathcal P}(A)-\widehat{L}_{\mathcal D}(A)\bigr]
\right|
\le
\sqrt{\frac{\mathbb{I}(A;\mathcal D)}{2n}}.
\]
\end{proof}

\begin{proof}[Proof of Proposition~\ref{eq:prop_1}]
Let
\[
L_{\mathcal P}^\star := \inf_{\theta\in\Theta} L_{\mathcal P}(\theta),
\qquad
\theta^\star \in \arg\min_{\theta\in\Theta} L_{\mathcal P}(\theta),
\qquad
K^\star := \kappa(\theta^\star).
\]

We first analyze the top-down output $\widehat\theta_{\mathrm{TD}}=g(\widehat K)$. Since $\widehat K\in\arg\min_{K\in\mathcal K}\widehat L_{\mathcal D}(g(K))$, the candidate $K^\star$ is feasible and therefore
\[
\widehat L_{\mathcal D}\bigl(g(\widehat K)\bigr)
\le
\widehat L_{\mathcal D}\bigl(g(K^\star)\bigr).
\]
Now decompose the excess population risk of the top-down output as
\[
L_{\mathcal P}\bigl(g(\widehat K)\bigr)-L_{\mathcal P}(\theta^\star)
=
\Bigl(L_{\mathcal P}\bigl(g(\widehat K)\bigr)-\widehat L_{\mathcal D}\bigl(g(\widehat K)\bigr)\Bigr)
+
\Bigl(\widehat L_{\mathcal D}\bigl(g(\widehat K)\bigr)-\widehat L_{\mathcal D}\bigl(g(K^\star)\bigr)\Bigr)
\]
\[
\hspace{2.6cm}
+
\Bigl(\widehat L_{\mathcal D}\bigl(g(K^\star)\bigr)-L_{\mathcal P}\bigl(g(K^\star)\bigr)\Bigr)
+
\Bigl(L_{\mathcal P}\bigl(g(K^\star)\bigr)-L_{\mathcal P}(\theta^\star)\Bigr).
\]
Taking expectations over both $\mathcal D$ and the internal randomness of the framework, we bound these four terms separately.

For the first term, apply the lemma with the data-dependent output $A=g(\widehat K)=\widehat\theta_{\mathrm{TD}}$:
\[
\mathbb E\Bigl[L_{\mathcal P}\bigl(g(\widehat K)\bigr)-\widehat L_{\mathcal D}\bigl(g(\widehat K)\bigr)\Bigr]
\le
\sqrt{\frac{\mathbb{I}(g(\widehat K);\mathcal D)}{2n}}.
\]
Since $g(\widehat K)$ is a deterministic function of $\widehat K$, the data processing inequality gives
\[
\mathbb{I}(g(\widehat K);\mathcal D)\le \mathbb{I}(\widehat K;\mathcal D),
\]
and hence
\[
\mathbb E\Bigl[L_{\mathcal P}\bigl(g(\widehat K)\bigr)-\widehat L_{\mathcal D}\bigl(g(\widehat K)\bigr)\Bigr]
\le
\sqrt{\frac{\mathbb{I}(\widehat K;\mathcal D)}{2n}}.
\]

The second term is non-positive because $\widehat K$ minimizes the empirical risk over the realized knowledge class:
\[
\widehat L_{\mathcal D}\bigl(g(\widehat K)\bigr)-\widehat L_{\mathcal D}\bigl(g(K^\star)\bigr)\le 0.
\]

For the third term, note that $g(K^\star)$ is a fixed heuristic independent of the sampled dataset. Therefore,
\[
\mathbb E\Bigl[\widehat L_{\mathcal D}\bigl(g(K^\star)\bigr)\Bigr]
=
L_{\mathcal P}\bigl(g(K^\star)\bigr),
\]
so its expectation is exactly zero:
\[
\mathbb E\Bigl[\widehat L_{\mathcal D}\bigl(g(K^\star)\bigr)-L_{\mathcal P}\bigl(g(K^\star)\bigr)\Bigr]=0.
\]

For the fourth term, by the definition of the distortion
\[
\delta_{\mathcal P}(g,\kappa)
:=
\sup_{\theta\in\Theta}
\Bigl(
L_{\mathcal P}\bigl(g(\kappa(\theta))\bigr)-L_{\mathcal P}(\theta)
\Bigr),
\]
we have
\[
L_{\mathcal P}\bigl(g(K^\star)\bigr)-L_{\mathcal P}(\theta^\star)
=
L_{\mathcal P}\bigl(g(\kappa(\theta^\star))\bigr)-L_{\mathcal P}(\theta^\star)
\le
\delta_{\mathcal P}(g,\kappa).
\]

Combining the four bounds yields
\[
\mathbb E\bigl[L_{\mathcal P}(\widehat\theta_{\mathrm{TD}})\bigr]-L_{\mathcal P}^\star
\le
\delta_{\mathcal P}(g,\kappa)
+
\sqrt{\frac{\mathbb{I}(\widehat K;\mathcal D)}{2n}}.
\]

We now turn to the bottom-up output $\widehat\theta_{\mathrm{BU}}$. Since $\widehat\theta_{\mathrm{BU}}\in\arg\min_{\theta\in\Theta}\widehat L_{\mathcal D}(\theta)$, we have
\[
\widehat L_{\mathcal D}(\widehat\theta_{\mathrm{BU}})
\le
\widehat L_{\mathcal D}(\theta^\star).
\]
Decompose its excess population risk as
\[
L_{\mathcal P}(\widehat\theta_{\mathrm{BU}})-L_{\mathcal P}(\theta^\star)
=
\Bigl(L_{\mathcal P}(\widehat\theta_{\mathrm{BU}})-\widehat L_{\mathcal D}(\widehat\theta_{\mathrm{BU}})\Bigr)
+
\Bigl(\widehat L_{\mathcal D}(\widehat\theta_{\mathrm{BU}})-\widehat L_{\mathcal D}(\theta^\star)\Bigr)
\]
\[
\hspace{4.1cm}
+
\Bigl(\widehat L_{\mathcal D}(\theta^\star)-L_{\mathcal P}(\theta^\star)\Bigr).
\]
Taking expectations, the first term is bounded by the lemma:
\[
\mathbb E\Bigl[L_{\mathcal P}(\widehat\theta_{\mathrm{BU}})-\widehat L_{\mathcal D}(\widehat\theta_{\mathrm{BU}})\Bigr]
\le
\sqrt{\frac{\mathbb{I}(\widehat\theta_{\mathrm{BU}};\mathcal D)}{2n}}.
\]
The second term is non-positive because $\widehat\theta_{\mathrm{BU}}$ minimizes the empirical risk over $\Theta$, and the third term has expectation zero because $\theta^\star$ is fixed. Hence
\[
\mathbb E\bigl[L_{\mathcal P}(\widehat\theta_{\mathrm{BU}})\bigr]-L_{\mathcal P}^\star
\le
\sqrt{\frac{\mathbb{I}(\widehat\theta_{\mathrm{BU}};\mathcal D)}{2n}}.
\]
This proves the proposition.
\end{proof}

Proposition~\ref{eq:prop_1} reveals the exact source of the trade-off between top-down and bottom-up search. The term $\delta_{\mathcal P}(g,\kappa)$ is a representation error: it measures how much quality can be lost when a heuristic is compressed into a knowledge state and then reconstructed through the canonical realization map $g$. If heuristics that share the same knowledge state have similar population quality, then this distortion is small and the compression is faithful. By contrast, if fine-grained implementation details within a knowledge class strongly affect performance, then $\delta_{\mathcal P}(g,\kappa)$ can be large.

The mutual-information terms quantify adaptive dependence on the sampled dataset. Since the terminal top-down output is mediated through the knowledge state $\widehat K$, its adaptive complexity is controlled by $\mathbb{I}(\widehat K;\mathcal D)$. Bottom-up search does not incur representation distortion, but it exposes the final heuristic itself to the sampled dataset, which leads to the larger comparison term $\mathbb{I}(\widehat\theta_{\mathrm{BU}};\mathcal D)$.

Taken together, the proposition explains when top-down AHD should outperform bottom-up AHD. Top-down is preferred when the knowledge representation is both coherent and compressive: coherent, so that $\delta_{\mathcal P}(g,\kappa)$ is small, and compressive, so that $\mathbb{I}(\widehat K;\mathcal D)\ll \mathbb{I}(\widehat\theta_{\mathrm{BU}};\mathcal D)$. Bottom-up is preferred when the abstraction map $\kappa$ is too coarse to preserve performance-critical implementation details, in which case the distortion term may dominate the advantage gained from compression.

As a final remark, the exact-search assumption is made only to keep the statement clean. If the terminal top-down and bottom-up procedures solve their empirical objectives only approximately, with optimization errors $\eta_{\mathrm{TD}}(\mathcal D)$ and $\eta_{\mathrm{BU}}(\mathcal D)$, then the same proof yields the more general bounds
\[
\mathbb E\bigl[L_{\mathcal P}(\widehat\theta_{\mathrm{TD}})\bigr]-L_{\mathcal P}^\star
\le
\delta_{\mathcal P}(g,\kappa)+\mathbb E[\eta_{\mathrm{TD}}(\mathcal D)]+\sqrt{\frac{\mathbb{I}(\widehat K;\mathcal D)}{2n}},
\]
and
\[
\mathbb E\bigl[L_{\mathcal P}(\widehat\theta_{\mathrm{BU}})\bigr]-L_{\mathcal P}^\star
\le
\mathbb E[\eta_{\mathrm{BU}}(\mathcal D)]+\sqrt{\frac{\mathbb{I}(\widehat\theta_{\mathrm{BU}};\mathcal D)}{2n}}.
\]

\appsubsection{Offline Transfer Across Domains}
\label{app:transfer}

We now consider an offline transfer setting from a source domain $s$ to a target domain $t$. Let
\[
s=(\mathcal X_s,\mathcal Y_s,f_s), \qquad t=(\mathcal X_t,\mathcal Y_t,f_t),
\]
with source and target datasets $\mathcal D_s\sim\mathcal P_s^{n_s}$ and $\mathcal D_t\sim\mathcal P_t^{n_t}$. For the target domain, we write
\[
L_{\mathcal P_t}(\theta) := \mathbb E_{\mathbf{x}\sim\mathcal P_t}[\ell_t(\theta;\mathbf{x})], \qquad \widehat L_{\mathcal D_t}(\theta) := \frac{1}{n_t}\sum_{i=1}^{n_t}\ell_t(\theta;\mathbf{x}_i).
\]

A key issue in offline transfer is that the source and target heuristic spaces, $\Theta_s$ and $\Theta_t$, need not share the same execution signature. This is especially relevant when transferring across related but non-identical graph optimization tasks, such as from TSP to CVRP. In such cases, the source heuristic $\widehat\theta_s\in\Theta_s$ is not generally the natural transfer object for the target side. Instead, the transfer object should be defined by the representation that the target-side adaptation procedure actually consumes.

Accordingly, we distinguish two offline transfer representations. For a source run produced by top-down AHD, the natural transfer object is the terminal knowledge state
\[
R_s^{\mathrm{TD}} := \widehat K_s \in \mathcal K.
\]
For a source run produced by bottom-up AHD, the natural transfer object is the evaluated heuristic artifact
\[
R_s^{\mathrm{BU}} := (A_s,\widehat L_s), \qquad A_s := \alpha_s(\widehat\theta_s), \qquad \widehat L_s := \widehat L_{\mathcal D_s}(\widehat\theta_s).
\]
This distinction reflects the fact that bottom-up reflection operates on the observed artifact together with its empirical feedback, rather than on the executable heuristic as an extensional object.

To support cross-domain transfer, we assume that the knowledge space $\mathcal K$ is shared across domains, while its realizations are domain-specific. Concretely, for each domain $d\in\{s,t\}$, let
\[
g_d:\mathcal K\to\Theta_d
\]
be a realization map that instantiates a knowledge state as a heuristic in domain $d$. Thus, the same knowledge state may be realized differently in TSP and CVRP, even though it captures shared structural concepts such as edge quality, local compactness, or future routing flexibility.

Given a transferred source representation, the target side adapts it using target data:
\[
\widehat\theta_{t\leftarrow s}^{\mathrm{TD}} = \mathcal A_t^{\mathrm{TD}}(R_s^{\mathrm{TD}},\mathcal D_t), \qquad \widehat\theta_{t\leftarrow s}^{\mathrm{BU}} = \mathcal A_t^{\mathrm{BU}}(R_s^{\mathrm{BU}},\mathcal D_t).
\]
We use these two outputs to compare knowledge transfer and artifact-based transfer.

We interpret Proposition~\ref{prop:transfer} at the meta-transfer level. Specifically, we fix a source domain $s$ and draw a target domain $T\sim \Pi_s$ from a distribution over domains related to $s$. For each target domain $t$, let $\theta_t^\star\in\arg\min_{\theta\in\Theta_t}L_{\mathcal P_t}(\theta)$ be a measurable choice of target-optimal heuristic. Then $\theta_T^\star$ is a random variable induced by the sampled target domain $T$, although it becomes fixed once a particular target domain is conditioned upon.

\begin{proposition}[Compressed sufficient transfer representation]
\label{prop:transfer}
Suppose that there exists a measurable map $\psi$ such that
\[
R_s^{\mathrm{TD}} = \psi(R_s^{\mathrm{BU}})
\]
and that
\[
\theta_T^\star \perp\!\!\!\perp R_s^{\mathrm{BU}} \mid R_s^{\mathrm{TD}}.
\]
Then $R_s^{\mathrm{TD}}$ is a sufficient statistic of $R_s^{\mathrm{BU}}$ for target-relevant transfer information, in the sense that
\[
p(\theta_T^\star\mid R_s^{\mathrm{BU}}) = p(\theta_T^\star\mid R_s^{\mathrm{TD}})
\quad \text{a.s.}
\]
Moreover, the source dependence of the transferred representation is no larger under top-down transfer:
\[
\mathbb I(R_s^{\mathrm{TD}};\mathcal D_s) \le \mathbb I(R_s^{\mathrm{BU}};\mathcal D_s).
\]
\end{proposition}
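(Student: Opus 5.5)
The sufficiency claim comes directly from the conditional-independence hypothesis. The information inequality then follows from the data-processing inequality, since $R_s^{\mathrm{TD}}$ is a deterministic function of $R_s^{\mathrm{BU}}$.

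For the first claim, I would use that the conditional independence $\theta_T^\star \perp\!\!\!\perp R_s^{\mathrm{BU}} \mid R_s^{\mathrm{TD}}$ means, by definition, $p(\theta_T^\star\mid R_s^{\mathrm{BU}}, R_s^{\mathrm{TD}}) = p(\theta_T^\star\mid R_s^{\mathrm{TD}})$ almost surely. Because $R_s^{\mathrm{TD}}=\psi(R_s^{\mathrm{BU}})$ is measurable with respect to $\sigma(R_s^{\mathrm{BU}})$, we have $\sigma(R_s^{\mathrm{BU}},R_s^{\mathrm{TD}})=\sigma(R_s^{\mathrm{BU}})$. Conditioning on the pair is therefore the same as conditioning on $R_s^{\mathrm{BU}}$ alone, which gives $p(\theta_T^\star\mid R_s^{\mathrm{BU}})=p(\theta_T^\star\mid R_s^{\mathrm{TD}})$ a.s.

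To make this rigorous without densities, I would phrase it with regular conditional distributions. For every bounded measurable $h$, the identity
\[
\mathbb E[h(\theta_T^\star)\mid R_s^{\mathrm{BU}}]=\mathbb E[h(\theta_T^\star)\mid R_s^{\mathrm{TD}}]
\]
follows from the tower property together with the conditional-independence hypothesis. An equivalent route is to note $\mathbb I(\theta_T^\star;R_s^{\mathrm{BU}}\mid R_s^{\mathrm{TD}})=0$ and apply the chain rule,
\[
\mathbb I(\theta_T^\star;R_s^{\mathrm{BU}})=\mathbb I(\theta_T^\star;R_s^{\mathrm{TD}}).
\]
This chain-rule form is the standard sufficiency statement.

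For the second claim, I would set up the chain $\mathcal D_s \to R_s^{\mathrm{BU}} \to R_s^{\mathrm{TD}}$. This is a Markov chain because $R_s^{\mathrm{TD}}$ is a deterministic function of $R_s^{\mathrm{BU}}$, so conditionally on $R_s^{\mathrm{BU}}$ it carries no further dependence on $\mathcal D_s$. The data-processing inequality, already used in the proof of Proposition~\ref{eq:prop_1} for $g(\widehat K)$, then yields $\mathbb I(R_s^{\mathrm{TD}};\mathcal D_s)\le \mathbb I(R_s^{\mathrm{BU}};\mathcal D_s)$.

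The main subtlety, which I would address explicitly, is the internal randomness of the source framework. If $R_s^{\mathrm{BU}}$ depends on LLM sampling noise as well as on $\mathcal D_s$, the argument still goes through, because $\psi$ is applied to $R_s^{\mathrm{BU}}$ itself. Any randomness entering $R_s^{\mathrm{TD}}$ beyond $R_s^{\mathrm{BU}}$ would break the Markov property, and that case is excluded by the hypothesis that $\psi$ is a measurable map. Measurability of $\psi$ also ensures all the mutual informations are well defined.
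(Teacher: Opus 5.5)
Your proposal is correct and follows the paper's proof. The paper also argues that conditioning on $R_s^{\mathrm{BU}}$ already determines $R_s^{\mathrm{TD}}=\psi(R_s^{\mathrm{BU}})$, so $p(\theta_T^\star\mid R_s^{\mathrm{BU}})=p(\theta_T^\star\mid R_s^{\mathrm{BU}},R_s^{\mathrm{TD}})$; it then drops $R_s^{\mathrm{BU}}$ using the conditional independence, and obtains the information inequality from the data-processing inequality for the deterministic map $\psi$.

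Your $\sigma$-algebra phrasing and your remark on internal randomness only add rigor. One small correction: the chain-rule variant yields $\mathbb I(\theta_T^\star;R_s^{\mathrm{BU}})=\mathbb I(\theta_T^\star;R_s^{\mathrm{TD}})$, which follows from the a.s.\ equality of conditional laws but is not equivalent to it in general, since the converse requires these mutual informations to be finite.
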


\begin{proof}
Since $R_s^{\mathrm{TD}}=\psi(R_s^{\mathrm{BU}})$ is a measurable function of $R_s^{\mathrm{BU}}$, conditioning on $R_s^{\mathrm{BU}}$ also determines $R_s^{\mathrm{TD}}$. Therefore,
\[
p(\theta_T^\star\mid R_s^{\mathrm{BU}})
=
p(\theta_T^\star\mid R_s^{\mathrm{BU}},R_s^{\mathrm{TD}}).
\]
Using the conditional independence assumption $\theta_T^\star \perp\!\!\!\perp R_s^{\mathrm{BU}} \mid R_s^{\mathrm{TD}}$, we obtain
\[
p(\theta_T^\star\mid R_s^{\mathrm{BU}},R_s^{\mathrm{TD}})
=
p(\theta_T^\star\mid R_s^{\mathrm{TD}}),
\]
which proves the sufficiency claim.

For the mutual-information inequality, note again that $R_s^{\mathrm{TD}}$ is a deterministic function of $R_s^{\mathrm{BU}}$. Hence, by the data processing inequality,
\[
\mathbb I(R_s^{\mathrm{TD}};\mathcal D_s)\le \mathbb I(R_s^{\mathrm{BU}};\mathcal D_s).
\]
This proves the proposition.
\end{proof}

This is an idealized comparison in which both the top-down and bottom-up source runs are assumed to return their best terminal representations. We assume $R_s^{\mathrm{TD}}=\psi(R_s^{\mathrm{BU}})$ because, at the representation level, a knowledge state can be viewed as an abstraction of an evaluated code artifact and its feedback; the assumption therefore captures the best-case setting where the top-down representation preserves exactly the target-relevant abstraction extractable from the bottom-up artifact.

Proposition~\ref{prop:transfer} formalizes the sense in which knowledge can be more transferable than code at the meta-transfer level. The point is not that $R_s^{\mathrm{TD}}$ contains more information than $R_s^{\mathrm{BU}}$; indeed, the proposition shows the opposite in mutual-information terms. Rather, $R_s^{\mathrm{TD}}$ can be preferable when it is a compressed representation that preserves the source information relevant to the sampled target optimum while discarding source-specific nuisance details.

The statistical benefit of such a representation appears on the target side through a conditional mutual-information bound.

\paragraph{Lemma.}
Assume that $0\le \ell_t(\theta;\mathbf{x})\le 1$ for all $\theta\in\Theta_t$ and all $\mathbf{x}\in\operatorname{supp}(\mathcal P_t)$. Let $R_s$ be any source-side transfer representation satisfying $R_s \perp\!\!\!\perp \mathcal D_t$, and let $\widehat\theta_{t\leftarrow s}=\mathcal A_t(R_s,\mathcal D_t)$ be the target-side output adapted from $R_s$. Then
\[
\left|
\mathbb E\bigl[L_{\mathcal P_t}(\widehat\theta_{t\leftarrow s})-\widehat L_{\mathcal D_t}(\widehat\theta_{t\leftarrow s})\bigr]
\right|
\le
\sqrt{\frac{\mathbb I(\widehat\theta_{t\leftarrow s};\mathcal D_t\mid R_s)}{2n_t}}.
\]

\begin{proof}
Condition on a fixed value $R_s=r$. Since $r$ is fixed, the output $\widehat\theta_{t\leftarrow s}=\mathcal A_t(r,\mathcal D_t)$ is simply a data-dependent hypothesis on the target dataset. By the same bounded-loss mutual-information argument used in the proof of Proposition~\ref{eq:prop_1}, we have
\[
\left|
\mathbb E\bigl[L_{\mathcal P_t}(\widehat\theta_{t\leftarrow s})-\widehat L_{\mathcal D_t}(\widehat\theta_{t\leftarrow s}) \mid R_s=r\bigr]
\right|
\le
\sqrt{\frac{\mathbb I(\widehat\theta_{t\leftarrow s};\mathcal D_t\mid R_s=r)}{2n_t}}.
\]
Taking expectations over $R_s$ and applying Jensen's inequality gives
\[
\left|
\mathbb E\bigl[L_{\mathcal P_t}(\widehat\theta_{t\leftarrow s})-\widehat L_{\mathcal D_t}(\widehat\theta_{t\leftarrow s})\bigr]
\right|
\le
\mathbb E\left[\sqrt{\frac{\mathbb I(\widehat\theta_{t\leftarrow s};\mathcal D_t\mid R_s)}{2n_t}}\right]
\le
\sqrt{\frac{\mathbb I(\widehat\theta_{t\leftarrow s};\mathcal D_t\mid R_s)}{2n_t}}.
\]
This proves the lemma.
\end{proof}

Applying the lemma to the two offline transfer representations yields
\[
\left|
\mathbb E\bigl[L_{\mathcal P_t}(\widehat\theta_{t\leftarrow s}^{\mathrm{TD}})-\widehat L_{\mathcal D_t}(\widehat\theta_{t\leftarrow s}^{\mathrm{TD}})\bigr]
\right|
\le
\sqrt{\frac{\mathbb I(\widehat\theta_{t\leftarrow s}^{\mathrm{TD}};\mathcal D_t\mid R_s^{\mathrm{TD}})}{2n_t}},
\]
\[
\left|
\mathbb E\bigl[L_{\mathcal P_t}(\widehat\theta_{t\leftarrow s}^{\mathrm{BU}})-\widehat L_{\mathcal D_t}(\widehat\theta_{t\leftarrow s}^{\mathrm{BU}})\bigr]
\right|
\le
\sqrt{\frac{\mathbb I(\widehat\theta_{t\leftarrow s}^{\mathrm{BU}};\mathcal D_t\mid R_s^{\mathrm{BU}})}{2n_t}}.
\]
These inequalities show that the effectiveness of offline transfer is governed by how much additional target-side information is still needed after the source representation has been provided.

Taken together, the proposition and lemma suggest the following interpretation. Top-down transfer pays a realization cost: the transferred object $R_s^{\mathrm{TD}}=\widehat K_s$ is abstract and must be instantiated in the target domain through $\mathcal A_t^{\mathrm{TD}}$ or, in the simplest case, through $g_t$. If the knowledge state omits performance-critical low-level details, this abstraction may introduce bias. Bottom-up transfer instead carries the richer evaluated artifact $R_s^{\mathrm{BU}}=(A_s,\widehat L_s)$, which preserves more low-level information but also retains more source-specific detail and therefore may require heavier adaptation on the target side.

Under the sufficient-statistic condition of Proposition~\ref{prop:transfer}, $R_s^{\mathrm{TD}}$ preserves all source information that is relevant to the target optimum while containing no more source dependence than $R_s^{\mathrm{BU}}$. In this regime, knowledge transfer is more general than artifact-based transfer in an information-theoretic sense: it is a compressed representation that preserves target-relevant invariants. If, in addition,
\[
\mathbb I(\widehat\theta_{t\leftarrow s}^{\mathrm{TD}};\mathcal D_t\mid R_s^{\mathrm{TD}})
\ll
\mathbb I(\widehat\theta_{t\leftarrow s}^{\mathrm{BU}};\mathcal D_t\mid R_s^{\mathrm{BU}}),
\]
then top-down transfer enjoys a strictly smaller target-side generalization burden and is therefore expected to be more sample-efficient.

Conversely, bottom-up transfer may be preferable when target performance depends on low-level implementation details that are present in $R_s^{\mathrm{BU}}$ but are not captured by $R_s^{\mathrm{TD}}$. This is precisely the setting in which knowledge compression ceases to be faithful, and the abstraction bias of top-down transfer can outweigh its information-theoretic benefit.

\appsubsection{Sparse Evaluation as Information Regularization}
\label{app:sparse_theory}

Sparse evaluation can be interpreted as a form of information regularization. Recall the Bayesian refinement rule in Equation~\ref{eq:bayes}: when a candidate is evaluated, the posterior over latent knowledge uses both the artifact $A^t$ and the empirical feedback $\widehat L^t$. When the candidate is not evaluated, $\widehat L^t$ is unobserved and the posterior marginalizes over the missing feedback:
\[
q_t(K\mid A^t,C)
=
\int q_t(K,\widehat L\mid A^t,C)\,d\widehat L.
\]
Using Equation~\ref{eq:bayes}, this gives
\begin{equation}
q_t(K\mid A^t,C)
\propto
q_t(A^t\mid K,C)\,q_t(K\mid C),
\end{equation}
because $\int q_t(\widehat L\mid A^t,K,C)\,d\widehat L=1$. Thus, unevaluated candidates are not discarded: they still contribute structural evidence through $q_t(A^t\mid K,C)$, but they do not contribute evaluative evidence through $q_t(\widehat L^t\mid A^t,K,C)$.

This missing evaluative likelihood has a statistical consequence. Each observed empirical loss $\widehat L^t=\widehat L_{\mathcal D}(\theta^t)$ reveals information about the finite training set $\mathcal D$. Revealing fewer such losses can reduce the adaptive dependence of the final output on $\mathcal D$, and hence reduce overfitting to the training set. We formalize this effect below.

Let $M^t\in\{0,1\}$ denote whether the candidate at round $t$ is evaluated. Let $\rho_t:=\mathbb P(M^t=1)$ be the evaluation rate, and let $H_\rho^T$ denote the final search history under sparse evaluation. Let $\widehat K_\rho$ be the terminal knowledge state returned by top-down search and let $\widehat\theta_{\mathrm{TD},\rho}:=g(\widehat K_\rho)$ be the final realized heuristic. Define the sparse-evaluation optimization residual
\[
\eta_{\mathrm{TD}}^{(\rho)}(\mathcal D)
:=
\widehat L_{\mathcal D}\bigl(g(\widehat K_\rho)\bigr)
-
\inf_{K\in\mathcal K}\widehat L_{\mathcal D}\bigl(g(K)\bigr).
\]

\begin{proposition}[Sparse evaluation controls adaptive information]
Assume that $0\le \ell_d(\theta;\mathbf{x})\le 1$ for all $\theta\in\Theta$ and $\mathbf{x}\in\operatorname{supp}(\mathcal P)$. Suppose that, under sparse evaluation, new information about $\mathcal D$ enters the search history only through evaluated empirical losses. Moreover, assume that each evaluated loss contributes at most $\gamma_t$ nats of conditional information about $\mathcal D$, i.e.,
\[
\mathbb{I}\bigl(\widehat L^t;\mathcal D\mid H_\rho^{t-1},A^t,M^t=1\bigr)\le \gamma_t.
\]
Then
\[
\mathbb E\bigl[L_{\mathcal P}(\widehat\theta_{\mathrm{TD},\rho})\bigr]-L_{\mathcal P}^\star
\le
\delta_{\mathcal P}(g,\kappa)
+
\mathbb E\bigl[\eta_{\mathrm{TD}}^{(\rho)}(\mathcal D)\bigr]
+
\sqrt{\frac{\sum_{t=1}^T \rho_t\gamma_t}{2n}},
\]
where $L_{\mathcal P}^\star:=\inf_{\theta\in\Theta}L_{\mathcal P}(\theta)$.
\end{proposition}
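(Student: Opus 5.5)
The plan is to reuse the four-term decomposition from the proof of Proposition~\ref{eq:prop_1}, with the optimization residual $\eta_{\mathrm{TD}}^{(\rho)}$ absorbing the failure of exact empirical minimization. Then the information term $\mathbb I(\widehat K_\rho;\mathcal D)$ is bounded by the information accumulated in the sparse history through the chain rule. Let $\theta^\star$ be a population minimizer and $K^\star:=\kappa(\theta^\star)$; if the infimum is not attained, take an $\epsilon$-minimizer and let $\epsilon\to0$.

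First write
\[
L_{\mathcal P}(g(\widehat K_\rho))-L_{\mathcal P}(\theta^\star)
=\bigl[L_{\mathcal P}(g(\widehat K_\rho))-\widehat L_{\mathcal D}(g(\widehat K_\rho))\bigr]
+\bigl[\widehat L_{\mathcal D}(g(\widehat K_\rho))-\widehat L_{\mathcal D}(g(K^\star))\bigr]
+\bigl[\widehat L_{\mathcal D}(g(K^\star))-L_{\mathcal P}(g(K^\star))\bigr]
+\bigl[L_{\mathcal P}(g(K^\star))-L_{\mathcal P}(\theta^\star)\bigr].
\]
The second bracket is at most $\eta_{\mathrm{TD}}^{(\rho)}(\mathcal D)$, since $\inf_K\widehat L_{\mathcal D}(g(K))\le\widehat L_{\mathcal D}(g(K^\star))$. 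The third bracket has zero mean because $g(K^\star)$ does not depend on $\mathcal D$. The fourth is at most $\delta_{\mathcal P}(g,\kappa)$ by definition of the distortion. For the first bracket, apply the Lemma preceding the proof of Proposition~\ref{eq:prop_1} to $A=g(\widehat K_\rho)$, and then apply data processing. This gives the bound $\sqrt{\mathbb I(\widehat K_\rho;\mathcal D)/(2n)}$.

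The remaining task is to show $\mathbb I(\widehat K_\rho;\mathcal D)\le\sum_t\rho_t\gamma_t$.
\begin{itemize}
\item Since $\widehat K_\rho$ is produced from the final history $H_\rho^T$ together with internal randomness independent of $\mathcal D$, data processing gives $\mathbb I(\widehat K_\rho;\mathcal D)\le\mathbb I(H_\rho^T;\mathcal D)$.
\item Expand the history by the chain rule over rounds, with the round-$t$ increment $(K^t,A^t,M^t,M^t\widehat L^t)$:
\[
\mathbb I(H_\rho^T;\mathcal D)=\sum_{t=1}^T\mathbb I\bigl(K^t,A^t,M^t,M^t\widehat L^t;\mathcal D\mid H_\rho^{t-1}\bigr).
\]
\item By the hypothesis that new information about $\mathcal D$ enters only through evaluated losses, the proposal, the artifact and the mask are generated from $H_\rho^{t-1}$ and fresh randomness. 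Their conditional information with $\mathcal D$ therefore vanishes, leaving $\mathbb I(M^t\widehat L^t;\mathcal D\mid H_\rho^{t-1},A^t,M^t)$.
\item Conditioning on $M^t$ splits this term into $\mathbb P(M^t=1)\,\mathbb I(\widehat L^t;\mathcal D\mid H_\rho^{t-1},A^t,M^t=1)$, since the $M^t=0$ branch reveals a constant. This is at most $\rho_t\gamma_t$.
\end{itemize}
Summing over rounds and inserting the result into the square root completes the bound.

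The main obstacle is making the decomposition in the third step rigorous. Conditioning on the event $M^t=1$ requires $M^t$ to be independent of $\mathcal D$ given $H_\rho^{t-1}$, so that $\mathbb P(M^t=1\mid H_\rho^{t-1})$ averages to $\rho_t$. If $\rho_t$ is only an unconditional rate, I would assume the evaluation mask is drawn independently, for example as a fixed-rate Bernoulli. I would also read the ``only through evaluated losses'' hypothesis as exactly the conditional-independence statement needed to drop the proposal and artifact terms.
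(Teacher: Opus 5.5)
Your proposal is correct and follows the paper's proof essentially step for step. It uses the same four-term decomposition, with $\eta_{\mathrm{TD}}^{(\rho)}$ absorbing inexact minimization, then data processing from $\widehat K_\rho$ to $H_\rho^T$, the chain rule over rounds, the ``only through evaluated losses'' hypothesis to discard the non-loss terms, and a split on $M^t$ to obtain $\rho_t\gamma_t$. The extra independence assumption you flag as the main obstacle is not needed, because conditional mutual information always decomposes as $\mathbb I(\cdot\,;\cdot\mid H_\rho^{t-1},A^t,M^t)=\sum_{m\in\{0,1\}}\mathbb P(M^t=m)\,\mathbb I(\cdot\,;\cdot\mid H_\rho^{t-1},A^t,M^t=m)$ with the unconditional rate $\rho_t=\mathbb P(M^t=1)$, which is exactly how the paper uses it (and $\mathbb E[\mathbb P(M^t=1\mid H_\rho^{t-1})]=\rho_t$ holds by the tower property regardless).
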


\begin{proof}
The proof follows the distortion--compression argument of Proposition~\ref{eq:prop_1}, with an additional bound on the information term. Since $\widehat\theta_{\mathrm{TD},\rho}=g(\widehat K_\rho)$, the same decomposition gives
\[
\mathbb E\bigl[L_{\mathcal P}(\widehat\theta_{\mathrm{TD},\rho})\bigr]-L_{\mathcal P}^\star
\le
\delta_{\mathcal P}(g,\kappa)
+
\mathbb E\bigl[\eta_{\mathrm{TD}}^{(\rho)}(\mathcal D)\bigr]
+
\sqrt{\frac{\mathbb{I}(\widehat K_\rho;\mathcal D)}{2n}}.
\]
It remains to control $\mathbb{I}(\widehat K_\rho;\mathcal D)$.

Since $\widehat K_\rho$ is a function of the final sparse-evaluation history $H_\rho^T$, the data processing inequality gives
\[
\mathbb{I}(\widehat K_\rho;\mathcal D)\le \mathbb{I}(H_\rho^T;\mathcal D).
\]
By the chain rule for mutual information,
\[
\mathbb{I}(H_\rho^T;\mathcal D)
\le
\sum_{t=1}^T
\mathbb{I}\bigl(M^t,A^t,\widetilde L^t;\mathcal D\mid H_\rho^{t-1}\bigr),
\]
where $\widetilde L^t=\widehat L^t$ if $M^t=1$ and $\widetilde L^t=\bot$ otherwise. Under the assumption that new information about $\mathcal D$ enters only through evaluated empirical losses, the terms involving $M^t$ and $A^t$ do not add information about $\mathcal D$ beyond the current history. Hence
\[
\mathbb{I}\bigl(M^t,A^t,\widetilde L^t;\mathcal D\mid H_\rho^{t-1}\bigr)
=
\mathbb{I}\bigl(\widetilde L^t;\mathcal D\mid H_\rho^{t-1},A^t,M^t\bigr).
\]
When $M^t=0$, $\widetilde L^t=\bot$ is deterministic and contributes zero information. Therefore,
\[
\mathbb{I}\bigl(\widetilde L^t;\mathcal D\mid H_\rho^{t-1},A^t,M^t\bigr)
=
\rho_t\,\mathbb{I}\bigl(\widehat L^t;\mathcal D\mid H_\rho^{t-1},A^t,M^t=1\bigr)
\le
\rho_t\gamma_t.
\]
Summing over $t$ yields
\[
\mathbb{I}(\widehat K_\rho;\mathcal D)\le \sum_{t=1}^T\rho_t\gamma_t.
\]
Substituting this into the previous excess-risk bound proves the result.
\end{proof}

The proposition makes explicit the trade-off introduced by sparse evaluation. Reducing the evaluation rate lowers the adaptive information term from the full-evaluation scale $\sum_t\gamma_t$ to $\sum_t\rho_t\gamma_t$, thereby acting as a regularizer against overfitting to $\mathcal D$. This benefit is not free: fewer evaluated candidates can increase the optimization residual $\eta_{\mathrm{TD}}^{(\rho)}(\mathcal D)$. Sparse evaluation is therefore beneficial when the reduction in adaptive information outweighs the increase in this residual.

This also clarifies why top-down search can be especially effective under sparse evaluation. Even when $\widehat L^t$ is missing, Equation~\eqref{eq:sparse_bayes} shows that unevaluated artifacts still provide structural evidence about the latent knowledge state through $q_t(A^t\mid K,C)$. Thus, top-down search can continue refining and recombining knowledge using unevaluated candidates, while relying on a smaller number of evaluated candidates to calibrate performance. Bottom-up search, by contrast, depends more directly on empirical feedback to explain why a heuristic is good or bad; when evaluations are sparse, many bottom-up updates become structure-only and lose the evaluative signal needed to distinguish genuinely useful heuristics from merely plausible ones.

\appsection{Benchmark Problems}
\label{app:problems}

\appsubsection{Canonical CO Problems}

\paragraph{Traveling Salesman Problem (TSP).}
Let $G=(V,E)$ be a complete directed graph on a set of cities $V=\{1,\dots,n\}$, and let $d_{ij}\ge 0$ denote the travel cost from city $i$ to city $j$. The TSP seeks a minimum-cost Hamiltonian tour visiting each city exactly once and returning to the starting city. A standard formulation uses binary variables $x_{ij}\in\{0,1\}$ indicating whether the tour travels directly from $i$ to $j$:
\[
\min \sum_{i\in V}\sum_{j\in V,\; j\neq i} d_{ij}x_{ij}
\]
subject to
\[
\sum_{j\in V,\; j\neq i} x_{ij} = 1 \quad \forall i\in V,
\qquad
\sum_{i\in V,\; i\neq j} x_{ij} = 1 \quad \forall j\in V,
\]
and the subtour-elimination constraints
\[
\sum_{i\in S}\sum_{j\in S,\; j\neq i} x_{ij} \le |S|-1
\quad
\forall S\subset V,\; 2\le |S|\le n-1.
\]
The objective minimizes the total travel cost over all feasible tours.

In our experiments, we consider four TSP distributions, namely uniform, clustered, diagonal, and barbell, in order to test TSP solvers under qualitatively different geometric structures.

\emph{(i) Uniform TSP.} Cities are sampled independently from the uniform distribution on the unit square, i.e., $x_i \sim \mathrm{Unif}([0,1]^2)$. This distribution is difficult because it does not provide strong geometric regularities such as clusters or bottlenecks, so many candidate edges have similar local quality and the main challenge is global tour coordination.

\emph{(ii) Clustered TSP.} Cities are concentrated around a small number of spatial clusters; in our data, points are generated around six cluster centers arranged roughly on a circle. This distribution is difficult because the solver must simultaneously exploit short intra-cluster edges and choose good inter-cluster connections, and poor decisions about cluster ordering or entry/exit points can substantially increase the tour length.

\emph{(iii) Diagonal TSP.} Cities are sampled near the main diagonal of the unit square, so that most points lie close to a nearly one-dimensional manifold. This distribution is difficult because many cities become nearly collinear, which creates many near-ties between alternative local connections and makes the global visiting order highly sensitive to small geometric perturbations.

\emph{(iv) Barbell TSP.} Cities are distributed over two dense groups connected by a narrow bridge of intermediate points, yielding a barbell-shaped geometry. This distribution is difficult because it combines local clustering with a structural bottleneck: the solver must build efficient subtours inside each dense group while also handling the bridge correctly, since suboptimal transitions across the bottleneck can incur a large global penalty.

\begin{figure}[t]
  \centering
  \includegraphics[width=\linewidth]{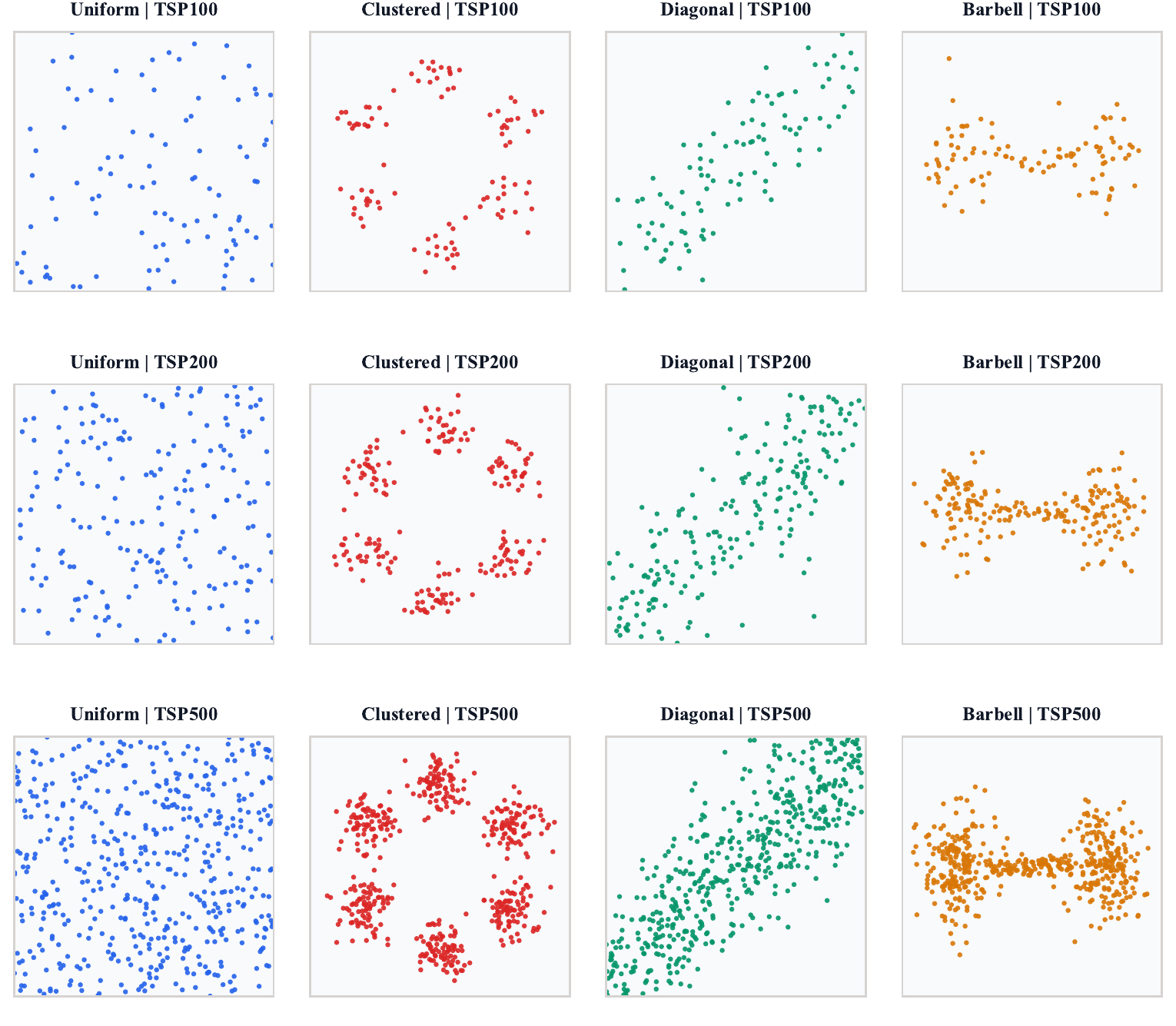}
  \caption{Example TSP instances from four spatial distributions at sizes 100, 200, and 500.}
  \label{fig:tsp_multi_dist}
\end{figure}

\paragraph{Capacitated Vehicle Routing Problem (CVRP).}
Let $V=\{0\}\cup N$ be the set of nodes, where node $0$ is the depot and $N=\{1,\dots,n\}$ is the set of customers. Each customer $i\in N$ has demand $q_i>0$, each vehicle has capacity $Q$, and $d_{ij}\ge 0$ denotes the travel cost from node $i$ to node $j$. The CVRP seeks a collection of depot-to-depot routes of minimum total cost such that every customer is visited exactly once and the total demand served by each route does not exceed $Q$. Using binary variables $x_{ij}\in\{0,1\}$ and load variables $u_i$, one common formulation is
\[
\min \sum_{i\in V}\sum_{j\in V,\; j\neq i} d_{ij}x_{ij}
\]
subject to
\[
\sum_{j\in V,\; j\neq i} x_{ij} = 1 \quad \forall i\in N,
\qquad
\sum_{i\in V,\; i\neq j} x_{ij} = 1 \quad \forall j\in N,
\]
\[
q_i \le u_i \le Q \quad \forall i\in N,
\]
\[
u_j \ge u_i + q_j - Q(1-x_{ij})
\quad
\forall i,j\in N,\; i\neq j.
\]
The depot degree constraints determine the number of used vehicles, and the load constraints eliminate infeasible subtours while enforcing vehicle capacities.

In addition, we also consider two variants of VRP, namely LVRP and OVRP.

\paragraph{Duration-Limited Vehicle Routing Problem (LVRP).}
LVRP extends CVRP by imposing, in addition to vehicle-capacity constraints, a maximum route length. Let $V=\{0\}\cup N$ be the set of nodes, where node $0$ is the depot and $N=\{1,\dots,n\}$ is the set of customers. Each customer $i\in N$ has demand $q_i>0$, each vehicle has capacity $Q$, $d_{ij}\ge 0$ denotes the travel cost from node $i$ to node $j$, and $L>0$ is the maximum allowable length of each route. Using binary variables $x_{ij}\in\{0,1\}$, load variables $u_i$, and arrival-distance variables $t_i$, one formulation is
\[
\min \sum_{i\in V}\sum_{j\in V,\; j\neq i} d_{ij}x_{ij}
\]
subject to
\[
\sum_{j\in V,\; j\neq i} x_{ij} = 1 \quad \forall i\in N,
\qquad
\sum_{i\in V,\; i\neq j} x_{ij} = 1 \quad \forall j\in N,
\]
\[
q_i \le u_i \le Q \quad \forall i\in N,
\]
\[
u_j \ge u_i + q_j - Q(1-x_{ij})
\quad
\forall i,j\in N,\; i\neq j,
\]
\[
d_{0i}x_{0i} \le t_i \le L - d_{i0}
\quad
\forall i\in N,
\]
\[
t_j \ge t_i + d_{ij} - L(1-x_{ij})
\quad
\forall i,j\in N,\; i\neq j.
\]
The objective minimizes the total travel cost, while the additional constraints ensure that every route has total length at most $L$, including the return to the depot.

\paragraph{Open Vehicle Routing Problem (OVRP).}
OVRP is another variant of CVRP in which each vehicle starts at the depot but does not need to return to the depot after serving its last customer. Let $V=\{0\}\cup N$ be defined as above. Using binary variables $x_{ij}\in\{0,1\}$, terminal indicators $z_i\in\{0,1\}$, and load variables $u_i$, one formulation is
\[
\min \sum_{i\in V}\sum_{j\in N,\; j\neq i} d_{ij}x_{ij}
\]
subject to
\[
\sum_{i\in V,\; i\neq j} x_{ij} = 1 \quad \forall j\in N,
\]
\[
\sum_{j\in N,\; j\neq i} x_{ij} + z_i = 1 \quad \forall i\in N,
\]
\[
\sum_{j\in N} x_{0j} = \sum_{i\in N} z_i,
\]
\[
q_i \le u_i \le Q \quad \forall i\in N,
\]
\[
u_j \ge u_i + q_j - Q(1-x_{ij})
\quad
\forall i,j\in N,\; i\neq j.
\]
Here, $z_i=1$ indicates that customer $i$ is the last customer of an open route. Compared with CVRP, OVRP preserves customer-coverage and capacity constraints, but routes terminate at customers rather than returning to the depot.

\paragraph{Orienteering Problem (OP).}
Let $V=\{0\}\cup N$ be a set of nodes, where node $0$ is the depot. Each customer node $i\in N$ yields a prize $r_i\ge 0$, and traveling from node $i$ to node $j$ incurs cost $d_{ij}\ge 0$. Given a travel budget $B$, the OP seeks a depot-to-depot walk that collects the maximum total prize while respecting the budget. With binary routing variables $x_{ij}\in\{0,1\}$ and visit indicators $y_i\in\{0,1\}$, a standard formulation is
\[
\max \sum_{i\in N} r_i y_i
\]
subject to
\[
\sum_{j\in V,\; j\neq 0} x_{0j} = 1,
\qquad
\sum_{i\in V,\; i\neq 0} x_{i0} = 1,
\]
\[
\sum_{j\in V,\; j\neq i} x_{ij} = y_i
\quad \forall i\in N,
\qquad
\sum_{j\in V,\; j\neq i} x_{ji} = y_i
\quad \forall i\in N,
\]
\[
\sum_{i\in V}\sum_{j\in V,\; j\neq i} d_{ij}x_{ij} \le B,
\]
together with subtour-elimination constraints. The objective maximizes the collected reward subject to route feasibility and the total travel budget.

\paragraph{Job Shop Scheduling Problem (JSSP).}
Let $\mathcal{J}=\{1,\dots,n\}$ be the set of jobs and $\mathcal{M}=\{1,\dots,m\}$ the set of machines. Each job $j\in\mathcal{J}$ consists of an ordered sequence of operations $(O_{j1},\dots,O_{j\ell_j})$. Operation $O_{jk}$ must be processed on machine $\mu_{jk}\in\mathcal{M}$ for duration $p_{jk}>0$. The JSSP seeks start times $S_{jk}\ge 0$ for all operations so as to minimize the makespan
\[
\min C_{\max}
\]
subject to the precedence constraints within each job,
\[
S_{j,k+1} \ge S_{jk} + p_{jk}
\quad
\forall j\in\mathcal{J},\; k=1,\dots,\ell_j-1,
\]
and the machine-capacity constraints: for any two operations $O_{jk}$ and $O_{j'k'}$ assigned to the same machine, exactly one must precede the other. Using binary variables $z_{jk,j'k'}\in\{0,1\}$ and a sufficiently large constant $M$, this can be written as
\[
S_{j'k'} \ge S_{jk} + p_{jk} - M(1-z_{jk,j'k'}),
\]
\[
S_{jk} \ge S_{j'k'} + p_{j'k'} - Mz_{jk,j'k'}
\]
whenever $\mu_{jk}=\mu_{j'k'}$. Finally,
\[
C_{\max} \ge S_{jk} + p_{jk}
\quad
\forall j\in\mathcal{J},\; k=1,\dots,\ell_j.
\]
The objective minimizes the completion time of the last finished operation.

\paragraph{Quadratic Assignment Problem (QAP).}
Let $\mathcal{F}=\{1,\dots,n\}$ be a set of facilities and $\mathcal{L}=\{1,\dots,n\}$ a set of locations. For each pair of facilities $(a,b)$, let $f_{ab}\ge 0$ denote the flow between them, and for each pair of locations $(i,j)$, let $d_{ij}\ge 0$ denote the distance between locations $i$ and $j$. The QAP seeks a one-to-one assignment of facilities to locations minimizing the total flow-distance interaction cost. Using binary variables $x_{ai}\in\{0,1\}$, where $x_{ai}=1$ if facility $a$ is assigned to location $i$, the problem is
\[
\min \sum_{a\in\mathcal{F}}\sum_{b\in\mathcal{F}}\sum_{i\in\mathcal{L}}\sum_{j\in\mathcal{L}}
f_{ab}\, d_{ij}\, x_{ai}x_{bj}
\]
subject to
\[
\sum_{i\in\mathcal{L}} x_{ai} = 1 \quad \forall a\in\mathcal{F},
\qquad
\sum_{a\in\mathcal{F}} x_{ai} = 1 \quad \forall i\in\mathcal{L},
\]
\[
x_{ai}\in\{0,1\}
\quad
\forall a\in\mathcal{F},\; i\in\mathcal{L}.
\]
The quadratic objective captures the fact that assigning two strongly interacting facilities to distant locations incurs a large cost.

\appsubsection{Synthetic SCO Problems}
\label{app:sco}

\paragraph{Sequential Combinatorial Optimization.}
We consider sequential combinatorial optimization (SCO) problems over a finite item set $\mathcal{V}=\{1,2,\ldots,n\}$. Starting from an initial state $s_1$, the decision process unfolds over $T$ steps, where at each step $1\le t\le T$ the algorithm selects a feasible decision $x_t\in\mathcal{A}_t\subseteq\mathcal{V}$, with $\mathcal{A}_t$ denoting the available set of admissible decisions at step $t$. The environment then transitions according to $s_{t+1}=\Phi(x_t,s_t)$ and yields an immediate reward $r_t=\mathcal{R}(x_t,s_t)$. The goal is to construct a sequence of decisions $(x_1,\ldots,x_T)$ that maximizes the cumulative return $\sum_{t=1}^{T} r_t$, thereby capturing a broad class of routing, scheduling, and other structured decision-making problems in which each action changes the future feasible set and the downstream utility.

\paragraph{Conference Talk Scheduling (CTS).}
We consider a finite set of candidate talks $\mathcal{V}=\{1,\dots,n\}$ and a schedule of fixed length $T$. Each talk $i\in\mathcal{V}$ is associated with a quality score $q_i>0$ and a normalized topic embedding $u_i\in\mathbb{R}^d$. At step $t$, the decision maker selects one unscheduled talk $x_t\in\mathcal{A}_t$, where
\[
\mathcal{A}_t=\mathcal{V}\setminus\{x_1,\dots,x_{t-1}\},
\]
so each talk can be scheduled at most once. The state may be written as $s_t=(\mathcal{A}_t,u_{x_{t-1}})$, with the convention that $u_{x_0}=\mathbf{0}$. After selecting $x_t$, the environment removes this talk from the available set and updates the previous-topic state to $u_{x_t}$. The immediate reward is
\[
r_t=
\begin{cases}
q_{x_t}, & t=1,\\[2mm]
q_{x_t}-\alpha \max\{0,\langle u_{x_t},u_{x_{t-1}}\rangle\}, & t\ge 2,
\end{cases}
\]
where $\alpha>0$ penalizes excessive topical similarity between adjacent talks. The objective is therefore
\[
\max_{x_1,\dots,x_T}\ \sum_{t=1}^{T} r_t
\quad
\text{s.t.}\quad
x_t\in\mathcal{A}_t,\ \ x_t\neq x_{t'}\ \forall t\neq t'.
\]
This formulation captures the trade-off between selecting individually high-quality talks and maintaining topical diversity across consecutive slots.

\paragraph{Online Ad Sequencing with Fatigue (OAS).}
Let $\mathcal{V}=\{1,\dots,n\}$ denote the set of ads and let the platform fill a sequence of $T$ ad slots. Each ad $i\in\mathcal{V}$ has a base value $b_i>0$ and a fatigue rate $\rho_i\in[0,1)$. Repetitions are allowed, so the feasible set is constant over time, namely $\mathcal{A}_t=\mathcal{V}$ for all $t$. The state is the vector of cumulative display counts $s_t=c_t\in\mathbb{Z}_{\ge 0}^n$, where $c_{i,t}$ records how many times ad $i$ has been shown in the first $t-1$ slots. If ad $x_t$ is selected at step $t$, its realized reward is its fatigued value
\[
r_t=b_{x_t}(1-\rho_{x_t})^{c_{x_t,t}},
\]
and the state transitions as
\[
c_{i,t+1}=
\begin{cases}
c_{i,t}+1, & i=x_t,\\
c_{i,t}, & i\neq x_t.
\end{cases}
\]
Hence the optimization problem is
\[
\max_{x_1,\dots,x_T}\ \sum_{t=1}^{T} b_{x_t}(1-\rho_{x_t})^{c_{x_t,t}}
\quad
\text{s.t.}\quad
x_t\in\mathcal{V}\ \forall t.
\]
This objective favors ads with large intrinsic value while accounting for diminishing returns under repeated exposure.

\paragraph{Food-Truck Routing (FTR).}
Consider a set of candidate stops $\mathcal{V}=\{1,\dots,n\}$ over a planning horizon of $T$ decision steps. Each location $i\in\mathcal{V}$ has coordinates $\ell_i\in\mathbb{R}^2$ and a base popularity $p_i>0$. Revisits are allowed, so $\mathcal{A}_t=\mathcal{V}$ for every $t$. The state at step $t$ consists of the current truck position $z_t\in\mathbb{R}^2$ and the last-visit information for each location. Let $\delta_{i,t}$ denote the number of steps since location $i$ was last visited before step $t$, with $\delta_{i,t}=+\infty$ if $i$ has never been visited. When choosing the next stop $x_t\in\mathcal{V}$, the immediate reward is
\[
r_t
=
p_{x_t}\bigl(1-e^{-\lambda \delta_{x_t,t}}\bigr)
-
c\,\| \ell_{x_t}-z_t\|_2,
\]
where $\lambda>0$ controls demand recovery and $c>0$ is the travel-cost coefficient. The state then transitions according to $z_{t+1}=\ell_{x_t}$ together with the appropriate update of all $\delta_{i,t}$. The resulting problem is
\[
\max_{x_1,\dots,x_T}\ \sum_{t=1}^{T}
\left[
p_{x_t}\bigl(1-e^{-\lambda \delta_{x_t,t}}\bigr)
-
c\,\| \ell_{x_t}-z_t\|_2
\right]
\quad
\text{s.t.}\quad
x_t\in\mathcal{V}\ \forall t.
\]
This formulation balances immediate sales potential, spatial travel efficiency, and the benefit of allowing demand at previously visited locations to recover over time.

\paragraph{Warehouse Picking with Fatigue (WPF).}
Let $\mathcal{V}=\{1,\dots,n\}$ denote the set of orders in a warehouse. Each order $i\in\mathcal{V}$ yields value $v_i>0$ and requires a base picking time $w_i>0$. Unlike fixed-horizon settings, the process unfolds under a total time budget $B>0$. If $t-1$ orders have already been picked, then the current fatigue multiplier is
\[
m_t = 1+\mu (t-1),
\]
where $\mu>0$ is the linear fatigue-growth rate. The state can be represented as $s_t=(\mathcal{U}_t,b_t)$, where $\mathcal{U}_t\subseteq\mathcal{V}$ is the set of unpicked orders and $b_t$ is the remaining time budget. At step $t$, the feasible action set is
\[
\mathcal{A}_t
=
\left\{
i\in \mathcal{U}_t \;:\; w_i\, m_t \le b_t
\right\},
\]
namely the set of unpicked orders that are still affordable under the current fatigue level. Choosing order $x_t\in\mathcal{A}_t$ yields immediate reward
\[
r_t=v_{x_t},
\]
and updates the state via
\[
\mathcal{U}_{t+1}=\mathcal{U}_t\setminus\{x_t\},
\qquad
b_{t+1}=b_t-w_{x_t}m_t.
\]
The process terminates at the stopping time
\[
\tau=\min\{t\ge 1:\mathcal{A}_t=\emptyset\},
\]
and the objective is
\[
\max_{x_1,\dots,x_{\tau-1}}\ \sum_{t=1}^{\tau-1} v_{x_t}
\quad
\text{s.t.}\quad
x_t\in\mathcal{A}_t\ \forall t<\tau.
\]
This problem formalizes a sequential knapsack-like decision process in which the effective cost of later actions increases endogenously because of worker fatigue.

\appsubsection{SR Problems}
\label{app:sr}

\paragraph{Symbolic Regression (SR).}
SR seeks an explicit analytical expression that maps observed inputs to outputs from data. Given a dataset
\[
\mathcal{D}=\{(z_i,y_i)\}_{i=1}^m,
\]
where $z_i$ denotes the input variables and $y_i$ the target quantity, the goal is to identify both a symbolic functional form $f\in\mathcal{F}$ and its numerical parameters $\theta$ such that
\[
y \approx f(z;\theta).
\]
In this work, candidate expressions are evaluated using the normalized mean squared error (NMSE), defined as
\[
\mathrm{NMSE}(f,\theta)
=
\frac{1}{m\,\mathbb{V}[y]}
\sum_{i=1}^m \bigl(f(z_i;\theta)-y_i\bigr)^2,
\]
where $\mathbb{V}[y]$ denotes the empirical variance of the target values. Thus, SR can be viewed as the problem of finding an interpretable closed-form expression with low NMSE on the observed data. In this work, we follow the four SR benchmark problems used in LLM-SR~\citep{shojaee2025llmsr}.

\paragraph{Damped Nonlinear Oscillator.}
In the first SR task, the objective is to recover the governing equation of a damped nonlinear oscillator from observations of position and velocity. Let $x$ denote displacement and $v=\dot{x}$ denote velocity. The target is the acceleration $\dot{v}$, and the task is to identify a symbolic expression of the form
\[
\dot{v} = f(x,v;\theta).
\]
Given samples of the state $(x,v)$ and the corresponding acceleration, the goal is to recover a compact analytical law that captures the nonlinear restoring and damping effects of the oscillator.

\paragraph{Driven Damped Nonlinear Oscillator.}
The second SR task extends the previous one by introducing an explicitly time-dependent driving force. In this case, the acceleration depends not only on the state $(x,v)$ but also on time $t$, and the goal is to identify an equation of the form
\[
\dot{v} = f(t,x,v;\theta).
\]
Compared with the first oscillator task, this problem is more challenging because the governing law must simultaneously capture nonlinear dynamics and external temporal forcing.

\paragraph{Bacterial Growth Rate.}
In the third SR task, the objective is to discover an analytical expression for the growth dynamics of \emph{E. coli}. Let $b$ denote population density, $s$ substrate concentration, $T$ temperature, and $\mathrm{pH}$ the acidity level. The target is the growth rate $\dot{b}$, and the task is to identify an equation of the form
\[
\dot{b} = f(b,s,T,\mathrm{pH};\theta).
\]
This problem aims to recover an interpretable growth law that explains how population dynamics depend on both internal state variables and environmental conditions.

\paragraph{Stress--Strain Relation.}
In the fourth SR task, the objective is to infer an empirical constitutive law for material behavior under loading. Let $\varepsilon$ denote strain, $T$ temperature, and $\sigma$ stress. The task is to identify a symbolic relation of the form
\[
\sigma = f(\varepsilon,T;\theta).
\]
Unlike the other SR tasks, this problem is based on empirical material data rather than a known closed-form governing equation, so the objective is to discover an interpretable constitutive relation that accurately describes the observed stress--strain response.

\appsubsection{PE Problems}
\label{app:pe}

\paragraph{Protein Engineering (PE).}
PE aims to identify protein sequences with high functional fitness under a given experimental assay. Let $\mathcal{A}$ denote the amino-acid alphabet, and let $x \in \mathcal{A}^*$ be a protein sequence of fixed or variable length. Each sequence is associated with an experimentally measured property $F(x)$, such as enzymatic activity, stability, or absorption wavelength. In the offline sequential setting considered here, we are given an initial support set
\[
\mathcal{D}_0 = \{(x_i, y_i)\}_{i=1}^{n_0},
\qquad y_i = F(x_i),
\]
together with a finite candidate pool $\mathcal{C} = \{\tilde{x}_1,\dots,\tilde{x}_M\}$ and a query budget $B$. At each round $t=1,\dots,B$, the method selects an untested candidate
\[
x_t \in \mathcal{C} \setminus \{x_1,\dots,x_{t-1}\},
\]
observes its fitness $y_t = F(x_t)$, and augments the observed set accordingly. The goal is to discover high-fitness sequences within the limited budget. In our implementation, performance is measured by the average fitness of the top discovered sequences after $B$ rounds. The following four PE benchmarks are derived from FLIP2~\cite{Didi2026.02.23.707496}.

\begin{table}[h]
\centering
\caption{Dataset sizes for protein sequence design benchmarks.}
\label{tab:psd_dataset_sizes}
\setlength{\tabcolsep}{5pt}
\renewcommand{\arraystretch}{1.2}
\resizebox{\linewidth}{!}{%
\begin{tabular}{|l|p{7.8cm}|C{2.0cm}|C{2.0cm}|}
\hline
\textbf{Problem} & \textbf{Description} & \textbf{Training set} & \textbf{Test set} \\
\hline
\textbf{Alpha Amylase} & Protein sequence design for alpha-amylase activity optimization. & 3218 & 488 \\
\hline
\textbf{Imine Reductase} & Protein sequence design for imine reductase activity optimization. & 4408 & 4178 \\
\hline
\textbf{Hydrophobic Core} & Protein sequence design for improving hydrophobic-core fitness and stability-related properties. & 1450 & 23485 \\
\hline
\textbf{Rhodopsin} & Protein sequence design for rhodopsin functional property optimization. & 700 & 184 \\
\hline
\end{tabular}%
}
\end{table}

\appsubsection{DL Problems}

\paragraph{Discrete Location Problems (DLP).}
DLPs form a family of combinatorial optimization problems in which a decision maker must choose a subset of facilities from a finite set of candidate locations so as to optimize a service or diversity criterion. They are classical models in operations research and logistics, with applications including warehouse and distribution-center placement, emergency-service stationing, healthcare and school siting, telecommunication infrastructure planning, and retail-network design. In these settings, the core trade-off is to determine where limited facilities should be located so as to balance coverage, accessibility, response quality, and spatial diversity.

Let $N=\{1,\dots,n\}$ denote the set of demand points and candidate facility locations, let $d_{ij}\ge 0$ denote the distance from location $i$ to location $j$, and let $p$ be the number of facilities to be selected. Using binary variables $y_j\in\{0,1\}$ to indicate whether facility $j$ is opened, all DLP variants considered here impose the cardinality constraint
\[
\sum_{j\in N} y_j = p.
\]
The four problems differ in how they evaluate the selected facilities.

\paragraph{$p$-Median Problem.}
The $p$-median problem seeks to open $p$ facilities so as to minimize the total assignment cost from demand points to their nearest open facilities. Let $x_{ij}\in\{0,1\}$ indicate whether demand point $i$ is assigned to facility $j$. A standard formulation is
\[
\min \sum_{i\in N}\sum_{j\in N} d_{ij}x_{ij}
\]
subject to
\[
\sum_{j\in N} x_{ij} = 1 \quad \forall i\in N,
\]
\[
x_{ij} \le y_j \quad \forall i,j\in N,
\]
\[
\sum_{j\in N} y_j = p,
\]
\[
x_{ij}, y_j \in \{0,1\} \quad \forall i,j\in N.
\]
The objective minimizes the total distance between demand points and their assigned open facilities.

\paragraph{$p$-Center Problem.}
The $p$-center problem seeks to open $p$ facilities so as to minimize the maximum distance from any demand point to its nearest open facility. Using the same assignment variables $x_{ij}\in\{0,1\}$ and an auxiliary variable $R$ denoting the service radius, one formulation is
\[
\min R
\]
subject to
\[
\sum_{j\in N} x_{ij} = 1 \quad \forall i\in N,
\]
\[
x_{ij} \le y_j \quad \forall i,j\in N,
\]
\[
\sum_{j\in N} d_{ij}x_{ij} \le R \quad \forall i\in N,
\]
\[
\sum_{j\in N} y_j = p,
\]
\[
x_{ij}, y_j \in \{0,1\} \quad \forall i,j\in N.
\]
The objective minimizes the worst-case service distance over all demand points.

\paragraph{$p$-Cover Problem.}
The $p$-cover problem considered here is the maximal covering location problem. Each demand point $i\in N$ has weight $w_i\ge 0$, and a demand point is considered covered if at least one open facility lies within a prespecified covering radius $\rho>0$. Let $z_i\in\{0,1\}$ indicate whether demand point $i$ is covered. Then a standard formulation is
\[
\max \sum_{i\in N} w_i z_i
\]
subject to
\[
z_i \le \sum_{j\in N:\, d_{ij}\le \rho} y_j
\quad \forall i\in N,
\]
\[
\sum_{j\in N} y_j = p,
\]
\[
z_i, y_j \in \{0,1\} \quad \forall i,j\in N.
\]
The objective maximizes the total covered demand under a fixed number of facilities and a fixed service radius.

\paragraph{$p$-Dispersion Problem.}
The $p$-dispersion problem seeks to select $p$ locations that are as far apart from one another as possible. In the max-min formulation, the objective is to maximize the minimum pairwise distance among the selected locations. Let $D$ denote the minimum separation distance. A standard formulation is
\[
\max D
\]
subject to
\[
D \le d_{ij} + M(2-y_i-y_j)
\quad \forall i,j\in N,\; i<j,
\]
\[
\sum_{j\in N} y_j = p,
\]
\[
y_j \in \{0,1\} \quad \forall j\in N,
\]
where $M$ is a sufficiently large constant. Whenever both locations $i$ and $j$ are selected, the constraint reduces to $D\le d_{ij}$, so maximizing $D$ forces the chosen facilities to be well separated.

\appsection{Benchmark Algorithms}
\label{app:algorithms}

\appsubsection{Constructive Heuristic}

We employ constructive heuristics for five combinatorial optimization tasks: TSP, CVRP, OP, JSSP, and QAP. A constructive heuristic builds a solution incrementally from an empty or partial state by repeatedly selecting the next feasible decision according to a problem-specific rule. In our framework, the LLM is responsible for generating this decision rule, while the overall construction procedure remains fixed. This design allows us to evaluate the quality of the generated heuristic independently of other implementation details of the solver.

For routing problems, the generated rule determines the next node to visit from the current partial route. For JSSP, it determines the next operation to schedule from the set of ready operations. For QAP, it determines the next facility-location pair to assign given the current partial assignment. The corresponding Python function signatures are listed below.

Function signature for TSP:
\begin{python}
def select_next_city(
    current: int,
    start: int,
    unvisited: set,
    dist_mat: np.ndarray,
) -> int
\end{python}

Function signature for CVRP:
\begin{python}
def select_next_customer(
    current: int,
    depot: int,
    feasible_customers: list[int],
    dist_mat: np.ndarray,
    demands: np.ndarray,
    remaining_capacity: float,
    vehicle_capacity: float,
) -> int
\end{python}

Function signature for OP:
\begin{python}
def select_next_node(
    current: int,
    depot: int,
    feasible_nodes: list[int],
    dist_mat: np.ndarray,
    prizes: np.ndarray,
    remaining_budget: float,
) -> int
\end{python}

Function signature for JSSP:
\begin{python}
def select_next_operation(
    ready_operations: list[tuple[int, int]],
    processing_times: np.ndarray,
    machine_assignments: np.ndarray,
    machine_available: np.ndarray,
    job_available: np.ndarray,
) -> tuple[int, int]
\end{python}

Function signature for QAP:
\begin{python}
def select_next_assignment(
    unassigned_facilities: list[int],
    unassigned_locations: list[int],
    flow_mat: np.ndarray,
    dist_mat: np.ndarray,
    current_assignment: dict,
) -> tuple[int, int]
\end{python}

\begin{table}[t]
\centering
\caption{Training/testing data generation for constructive TSP, CVRP, OP, JSSP, QAP.}
\label{tab:const_data_gen}
\setlength{\tabcolsep}{4pt}
\renewcommand{\arraystretch}{1.2}
\resizebox{\linewidth}{!}{%
\begin{tabular}{|l|C{2.3cm}|C{1.5cm}|p{9.8cm}|}
\hline
\textbf{Problem} & \textbf{Training set} & \textbf{Test set} & \textbf{Data generation} \\
\hline
\textbf{TSP} &
$5$ instances, $50$ cities &
None &
City coordinates are sampled independently from the uniform distribution on $[0,1)^2$. \\
\hline
\textbf{CVRP} &
$5$ instances, $50$ customers, capacity $50$ &
None &
Node coordinates are sampled independently from $[0,1)^2$, with the depot fixed at the center $(0.5,0.5)$. Customer demands are drawn uniformly from the integers $[1,14]$, while the depot demand is set to $0$. \\
\hline
\textbf{OP} &
$5$ instances, $80$ nodes &
None &
Node coordinates are sampled independently from $[0,1)^2$, with the depot fixed at $(0.5,0.5)$. Node prizes are drawn uniformly from the integers $[1,29]$, with depot prize $0$. The travel budget is set to $35\%$ of the greedy nearest-neighbor tour length. \\
\hline
\textbf{JSSP} &
$5$ instances, $15$ jobs, $10$ machines &
None &
Processing times are drawn uniformly from the integers $[1,99]$. For each job, the machine order is generated as a random permutation of all $10$ machines, so each job visits every machine exactly once. \\
\hline
\textbf{QAP} &
$5$ instances, size $15 \times 15$ &
None &
Flow matrices are generated from integer entries in $[0,9]$, then symmetrized with zero diagonal. Distance matrices are computed from Euclidean distances between randomly sampled points in $[0,1)^2$. \\
\hline
\end{tabular}%
}
\end{table}

\begin{table}[t]
\centering
\caption{Training/testing data generation for constructive CTS, FTR, OAS, WPF.}
\label{tab:sco_const_data_gen}
\setlength{\tabcolsep}{4pt}
\renewcommand{\arraystretch}{1.2}
\resizebox{\linewidth}{!}{%
\begin{tabular}{|l|C{2cm}|C{2.3cm}|p{9.4cm}|}
\hline
\textbf{Problem} & \textbf{Training set} & \textbf{Test set} & \textbf{Data generation} \\
\hline
\textbf{CTS} &
$5$ instances, $100$ talks &
$100$ instances, $100$ talks &
Talk qualities are sampled from a log-normal distribution. Topic embeddings are generated from $5$ latent cluster centers in an $8$-dimensional space: cluster centers are drawn from a standard normal distribution and normalized, each talk is assigned to a random cluster, and its embedding is obtained by adding Gaussian noise followed by $\ell_2$ normalization. \\
\hline
\textbf{FTR} &
$5$ instances, $100$ locations &
$100$ instances, $100$ locations &
Location coordinates are sampled independently from the uniform distribution on $[0,1)^2$. Location popularities are sampled from a log-normal distribution. \\
\hline
\textbf{OAS} &
$5$ instances, $100$ ads &
$100$ instances, $100$ ads &
Base ad values are sampled from a log-normal distribution. Fatigue rates are generated from a Beta$(2,5)$ distribution and then adjusted by a standardized correlation term derived from the base values, inducing a mild positive dependence between ad value and fatigue before clipping to $[0.02, 0.9]$. \\
\hline
\textbf{WPF} &
$5$ instances, $100$ orders &
$100$ instances, $100$ orders &
Base picking times are sampled uniformly from $[0.5, 3.0]$. Order values are generated from an affine function of base picking time plus absolute Gaussian noise, then multiplied by an independent uniform scaling factor in $[0.6, 1.6]$, and finally clipped below at $0.1$. \\
\hline
\end{tabular}%
}
\end{table}

We further consider four constructive decision-making tasks in SCO, where the LLM again generates the local rule used to extend a partial solution or action sequence under a fixed evaluation procedure. The corresponding Python function signatures are listed below.

Function signature for CTS:
\begin{python}
def select_next_talk(
    available_talks: list[int],
    qualities: np.ndarray,
    topics: np.ndarray,
    previous_topic: np.ndarray,
) -> int
\end{python}

Function signature for FTR:
\begin{python}
def select_next_stop(
    locations: np.ndarray,
    popularities: np.ndarray,
    steps_since_last_visit: np.ndarray,
    last_location: int,
) -> int
\end{python}

Function signature for OAS:
\begin{python}
def select_next_ad(
    base_values: np.ndarray,
    fatigue_rates: np.ndarray,
    fatigue_levels: np.ndarray,
    remaining_slots: int,
) -> int
\end{python}

Function signature for WPF:
\begin{python}
def select_next_order(
    available_orders: list[int],
    values: np.ndarray,
    base_times: np.ndarray,
    effective_budget: float,
) -> int
\end{python}

\appsubsection{Ant Colony Optimization}
\paragraph{Ant Colony Optimization (ACO).}
We employ ACO for TSP, CVRP, LVRP, and OVRP. ACO is a population-based metaheuristic in which a set of ants repeatedly constructs solutions by sampling feasible solution components according to pheromone trails and heuristic desirability, after which the pheromone values are updated using the constructed solutions. Let
\[
\mathbf{D} = (d_{ij}) \in \mathbb{R}^{n\times n}
\]
denote the distance matrix,
\[
\mathbf{T} = (\tau_{ij}) \in \mathbb{R}_{\ge 0}^{n\times n}
\]
the pheromone matrix, and
\[
\mathbf{H} = (\eta_{ij}) \in \mathbb{R}_{\ge 0}^{n\times n}
\]
the heuristic desirability matrix. In a standard ACO transition rule, the probability of selecting edge $(i,j)$ is proportional to
\[
\tau_{ij}^{\alpha}\eta_{ij}^{\beta},
\]
where $\alpha$ and $\beta$ control the relative influence of pheromone and heuristic information. Thus, pheromone trails are learned online while solving each instance, whereas heuristic measures are typically predefined and remain fixed during the search.

Our framework follows this general ACO setup, but replaces manually designed heuristics with an LLM-generated heuristic function that maps the instance data, represented primarily through $\mathbf{D}$ together with any task-specific side information, to a heuristic desirability matrix:
\[
\mathbf{H} = h(\mathbf{D}, \ldots).
\]
The generated matrix $\mathbf{H}$ therefore determines how promising each candidate move or edge is before pheromone adaptation takes place, while the remainder of the ACO procedure remains unchanged.

This design is conceptually related to DeepACO~\cite{ye2023deepaco}, which improves existing ACO algorithms by learning stronger heuristic measures across instances and using them to guide solution construction. However, unlike DeepACO, our method does not rely on a trained neural heuristic learner or its additional enhancement components; rather, it uses the LLM directly to generate the heuristic function for each problem.

A generic ACO procedure can be summarized as follows:
\begin{algorithm}[h]
\caption{Generic ant colony optimization procedure}
\label{alg:aco_generic}
\small
\begin{algorithmic}
\Require Problem instance, pheromone matrix initialization, heuristic matrix generator, number of ants, number of iterations
\Ensure Best solution found
\State Compute heuristic desirability matrix
\State Initialize pheromone matrix
\For{each iteration}
    \State Combine pheromone and heuristic into transition scores
    \For{each ant}
        \State Construct a feasible solution by sampling from the transition scores
        \State Evaluate the solution cost
    \EndFor
    \State Evaporate pheromone
    \State Deposit pheromone using the constructed solutions
\EndFor
\State \Return best solution found
\end{algorithmic}
\end{algorithm}

In our benchmark, the LLM-generated heuristic function is used in the step that computes the heuristic desirability matrix. Its role is to transform the raw instance data, such as distances, demands, or route constraints, into edge-level scores that guide the stochastic construction process of the ants.

\begin{table}[t]
\centering
\caption{Training/testing data generation for ACO TSP, CVRP, OVRP, LVRP.}
\label{tab:aco_data_gen}
\setlength{\tabcolsep}{4pt}
\renewcommand{\arraystretch}{1.2}
\resizebox{\linewidth}{!}{%
\begin{tabular}{|l|C{3.2cm}|C{3.5cm}|p{9.4cm}|}
\hline
\textbf{Problem} & \textbf{Training set} & \textbf{Test set} & \textbf{Data generation} \\
\hline
\textbf{TSP} &
$5$ instances, $50$ cities &
$100$ instances for each $n \in \{50,100,200,500\}$ &
City coordinates are sampled independently from the uniform distribution on $[0,1)^2$. The training and test sets follow the same distribution, with the test set evaluated at four problem sizes. \\
\hline
\textbf{CVRP} &
$5$ instances, $50$ customers, capacity $50$ &
$100$ instances for each $n \in \{50,100,200,500\}$ &
Node coordinates are sampled independently from $[0,1)^2$, with the depot fixed at the center $(0.5,0.5)$. Customer demands are drawn uniformly from the integers $[1,14]$, while the depot demand is set to $0$. Test-time vehicle capacities are size-dependent, namely $50$, $80$, $120$, and $200$ for $n=50,100,200,500$, respectively. \\
\hline
\textbf{OVRP} &
$5$ instances, $50$ customers, capacity $50$ &
$100$ instances for each $n \in \{50,100,200,500\}$ &
The instance distribution is identical to that of CVRP: node coordinates are sampled uniformly from $[0,1)^2$, the depot is fixed at $(0.5,0.5)$, and customer demands are drawn uniformly from the integers $[1,14]$. Test-time capacities again depend on problem size: $50$, $80$, $120$, and $200$. \\
\hline
\textbf{LVRP} &
$5$ instances, $50$ customers, capacity $50$ &
$100$ instances for each $n \in \{50,100,200,500\}$ &
Instances are generated as in CVRP, with node coordinates sampled from $[0,1)^2$, the depot fixed at $(0.5,0.5)$, and customer demands drawn uniformly from the integers $[1,14]$. In addition, each instance is assigned a route-duration limit equal to $40\%$ of the greedy nearest-neighbor tour length. Test-time vehicle capacities are $50$, $80$, $120$, and $200$ for $n=50,100,200,500$, respectively. \\
\hline
\end{tabular}%
}
\end{table}

Function signature for TSP:
\begin{python}
def compute_heuristic_matrix(
    dist_mat: np.ndarray,
) -> np.ndarray
\end{python}

Function signature for CVRP:
\begin{python}
def compute_heuristic_matrix(
    dist_mat: np.ndarray,
    demands: np.ndarray,
    vehicle_capacity: float,
) -> np.ndarray
\end{python}

Function signature for OVRP:
\begin{python}
def compute_heuristic_matrix(
    dist_mat: np.ndarray,
    demands: np.ndarray,
    vehicle_capacity: float,
) -> np.ndarray
\end{python}

Function signature for LVRP:
\begin{python}
def compute_heuristic_matrix(
    dist_mat: np.ndarray,
    demands: np.ndarray,
    vehicle_capacity: float,
    max_duration: float,
) -> np.ndarray
\end{python}

\appsubsection{Neural Combinatorial Optimization}

\paragraph{Neural Combinatorial Optimization (NCO).}
NCO aims to solve combinatorial optimization problems using neural networks that either construct solutions directly or guide downstream search procedures. In this work, we consider two representative NCO solvers for Euclidean TSP, namely POMO~\cite{kwon2020pomo} and DIFUSCO~\cite{sun2023difusco}, and study how the LLM can enhance them through instance-specific edge biases. Let $V=\{1,\dots,n\}$ be the set of cities, let $x_i \in \mathbb{R}^2$ denote the coordinate of city $i$, and let
\[
\mathbf{D} = (d_{ij}) \in \mathbb{R}^{n\times n},
\qquad
d_{ij} = \|x_i-x_j\|_2,
\]
be the distance matrix. In both cases, the pretrained neural solver is kept fixed, while the LLM generates an auxiliary function that maps $\mathbf{D}$ to an edge-bias matrix
\[
\mathbf{B} = h(\mathbf{D}) \in \mathbb{R}^{n\times n}.
\]
This matrix is then injected into the inference procedure of the neural solver.

POMO is an autoregressive RL-based constructive solver that exploits the symmetry of routing solutions through multiple parallel rollouts from different starting nodes. If $\tau^{(k)}=(a_1^{(k)},\dots,a_n^{(k)})$ denotes the $k$-th rollout, then POMO optimizes a policy $\pi_\theta$ using a shared-baseline REINFORCE objective of the form
\[
\nabla_\theta J(\theta)
\approx
\frac{1}{K}\sum_{k=1}^K
\bigl(R(\tau^{(k)})-\bar{R}\bigr)
\nabla_\theta \log p_\theta(\tau^{(k)} \mid \mathbf{D}),
\qquad
\bar{R}=\frac{1}{K}\sum_{k=1}^K R(\tau^{(k)}),
\]
where $K$ is the number of parallel rollouts. In the original node-based decoder, if the current city at step $t$ is $i$, the model produces logits $z_t(i,j)$ over feasible next cities $j$. Our method augments these logits with the LLM-generated edge bias:
\[
\tilde{z}_t(i,j) = z_t(i,j) + B_{ij}.
\]
The transition probabilities are then computed from the modified logits over the feasible set $\mathcal{F}_t$,
\[
\pi(a_t=j \mid a_{<t}, \mathbf{D})
=
\frac{\exp(\tilde{z}_t(i,j))}
{\sum_{\ell \in \mathcal{F}_t} \exp(\tilde{z}_t(i,\ell))}.
\]
Thus, the LLM does not replace the pretrained POMO policy; instead, it reshapes the decoder's preference over outgoing edges in an instance-specific manner. The final solution is chosen as the best tour among the parallel POMO rollouts.

DIFUSCO, in contrast, is a non-autoregressive graph-based diffusion solver. For TSP, a tour is represented by a binary symmetric edge-indicator matrix
\[
\mathbf{X} \in \{0,1\}^{n\times n},
\]
where $X_{ij}=1$ indicates that edge $(i,j)$ belongs to the tour. DIFUSCO learns a denoising model that gradually recovers a clean tour-edge structure from noisy edge variables, and at inference time it outputs an edge heatmap
\[
\mathbf{H}_\theta(\mathbf{D}) \in [0,1]^{n\times n},
\]
whose entries indicate how likely each edge is to belong to a high-quality tour. In our framework, we keep the pretrained diffusion model fixed and combine its heatmap with the LLM-generated bias:
\[
\mathbf{S} = \mathbf{H}_\theta(\mathbf{D}) + \mathbf{B}.
\]
The resulting score matrix $\mathbf{S}$ is then decoded into a feasible tour by a greedy edge-based decoder, followed by local search refinement. Therefore, the role of the LLM is to provide an additional structured prior over edges, complementing the diffusion model's learned heatmap rather than replacing it.

In summary, both enhancements use the same high-level idea of generating an instance-specific edge prior, but they intervene at different stages of inference: in POMO, the bias modifies autoregressive transition logits during sequential decoding, whereas in DIFUSCO, it modifies the final edge heatmap before decoding and local search.

\begin{table}[t]
\centering
\caption{Pretraining settings for NCO TSP node-based (POMO) and edge-based (DIFUSCO) solvers.}
\label{tab:nco_tsp_pretrain}
\setlength{\tabcolsep}{4pt}
\renewcommand{\arraystretch}{1.2}
\resizebox{\linewidth}{!}{%
\begin{tabular}{|l|C{2.5cm}|C{3.8cm}|C{2cm}|C{2.5cm}|p{8.8cm}|}
\hline
\textbf{Method} & \textbf{Representation} & \textbf{Pretraining data} & \textbf{Problem size} & \textbf{Training regime} & \textbf{Main settings} \\
\hline
\textbf{POMO} &
Node-based &
Online random Euclidean TSP instances sampled uniformly from $[0,1)^2$ &
$100$ &
RL pretraining &
Transformer-style POMO model with embedding dimension $128$, $6$ encoder layers, $8$ attention heads, QKV dimension $16$, feed-forward dimension $512$, and logit clipping $10$. Trained with Adam (learning rate $10^{-4}$, weight decay $10^{-6}$), batch size $128$, $10{,}000$ episodes per epoch, and $1000$ epochs; checkpoints are saved every $100$ epochs. \\
\hline
\textbf{DIFUSCO} &
Edge-based &
Online random Euclidean TSP instances sampled uniformly from $[0,1)^2$; each instance is solved optimally with \texttt{elkai} to obtain target tour edges &
$100$ &
Supervised diffusion pretraining &
Discrete diffusion model with an anisotropic GNN backbone using $8$ layers, hidden dimension $128$, and diffusion horizon $T=1000$. Trained with Adam (learning rate $2\times10^{-4}$, weight decay $10^{-4}$), gradient clipping $1.0$, batch size $16$, $10{,}000$ episodes per epoch, and $1000$ epochs; cosine learning-rate decay and checkpointing every $10$ epochs are used. \\
\hline
\end{tabular}%
}
\end{table}

\begin{table}[t]
\centering
\caption{Training/testing data generation for NCO TSP multi-distribution.}
\label{tab:nco_tsp_multidist}
\setlength{\tabcolsep}{4pt}
\renewcommand{\arraystretch}{1.2}
\resizebox{\linewidth}{!}{%
\begin{tabular}{|l|C{2.8cm}|C{3.2cm}|p{8.7cm}|}
\hline
\textbf{Distribution} & \textbf{Training set} & \textbf{Test set} & \textbf{Data generation} \\
\hline
\textbf{Uniform} &
$5$ instances, $100$ cities &
$100$ instances for each $n \in \{100,200,500\}$ &
City coordinates are sampled independently from the uniform distribution on $[0,1)^2$. \\
\hline
\textbf{Clustered} &
$5$ instances, $100$ cities &
$100$ instances for each $n \in \{100,200,500\}$ &
Cities are generated from $6$ Gaussian clusters whose centers are arranged on a circle of radius $0.32$ around $(0.5,0.5)$. Cluster perturbations use standard deviation $0.055$, and coordinates are clipped to $[0,1]^2$. \\
\hline
\textbf{Diagonal} &
$5$ instances, $100$ cities &
$100$ instances for each $n \in \{100,200,500\}$ &
Cities are sampled around the main diagonal of the unit square. A latent position is drawn uniformly on the diagonal, then perturbed along the perpendicular direction by Gaussian noise with standard deviation $0.13$; points outside $[0,1]^2$ are rejected and resampled. \\
\hline
\textbf{Barbell} &
$5$ instances, $100$ cities &
$100$ instances for each $n \in \{100,200,500\}$ &
Instances contain two dense endpoint clusters and a narrow bridge. Specifically, $40\%$ of cities are sampled around $(0.22,0.5)$, $40\%$ around $(0.78,0.5)$, both with Gaussian spread $(0.07, 0.11)$, and the remaining $20\%$ are placed along a bridge between $x=0.34$ and $x=0.66$ with small Gaussian perturbations. Coordinates are clipped to $[0,1]^2$. \\
\hline
\end{tabular}%
}
\end{table}

Function signature for POMO-enhanced TSP:
\begin{python}
def compute_edge_bias(
    distance_matrix: torch.Tensor,
) -> torch.Tensor
\end{python}

Function signature for DIFUSCO-enhanced TSP:
\begin{python}
def compute_edge_bias(
    distance_matrix: torch.Tensor,
) -> torch.Tensor
\end{python}

\appsubsection{Guided Local Search}

\paragraph{Guided Local Search (GLS).}
We employ GLS for TSP. GLS is a metaheuristic that augments local search with adaptive penalties in order to escape poor local optima. Let
\[
\mathbf{D} = (d_{ij}) \in \mathbb{R}^{n\times n}
\]
be the distance matrix, and let a tour be represented by a permutation $\pi$ of the cities. Standard local search seeks to minimize the tour length
\[
f(\pi) = \sum_{t=1}^{n} d_{\pi_t,\pi_{t+1}},
\]
with $\pi_{n+1}=\pi_1$. GLS introduces an edge-penalty matrix
\[
\mathbf{P} = (P_{ij}) \in \mathbb{R}_{\ge 0}^{n\times n},
\]
and instead performs search on the augmented objective
\[
f_{\mathrm{aug}}(\pi)
=
f(\pi)
+
\lambda \sum_{(i,j)\in E(\pi)} P_{ij},
\]
where $E(\pi)$ is the set of tour edges and $\lambda > 0$ is a penalty weight. During the search, selected edges in the current local optimum are penalized, thereby modifying the landscape and encouraging the local search procedure to move toward different regions of the solution space.

In our benchmark, the LLM generates a nonnegative guide matrix
\[
\mathbf{G} = h(\mathbf{D}) \in \mathbb{R}_{\ge 0}^{n\times n},
\]
which determines how strongly each edge should be prioritized for penalization. Given the current tour, the utility of an edge $(i,j)$ is computed as
\[
u_{ij} = \frac{G_{ij}}{1 + P_{ij}}.
\]
Edges with large utility are penalized more aggressively. Hence, the generated guide matrix does not directly define the tour; rather, it controls which structures in the current local optimum are most likely to be broken during perturbation.

A generic GLS procedure can be summarized as follows:
\begin{algorithm}[h]
\caption{Generic guided local search procedure}
\label{alg:gls_generic}
\small
\begin{algorithmic}
\Require Problem instance, penalty-guide generator, local search routine, perturbation budget, iteration limit
\Ensure Best solution found
\State Compute penalty-guide matrix
\State Initialize penalty matrix
\State Construct an initial solution
\State Improve the solution by local search
\For{each iteration}
    \State Evaluate edge utilities using the guide and current penalties
    \State Penalize selected edges in the current solution
    \State Update the augmented objective
    \State Re-optimize the perturbed solution by local search
    \State Update the best solution found
\EndFor
\State \Return best solution found
\end{algorithmic}
\end{algorithm}

In our setting, the LLM-generated guide matrix is computed once per instance and then kept fixed throughout the GLS run. Its role is to provide an instance-specific prior over which edges should be discouraged when the search becomes trapped in a local optimum.

Function signature for TSP:
\begin{python}
def compute_penalty_guide(
    dist_mat: np.ndarray,
) -> np.ndarray
\end{python}

\begin{table}[t]
\centering
\caption{Training and test instances for GLS TSP.}
\label{tab:gls_tsp_data}
\setlength{\tabcolsep}{4pt}
\renewcommand{\arraystretch}{1.2}
\resizebox{\linewidth}{!}{%
\begin{tabular}{|l|C{2.4cm}|C{1.8cm}|p{12.8cm}|}
\hline
\textbf{Split} & \textbf{Instance range} & \textbf{\# Instances} & \textbf{Description} \\
\hline
\textbf{Training} & $n=200$ & $10$ & Random Euclidean TSP instances with city coordinates sampled independently from the uniform distribution on $[0,1)^2$. \\
\hline
\textbf{Test (TSPLib)} & $100 \leq n < 200$ & $25$ & \texttt{kroA100}, \texttt{kroB100}, \texttt{kroC100}, \texttt{kroD100}, \texttt{kroE100}, \texttt{rd100}, \texttt{eil101}, \texttt{lin105}, \texttt{pr107}, \texttt{gr120}, \texttt{pr124}, \texttt{bier127}, \texttt{ch130}, \texttt{pr136}, \texttt{gr137}, \texttt{pr144}, \texttt{ch150}, \texttt{kroA150}, \texttt{kroB150}, \texttt{pr152}, \texttt{u159}, \texttt{si175}, \texttt{brg180}, \texttt{rat195}, \texttt{d198}. \\
\hline
\textbf{Test (TSPLib)} & $200 \leq n < 500$ & $19$ & \texttt{kroA200}, \texttt{kroB200}, \texttt{gr202}, \texttt{ts225}, \texttt{tsp225}, \texttt{pr226}, \texttt{gr229}, \texttt{gil262}, \texttt{pr264}, \texttt{a280}, \texttt{pr299}, \texttt{lin318}, \texttt{linhp318}, \texttt{rd400}, \texttt{fl417}, \texttt{gr431}, \texttt{pr439}, \texttt{pcb442}, \texttt{d493}. \\
\hline
\textbf{Test (TSPLib)} & $500 \leq n < 1000$ & $11$ & \texttt{att532}, \texttt{ali535}, \texttt{si535}, \texttt{pa561}, \texttt{u574}, \texttt{rat575}, \texttt{p654}, \texttt{d657}, \texttt{gr666}, \texttt{u724}, \texttt{rat783}. \\
\hline
\end{tabular}%
}
\end{table}

\appsubsection{Greedy Randomized Adaptive Search Procedure}

\paragraph{Greedy Randomized Adaptive Search Procedure (GRASP).}
We employ GRASP for four DLP tasks: $p$-median, $p$-center, $p$-cover, and $p$-dispersion. GRASP is a multi-start metaheuristic that alternates between a greedy-randomized construction phase and a local search phase. Let
\[
\mathbf{D} = (d_{ij}) \in \mathbb{R}^{n\times n}
\]
denote the distance matrix, and let
\[
S \subseteq N, \qquad |S|=p,
\]
denote the set of selected facilities. In each GRASP iteration, a candidate solution is first constructed incrementally by selecting elements from a restricted candidate list (RCL), and is then refined by local search, typically through 1-swap moves. The best solution over all iterations is returned.

In our benchmark, the LLM generates a static guidance function that shapes the construction phase of GRASP. Depending on the problem, this guidance takes the form of either a node-score vector
\[
\mathbf{s} = h(\mathbf{D},\ldots) \in \mathbb{R}^{n}
\]
or a pairwise guide matrix
\[
\mathbf{G} = h(\mathbf{D}) \in \mathbb{R}^{n\times n}.
\]
For $p$-median, $p$-center, and $p$-cover, the generated node scores provide a prior ranking over candidate facilities. For $p$-dispersion, the generated matrix provides a prior over desirable pairs of selected locations. During construction, the solver combines this static guidance with dynamic information from the current partial solution to build an RCL, from which one candidate is sampled at random.

A generic GRASP procedure can be summarized as follows:
\begin{algorithm}[h]
\caption{Generic GRASP procedure}
\label{alg:grasp_generic}
\small
\begin{algorithmic}
\Require Problem instance, guidance generator, number of iterations, local search routine
\Ensure Best solution found
\State Compute static guidance
\For{each iteration}
    \State Initialize an empty partial solution
    \While{the solution is incomplete}
        \State Build a restricted candidate list using greedy scores
        \State Randomly select one candidate from the restricted candidate list
        \State Extend the partial solution
    \EndWhile
    \State Improve the constructed solution by local search
    \State Update the best solution found
\EndFor
\State \Return best solution found
\end{algorithmic}
\end{algorithm}

In our setting, the LLM-generated guidance is computed once per instance and then kept fixed throughout the GRASP run. Its role is not to replace the adaptive behavior of GRASP, but to provide an instance-specific prior that biases the greedy-randomized construction toward more promising facilities or facility pairs.

Function signature for $p$-Center:
\begin{python}
def compute_node_scores(
    dist_mat: np.ndarray,
) -> np.ndarray
\end{python}

Function signature for $p$-Cover:
\begin{python}
def compute_node_scores(
    dist_mat: np.ndarray,
    demands: np.ndarray,
    radius: float,
) -> np.ndarray
\end{python}

Function signature for $p$-Dispersion:
\begin{python}
def compute_guide_matrix(
    dist_mat: np.ndarray,
) -> np.ndarray
\end{python}

Function signature for $p$-Median:
\begin{python}
def compute_node_scores(
    dist_mat: np.ndarray,
) -> np.ndarray
\end{python}

\begin{table}[t]
\centering
\caption{Training/testing data generation for GRASP $p$-Center, $p$-Cover, $p$-Dispersion, $p$-Median.}
\label{tab:grasp_p_data_gen}
\setlength{\tabcolsep}{4pt}
\renewcommand{\arraystretch}{1.2}
\resizebox{\linewidth}{!}{%
\begin{tabular}{|l|C{2.2cm}|C{3.2cm}|p{9.5cm}|}
\hline
\textbf{Problem} & \textbf{Training set} & \textbf{Test set} & \textbf{Data generation} \\
\hline
\textbf{$p$-Center} &
$5$ instances, $100$ nodes, $p=10$ &
$100$ instances for each $n \in \{100,200,500\}$ &
Node coordinates are sampled independently from the uniform distribution on $[0,1)^2$. The number of selected facilities is fixed to $p=10$ for training, and set to $p=\max(10,\lfloor n/20 \rfloor)$ for testing. \\
\hline
\textbf{$p$-Cover} &
$5$ instances, $100$ nodes, $p=10$ &
$100$ instances for each $n \in \{100,200,500\}$ &
Node coordinates are sampled independently from $[0,1)^2$. Node demands are sampled as $0.5 + U(0,1)$, yielding values in $[0.5,1.5)$. The coverage radius is set to $1.8/\sqrt{n}$. The number of selected facilities is fixed to $p=10$ for training, and set to $p=\max(10,\lfloor n/20 \rfloor)$ for testing. \\
\hline
\textbf{$p$-Dispersion} &
$5$ instances, $100$ nodes, $p=10$ &
$100$ instances for each $n \in \{100,200,500\}$ &
Node coordinates are sampled independently from the uniform distribution on $[0,1)^2$. The subset size is fixed to $p=10$ for training, and set to $p=\max(10,\lfloor n/10 \rfloor)$ for testing. \\
\hline
\textbf{$p$-Median} &
$5$ instances, $100$ nodes, $p=10$ &
$100$ instances for each $n \in \{100,200,500\}$ &
Node coordinates are sampled independently from the uniform distribution on $[0,1)^2$. The number of selected facilities is fixed to $p=10$ for training, and set to $p=\max(10,\lfloor n/20 \rfloor)$ for testing. \\
\hline
\end{tabular}%
}
\end{table}

\appsubsection{SR Algorithm}
For the SR benchmarks described above, the LLM is responsible only for proposing the symbolic structure of the target equation. After the LLM generates a functional form $f$, its numerical parameters $\boldsymbol{\theta}$ are fitted on the observed data using the BFGS algorithm, and the resulting expression is evaluated by the NMSE metric defined above. Hence, the search space consists of symbolic expressions, while constant estimation is handled by a fixed downstream optimizer.

Function signature for the damped nonlinear oscillator:
\begin{python}
def equation(
    x: np.ndarray,
    v: np.ndarray,
    params: np.ndarray,
) -> np.ndarray
\end{python}

Function signature for the driven damped nonlinear oscillator:
\begin{python}
def equation(
    t: np.ndarray,
    x: np.ndarray,
    v: np.ndarray,
    params: np.ndarray,
) -> np.ndarray
\end{python}

Function signature for bacterial growth rate:
\begin{python}
def equation(
    b: np.ndarray,
    s: np.ndarray,
    temp: np.ndarray,
    pH: np.ndarray,
    params: np.ndarray,
) -> np.ndarray
\end{python}

Function signature for the stress--strain relation:
\begin{python}
def equation(
    strain: np.ndarray,
    temp: np.ndarray,
    params: np.ndarray,
) -> np.ndarray
\end{python}

\begin{table}[t]
\centering
\caption{Evaluation settings for SR benchmarks.}
\label{tab:sr_eval_settings}
\setlength{\tabcolsep}{4pt}
\renewcommand{\arraystretch}{1.2}
\resizebox{\linewidth}{!}{%
\begin{tabular}{|l|p{4.4cm}|p{3.8cm}|C{1.5cm}|C{1.6cm}|C{1.8cm}|}
\hline
\textbf{Problem} & \textbf{Inputs} & \textbf{Prediction target} & \textbf{Train} & \textbf{Test-ID} & \textbf{Test-OOD} \\
\hline
\textbf{Bacterial Growth} &
$b$ (population density), $s$ (substrate), temperature, pH &
Growth rate $\mathrm{d}B/\mathrm{d}t$ &
7501 & 2501 & 15001 \\
\hline
\textbf{Oscillator-1} &
$x$ (position), $v$ (velocity) &
Acceleration $\mathrm{d}v/\mathrm{d}t$ &
10001 & 10001 & 10001 \\
\hline
\textbf{Oscillator-2} &
$t$ (time), $x$ (position), $v$ (velocity) &
Acceleration $\mathrm{d}v/\mathrm{d}t$ &
10001 & 10001 & 10001 \\
\hline
\textbf{Stress-Strain} &
Strain, temperature &
Stress &
2162 & 1443 & 739 \\
\hline
\end{tabular}%
}
\end{table}

\appsubsection{PE Algorithm}

For the PE benchmarks described above, we instantiate the search procedure as sequential offline Bayesian optimization over the given candidate pool. Let $\mathcal{C}$, $\mathcal{D}_0$, and $B$ denote the benchmark-specific candidate set, initial support set, and query budget. At round $t=1,\dots,B$, the algorithm selects one previously untested sequence
\[
x_t \in \mathcal{C} \setminus \{x_1,\dots,x_{t-1}\},
\]
observes its fitness $y_t$, and updates the observed set
\[
\mathcal{D}_t = \mathcal{D}_{t-1} \cup \{(x_t,y_t)\}.
\]

In standard BO, one would construct an acquisition function from a surrogate posterior mean and uncertainty estimate. Our framework follows the same principle, but delegates the design of the acquisition rule to the LLM. More precisely, for each candidate $x \in \mathcal{C}\setminus \mathcal{D}_{t-1}$, a lightweight surrogate built from the observed set produces a predicted mean $\mu_t(x)$ and uncertainty estimate $\sigma_t(x)$. Let
\[
y_t^\star = \max_{(x,y)\in \mathcal{D}_{t-1}} y
\]
denote the current incumbent. We then define the standardized improvement score
\[
z_t(x) = \frac{\mu_t(x)-y_t^\star}{\max(\sigma_t(x),\varepsilon)},
\]
where $\varepsilon>0$ is a small constant for numerical stability. In addition, each candidate is assigned a plausibility score $\rho_t(x)$, which measures its compatibility with the support set, and a progress variable
\[
p_t = \frac{t-1}{B},
\]
which indicates how much of the budget has already been consumed.

The LLM generates an acquisition function
\[
a:\mathbb{R}^4 \to \mathbb{R},
\qquad
(z,\sigma,\rho,p) \mapsto a(z,\sigma,\rho,p),
\]
and candidate selection is performed by
\[
x_t
=
\arg\max_{x \in \mathcal{C}\setminus \mathcal{D}_{t-1}}
a\bigl(z_t(x),\sigma_t(x),\rho_t(x),p_t\bigr).
\]
Hence, the LLM does not directly predict protein fitness. Instead, it specifies how exploitation, uncertainty, biological plausibility, and search progress should be balanced inside the BO loop.

The surrogate signals are constructed differently for different PE settings, but the BO interface remains the same. For fixed-length sequence tasks such as alpha amylase and IRED, similarity is computed directly in sequence space, and the surrogate statistics are derived from weighted nearest neighbors among previously observed sequences. For variable-length tasks such as hydrophobic-core design and rhodopsin, sequences are first mapped to feature vectors before the same type of nearest-neighbor surrogate is applied. In both cases, the LLM receives only the four candidate-wise signals $(z,\sigma,\rho,p)$ and produces the acquisition scores used for sequential selection.

Performance is evaluated after the full budget is exhausted. Let $\{y_1,\dots,y_B\}$ be the discovered fitness values, and let $K$ denote the number of top discoveries retained for evaluation. The achieved score is
\[
\mathrm{TopKAvg}
=
\frac{1}{K}
\sum_{k=1}^{K} y_{(k)},
\]
where $y_{(1)} \ge \cdots \ge y_{(K)}$ are the top-$K$ discovered values. In our implementation, $B=32$ and $K=8$, and results are reported through the percentage optimality gap to the oracle top-$8$ mean for the given benchmark split.

Shared function signature for all PE benchmarks:
\begin{python}
def acquisition(
    z: np.ndarray,
    sigma: np.ndarray,
    plausibility: np.ndarray,
    progress: np.ndarray,
) -> np.ndarray
\end{python}

An EI-style baseline can be written as
\[
a_{\mathrm{EI}}(z,\sigma,\rho,p)
=
\rho \, \sigma \, \phi(z)
+
\rho \, \sigma \, z \Phi(z),
\]
where $\phi$ and $\Phi$ denote the standard normal density and cumulative distribution functions, respectively. This is the usual expected-improvement expression rewritten in terms of the standardized improvement signal $z$ and the uncertainty scale $\sigma$, with the plausibility score $\rho$ acting as a multiplicative compatibility gate.

A UCB-style baseline can be written as
\[
a_{\mathrm{UCB}}(z,\sigma,\rho,p)
=
\rho \bigl(\mu(z,\sigma) + \beta(p)\sigma \bigr),
\qquad
\mu(z,\sigma)=f_t^{\star}+z\sigma,
\]
where $f_t^{\star}$ is the incumbent value at BO step $t$, and $\beta(p)$ is a progress-dependent exploration weight. Under the PE interface, this is equivalently
\[
a_{\mathrm{UCB}}(z,\sigma,\rho,p)
=
\rho \bigl(z + \beta(p)\bigr)\sigma,
\]
up to the additive constant $\rho f_t^{\star}$, which does not affect the ranking of candidates at a fixed BO step. In practice, $\beta(p)$ may be chosen to decrease with progress, so that exploration is emphasized early in the budget and gradually reduced as the search proceeds.

\newpage

\appsection{Experimental Setup}
\label{app:setup}

\appsubsection{Implementation Details}

\begin{tcolorbox}[
  enhanced,
  colback=white,
  colframe=gray!55,
  boxrule=0.6pt,
  arc=5pt,
  left=8pt,
  right=8pt,
  top=6pt,
  bottom=6pt
]
\textbf{We plan to publicly release the source code, experimental settings, and all datasets used in this work.} They are omitted from the preprint to avoid disclosing identifying information and to respect confidentiality constraints associated with the current manuscript.
\end{tcolorbox}

\paragraph{\emph{Top-Down Population-Based Search}.}
We implement the population-based top-down variant by modifying the ReEvo~\citep{ye2024reevo} scaffold while keeping the outer evolutionary protocol fixed. The bottom-up side follows ReEvo's code-first workflow, whereas our top-down side replaces the primary population state from executable code $\theta \in \Theta$ to knowledge $K \in \mathcal{K}$, consistent with the methodology in Section~\ref{main:method}. We also write $A=\alpha(\theta)$ for the heuristic artifact observed by the LLM; in our implementation, $A$ is the generated source-code artifact associated with the executable heuristic $\theta$. In both paradigms, empirical fitness is obtained only by executing code and computing $\widehat{L}_{\mathcal{D}}(\theta)$.

\newcommand{\BUCX}{Q_{\phi,t}^{\Theta}(\cdot\mid H^{t-1},A_i^-,A_i^+,r_i^t)}
\newcommand{\BUMT}{Q_{\phi,t}^{\Theta}(\cdot\mid H^{t,\mathrm{CX}},\theta_t^\star,R_t)}
\newcommand{\TDCX}{Q_{\phi,t}^{\mathcal{K},\Theta}(\cdot\mid H^{t-1},K_i^-,K_i^+,r_i^t)}
\newcommand{\TDMT}{Q_{\phi,t}^{\mathcal{K},\Theta}(\cdot\mid H^{t,\mathrm{CX}},K_t^\star,R_t)}

\begin{figure}[h]
\centering

\begin{minipage}[t]{0.49\linewidth}
\captionsetup{type=algorithm}
\rule{\linewidth}{1pt}\vspace{-5pt}
\captionof{algorithm}{Bottom-up population-based AHD}
\vspace{-5pt}\rule{\linewidth}{1pt}\vspace{5pt}
\label{alg:ea_bu_impl}
\resizebox{\linewidth}{!}{%
\begin{minipage}{1.12\linewidth}
\begin{algorithmic}
\State $\mathcal{P}_{\Theta}^{0}\gets\textsc{Init}_{\Theta}(M_0,H^0)$
\State $\mathcal{P}_{\Theta}^{0}\gets\textsc{Eval}_{\mathcal{D}}(\mathcal{P}_{\Theta}^{0})$
\For{$t=1,\ldots,T$}
    \State $\mathcal{B}^t\gets\textsc{Pair}(\mathcal{P}_{\Theta}^{t-1},M)$
    \State $r_i^t\gets\Psi_{\mathrm{BU}}(A_i^-,A_i^+,\widehat{L}_i^-,\widehat{L}_i^+)$
    \State $\theta_{i,\mathrm{CX}}^t\sim\BUCX$
    \State $\widehat{L}_{i,\mathrm{CX}}^t\gets\widehat{L}_{\mathcal{D}}(\theta_{i,\mathrm{CX}}^t)$
    \State $\mathcal{P}_{\Theta}^{t,\mathrm{CX}}\gets\textsc{Replace}(\mathcal{C}_{\Theta}^{t,\mathrm{CX}})$
    \State $R_t\gets\textsc{LT}(R_{t-1},\{r_i^t\}_{i=1}^{M})$
    \State $\theta_{j,\mathrm{MT}}^t\sim\BUMT$
    \State $\widehat{L}_{j,\mathrm{MT}}^t\gets\widehat{L}_{\mathcal{D}}(\theta_{j,\mathrm{MT}}^t)$
    \State $\mathcal{P}_{\Theta}^{t}\gets\textsc{Extend}(\mathcal{P}_{\Theta}^{t,\mathrm{CX}},\mathcal{C}_{\Theta}^{t,\mathrm{MT}})$
    \State $H^t\gets\textsc{Update}_{\mathrm{BU}}(H^{t-1},\mathcal{P}_{\Theta}^{t},R_t)$
\EndFor
\State \Return $\widehat{\theta}_{\mathrm{BU}}\gets\textsc{Best}(\mathcal{P}_{\Theta}^{T})$
\end{algorithmic}
\end{minipage}}
\end{minipage}
\hfill
\begin{minipage}[t]{0.49\linewidth}
\captionsetup{type=algorithm}
\rule{\linewidth}{1pt}\vspace{-5pt}
\captionof{algorithm}{Top-down population-based AHD}
\vspace{-5pt}\rule{\linewidth}{1pt}\vspace{5pt}
\label{alg:ea_td_impl}
\resizebox{\linewidth}{!}{%
\begin{minipage}{1.12\linewidth}
\begin{algorithmic}
\State $\mathcal{P}_{\mathcal{K}}^{0}\gets\textsc{Init}_{\mathcal{K}}(M_0,H^0)$
\State $\mathcal{P}_{\mathcal{K}}^{0}\gets\textsc{RealizeEval}_{\mathcal{D}}(\mathcal{P}_{\mathcal{K}}^{0})$
\For{$t=1,\ldots,T$}
    \State $\mathcal{B}^t\gets\textsc{Pair}(\mathcal{P}_{\mathcal{K}}^{t-1},M)$
    \State $r_i^t\gets\Psi_{\mathrm{TD}}(K_i^-,K_i^+,\widehat{L}_i^-,\widehat{L}_i^+)$
    \State $(K_{i,\mathrm{CX}}^t,\theta_{i,\mathrm{CX}}^t)\sim\TDCX$
    \State $\widehat{L}_{i,\mathrm{CX}}^t\gets\widehat{L}_{\mathcal{D}}(\theta_{i,\mathrm{CX}}^t)$
    \State $\mathcal{P}_{\mathcal{K}}^{t,\mathrm{CX}}\gets\textsc{Replace}(\mathcal{C}_{\mathcal{K}}^{t,\mathrm{CX}})$
    \State $R_t\gets\textsc{LT}(R_{t-1},\{r_i^t\}_{i=1}^{M})$
    \State $(K_{j,\mathrm{MT}}^t,\theta_{j,\mathrm{MT}}^t)\sim\TDMT$
    \State $\widehat{L}_{j,\mathrm{MT}}^t\gets\widehat{L}_{\mathcal{D}}(\theta_{j,\mathrm{MT}}^t)$
    \State $\mathcal{P}_{\mathcal{K}}^{t}\gets\textsc{Extend}(\mathcal{P}_{\mathcal{K}}^{t,\mathrm{CX}},\mathcal{C}_{\mathcal{K}}^{t,\mathrm{MT}})$
    \State $H^t\gets\textsc{Update}_{\mathrm{TD}}(H^{t-1},\mathcal{P}_{\mathcal{K}}^{t},R_t)$
\EndFor
\State \Return $(\widehat{K}_{\mathrm{TD}},\widehat{\theta}_{\mathrm{TD}})\gets\textsc{Best}(\mathcal{P}_{\mathcal{K}}^{T})$
\end{algorithmic}
\end{minipage}}
\end{minipage}

\end{figure}
\vspace{-15pt}
\begin{figure}[h]
\centering
\begin{minipage}[t]{0.49\linewidth}
\rule{\linewidth}{1pt}
\end{minipage}
\hfill
\begin{minipage}[t]{0.49\linewidth}
\rule{\linewidth}{1pt}
\end{minipage}
\end{figure}
\vspace{-10 pt}
\paragraph{Top-Down State.}
The proposed variant stores each individual as
\[
    z=(K,\theta,\widehat{L}),\qquad 
    \widehat{L}=\widehat{L}_{\mathcal{D}}(\theta),
\]
where $K$ is the searched knowledge state, $\theta$ is its executable realization, and $A$ is the corresponding source-code artifact. Selection is based on $\widehat{L}$, but variation is applied primarily at the knowledge level. In principle, top-down search requires a realization step $K\mapsto\theta$. To avoid adding extra LLM calls for this step, we use structured output and configure each generation call to return two fields, \texttt{knowledge} and \texttt{code}, in a single response: the model first produces $K$, then produces $\theta$ as an implementation of that same knowledge state. Thus, crossover produces $(K_{i,\mathrm{CX}}^t,\theta_{i,\mathrm{CX}}^t)$ in one call, and mutation similarly refines the elitist knowledge $K_t^\star$ and realizes it as code in one call.

\paragraph{Matched Budget.}
For population size $M$ and mutation rate $\mu$, the number of offspring generated by the crossover (CX) operator is fixed to $M$ (i.e., crossover rate equals 1), while the number of offspring generated by the mutation (MT) operator is defined as $N=\max(1,\lfloor \mu M\rfloor)$. The top-down variant uses the same number of calls as ReEvo in each iteration:
\[
    M \;(\text{short-term reflection})+
    M \;(\text{CX})+
    1 \;(\text{long-term reflection})+
    N \;(\text{MT})
    =2M+1+N.
\]
Under full evaluation, both sides also evaluate $M+N$ generated programs per iteration. Thus, the comparison changes the search coordinate from $\Theta$ to $\mathcal{K}$ without increasing the LLM-call or empirical-evaluation budget.

\paragraph{Evaluation.}
Every generated $\theta$ is written as a Python module with the benchmark-specific function signature and evaluated by the same script. Invalid programs are assigned $\widehat{L}=+\infty$ and removed from the valid population. The active population is replaced after crossover and extended after mutation, while the elitist candidate is tracked separately. For top-down runs, we save both $\widehat{K}_{\mathrm{TD}}$ and $\widehat{\theta}_{\mathrm{TD}}$, so the final implementation can be traced back to the generating knowledge state.

\paragraph{\emph{Dual Top-Down and Bottom-Up Population-Based Search}.}
We further implement a dual variant that maintains two coupled populations,
\[
    \mathcal{P}_{\mathcal{K}}^t=\{(K_i^t,\theta_i^t,\widehat{L}_i^t)\},
    \qquad
    \mathcal{P}_{\Theta}^t=\{(\theta_j^t,\widehat{L}_j^t)\}.
\]
The knowledge population is top-down-native: variation is applied to $K$, and each generated knowledge state is realized as code before evaluation. The code population is bottom-up-native: variation is applied directly to executable artifacts. The two populations interact through two cross-pollination operators: distillation $\mathrm{DS}$ transfers information from strong code into the knowledge population, while grounding $\mathrm{GR}$ uses strong knowledge to generate new executable code.

\newcommand{\DKCX}{Q_{\phi,t}^{\mathcal{K},\Theta}(\cdot\mid H^{t-1},K_i^-,K_i^+,r_{K,i}^t)}
\newcommand{\DKDS}{Q_{\phi,t}^{\mathcal{K},\Theta}(\cdot\mid H^{t-1},K_t^\star,\theta_t^\star,R_{t-1})}
\newcommand{\DTCX}{Q_{\phi,t}^{\Theta}(\cdot\mid H^{t-1},A_i^-,A_i^+,r_{\Theta,i}^t)}
\newcommand{\DTGR}{Q_{\phi,t}^{\Theta}(\cdot\mid H^{t-1},K_t^\star,\theta_t^\star,R_{t-1})}

\begin{algorithm}[h]
\caption{Dual top-down and bottom-up population search}
\label{alg:dual_ea_impl}
\small
\resizebox{\linewidth}{!}{%
\begin{minipage}{1.03\linewidth}
\begin{algorithmic}
\Require $T$, $M_{\mathcal{K}}^0$, $M_{\Theta}^0$, $N_{\mathcal{K}}^{\mathrm{CX}}$, $N_{\mathcal{K}}^{\mathrm{DS}}$, $N_{\Theta}^{\mathrm{CX}}$, $N_{\Theta}^{\mathrm{GR}}$
\State $\mathcal{P}_{\mathcal{K}}^0\gets\textsc{RealizeEval}_{\mathcal{D}}(\textsc{Init}_{\mathcal{K}}(M_{\mathcal{K}}^0,H^0))$
\State $\mathcal{P}_{\Theta}^0\gets\textsc{Eval}_{\mathcal{D}}(\textsc{Init}_{\Theta}(M_{\Theta}^0,H^0)\cup\{\theta(K_0^\star)\})$
\State $R_0\gets\textsc{InitReflection}(H^0)$
\For{$t=1,\ldots,T$}
    \State \textit{// knowledge-side update}
    \State \hspace{\algorithmicindent}$\mathcal{B}_{\mathcal{K}}^t\gets\textsc{Pair}(\mathcal{P}_{\mathcal{K}}^{t-1},N_{\mathcal{K}}^{\mathrm{CX}})$
    \State \hspace{\algorithmicindent}$\mathcal{C}_{\mathcal{K}}^{t,\mathrm{CX}}\gets\mathrm{CX}_{\mathcal{K}}(\mathcal{B}_{\mathcal{K}}^t,H^{t-1})$
    \State \hspace{\algorithmicindent}$\mathcal{C}_{\mathcal{K}}^{t,\mathrm{DS}}\gets\mathrm{DS}(K_{t-1}^{\star},\theta_{t-1}^{\star},R_{t-1},H^{t-1})$
    \State \hspace{\algorithmicindent}$\mathcal{P}_{\mathcal{K}}^t\gets\textsc{Extend}(\mathcal{P}_{\mathcal{K}}^{t-1},\textsc{RealizeEval}_{\mathcal{D}}(\mathcal{C}_{\mathcal{K}}^{t,\mathrm{CX}}\cup\mathcal{C}_{\mathcal{K}}^{t,\mathrm{DS}}))$
    \State \textit{// code-side update}
    \State \hspace{\algorithmicindent}$\mathcal{B}_{\Theta}^t\gets\textsc{Pair}(\mathcal{P}_{\Theta}^{t-1},N_{\Theta}^{\mathrm{CX}})$
    \State \hspace{\algorithmicindent}$\mathcal{C}_{\Theta}^{t,\mathrm{CX}}\gets\mathrm{CX}_{\Theta}(\mathcal{B}_{\Theta}^t,H^{t-1})$
    \State \hspace{\algorithmicindent}$\mathcal{C}_{\Theta}^{t,\mathrm{GR}}\gets\mathrm{GR}(K_t^{\star},\theta_t^{\star},R_{t-1},H^{t-1})$
    \State \hspace{\algorithmicindent}$\mathcal{P}_{\Theta}^t\gets\textsc{Extend}(\mathcal{P}_{\Theta}^{t-1},\textsc{Eval}_{\mathcal{D}}(\mathcal{C}_{\Theta}^{t,\mathrm{CX}}\cup\mathcal{C}_{\Theta}^{t,\mathrm{GR}}))$
    \State \textit{// shared reflection and bookkeeping}
    \State \hspace{\algorithmicindent}$R_t\gets\textsc{LT}(R_{t-1},\mathcal{B}_{\mathcal{K}}^t,\mathcal{B}_{\Theta}^t,\mathcal{P}_{\mathcal{K}}^t,\mathcal{P}_{\Theta}^t)$
    \State \hspace{\algorithmicindent}$H^t\gets\textsc{Update}(H^{t-1},\mathcal{P}_{\mathcal{K}}^t,\mathcal{P}_{\Theta}^t,R_t)$
\EndFor
\State \Return $\widehat{\theta}\gets\textsc{Best}(\mathcal{P}_{\mathcal{K}}^T\cup\mathcal{P}_{\Theta}^T)$
\end{algorithmic}
\end{minipage}}
\end{algorithm}

\paragraph{Cross-pollination operators.}
The dual variant uses four operators per iteration. The knowledge-side crossover $\mathrm{CX}_{\mathcal{K}}$ combines two knowledge individuals $(K_i^-,K_i^+)$ and returns a new pair $(K_{\mathrm{CX}},\theta_{\mathrm{CX}})$. The distillation operator $\mathrm{DS}$ transfers information from the code population to the knowledge population by using the best code-side artifact $\theta_t^\star$ together with the best knowledge-side state $K_t^\star$ to produce a new knowledge state and its reference implementation. Symmetrically, code-side crossover $\mathrm{CX}_{\Theta}$ follows the ReEvo-style code operator, while grounding $\mathrm{GR}$ maps strong knowledge back into executable code. In our implementation, $\mathrm{CX}_{\mathcal{K}}$ and $\mathrm{DS}$ also use structured output, so each LLM call returns both \texttt{knowledge} and \texttt{code}; no extra realization call is added for $K\mapsto\theta$.

\paragraph{Budget control.}
Let $N_{\mathcal{K}}^{\mathrm{CX}}$, $N_{\mathcal{K}}^{\mathrm{DS}}$, $N_{\Theta}^{\mathrm{CX}}$, and $N_{\Theta}^{\mathrm{GR}}$ denote the number of offspring generated by the four operators. The per-iteration generation and evaluation budget is
\[
    N_{\mathcal{K}}^{\mathrm{CX}}
    +N_{\mathcal{K}}^{\mathrm{DS}}
    +N_{\Theta}^{\mathrm{CX}}
    +N_{\Theta}^{\mathrm{GR}}.
\]
In the experiments, these values are chosen so that the total number of evaluated programs is comparable to the single-population BU and TD baselines. Thus, the dual variant studies whether code-level and knowledge-level search provide complementary signals under a matched evaluation budget, rather than benefiting from an uncontrolled increase in solver calls.

\paragraph{\emph{Top-Down Tree-Based Search}.}
For the tree-based setting, we implement the bottom-up baseline following MCTS-AHD~\citep{zheng2025monte} and construct a top-down counterpart by changing the node semantics. In the bottom-up tree, each non-root node stores an executable heuristic and a post-hoc textual description,
\[
    v=(\theta_v,D_v,\widehat{L}_v,N_v,Q_v),
    \qquad A_v=\alpha(\theta_v),
\]
where $d_v$ is generated after code evaluation. In the top-down tree, each node instead stores a knowledge state and its realized executable heuristic,
\[
    v=(K_v,\theta_v,\widehat{L}_v,N_v,Q_v),
    \qquad A_v=\alpha(\theta_v).
\]
Both variants use the same MCTS selection, progressive widening, expansion operators, backpropagation rule, elite archive size, and evaluation budget; they differ only in whether a newly expanded node is generated as code-first or knowledge-first.

\newcommand{\MCTSCODE}{Q_{\phi,t}^{\Theta}(\cdot\mid H^{t-1},v,o,\mathcal{E})}
\newcommand{\MCTSDESC}{Q_{\phi,t}^{\mathcal{D}_{\mathrm{text}}}(\cdot\mid A_u,\widehat{L}_u,H^{t-1})}
\newcommand{\MCTSKNOW}{Q_{\phi,t}^{\mathcal{K}}(\cdot\mid H^{t-1},v,o,\mathcal{E})}
\newcommand{\MCTSIMPL}{Q_{\phi,t}^{\Theta}(\cdot\mid K_u,H^{t-1},v,o,\mathcal{E})}

\begin{figure}[h]
\centering

\begin{minipage}[t]{0.49\linewidth}
\captionsetup{type=algorithm}
\rule{\linewidth}{1pt}\vspace{-5pt}
\captionof{algorithm}{Bottom-up tree-based AHD}
\vspace{-5pt}\rule{\linewidth}{1pt}\vspace{5pt}
\label{alg:mcts_bu_impl}
\resizebox{\linewidth}{!}{%
\begin{minipage}{1.12\linewidth}
\begin{algorithmic}
\State $(\mathcal{T}^0,\mathcal{E}^0)\gets\textsc{InitTree}_{\Theta}(N_I,H^0)$
\While{$B<B_{\max}$}
    \State $\rho\gets 1-B/B_{\max}$
    \State $(v,\mathcal{W})\gets\textsc{UCTSelect}(\mathcal{T}^{t-1},\rho)$
    \State $\mathcal{U}\gets\textsc{ProgressiveWiden}_{\Theta}(\mathcal{W},\mathcal{E}^{t-1})$
    \State $\mathcal{O}\gets\{\mathrm{E2}\}\cup\{\mathrm{M1},\mathrm{M2}\}^{\times k}\cup\{\mathrm{S1}\}$
    \State $\mathcal{U}\gets\mathcal{U}\cup\textsc{Expand}_{\Theta}(v,\mathcal{O},\mathcal{E}^{t-1})$
    \For{$(p,o)\in\mathcal{U}$}
        \State $\theta_u\sim\MCTSCODE$
        \State $\widehat{L}_u\gets\widehat{L}_{\mathcal{D}}(\theta_u)$
        \State $D_u\sim\MCTSDESC$
        \State $\mathcal{T}^{t}\gets\textsc{BackUp}(\mathcal{T}^{t-1},p,\theta_u,D_u,\widehat{L}_u)$
    \EndFor
    \State $\mathcal{E}^{t}\gets\textsc{TopK}(\mathcal{E}^{t-1}\cup\mathcal{U})$
\EndWhile
\State \Return $\widehat{\theta}_{\mathrm{BU}}\gets\textsc{Best}(\mathcal{E}^{T})$
\end{algorithmic}
\end{minipage}}
\end{minipage}
\hfill
\begin{minipage}[t]{0.49\linewidth}
\captionsetup{type=algorithm}
\rule{\linewidth}{1pt}\vspace{-5pt}
\captionof{algorithm}{Top-down tree-based AHD}
\vspace{-5pt}\rule{\linewidth}{1pt}\vspace{5pt}
\label{alg:mcts_td_impl}
\resizebox{\linewidth}{!}{%
\begin{minipage}{1.12\linewidth}
\begin{algorithmic}
\State $(\mathcal{T}^0,\mathcal{E}^0)\gets\textsc{InitTree}_{\mathcal{K}}(N_I,H^0)$
\While{$B<B_{\max}$}
    \State $\rho\gets 1-B/B_{\max}$
    \State $(v,\mathcal{W})\gets\textsc{UCTSelect}(\mathcal{T}^{t-1},\rho)$
    \State $\mathcal{U}\gets\textsc{ProgressiveWiden}_{\mathcal{K}}(\mathcal{W},\mathcal{E}^{t-1})$
    \State $\mathcal{O}\gets\{\mathrm{E2}\}\cup\{\mathrm{M1},\mathrm{M2}\}^{\times k}\cup\{\mathrm{S1}\}$
    \State $\mathcal{U}\gets\mathcal{U}\cup\textsc{Expand}_{\mathcal{K}}(v,\mathcal{O},\mathcal{E}^{t-1})$
    \For{$(p,o)\in\mathcal{U}$}
        \State $K_u\sim\MCTSKNOW$
        \State $\theta_u\sim\MCTSIMPL$
        \State $\widehat{L}_u\gets\widehat{L}_{\mathcal{D}}(\theta_u)$
        \State $\mathcal{T}^{t}\gets\textsc{BackUp}(\mathcal{T}^{t-1},p,K_u,\theta_u,\widehat{L}_u)$
    \EndFor
    \State $\mathcal{E}^{t}\gets\textsc{TopK}(\mathcal{E}^{t-1}\cup\mathcal{U})$
\EndWhile
\State \Return $(\widehat{K}_{\mathrm{TD}},\widehat{\theta}_{\mathrm{TD}})\gets\textsc{Best}(\mathcal{E}^{T})$
\end{algorithmic}
\end{minipage}}
\end{minipage}

\end{figure}

\vspace{-15pt}
\begin{figure}[h]
\centering
\begin{minipage}[t]{0.49\linewidth}
\rule{\linewidth}{1pt}
\end{minipage}
\hfill
\begin{minipage}[t]{0.49\linewidth}
\rule{\linewidth}{1pt}
\end{minipage}
\end{figure}
\vspace{-5 pt}

\paragraph{Tree Policy.}
Both variants use the same MCTS tree policy. Starting from the root, the algorithm selects child nodes by UCT with an evaluation-dependent exploration factor $\rho_t=1-B_t/B_{\max}$. Each node stores visit count $N_v$ and value $Q_v=-\widehat{L}_v$, so lower loss corresponds to higher tree value. Progressive widening is applied during selection: when $\lfloor N_v^{\alpha}\rfloor$ exceeds the current number of children of $v$, an additional child is generated before the search continues. After a new child is evaluated, its value is backed up along the path from the child to the root by updating visit counts and propagating the best child value. The tree policy is controlled by three hyperparameters: $\lambda_0$, the base exploration coefficient in UCT; $\alpha$, the progressive-widening exponent; and $d_{\max}$, the maximum search depth.

\paragraph{Expansion Operators.}
We keep the MCTS-AHD operator set fixed,
\[
    \mathcal{O}=\{\mathrm{I1},\mathrm{E1},\mathrm{E2},\mathrm{M1},\mathrm{M2},\mathrm{S1}\}.
\]
The operator $\mathrm{I1}$ creates the first root child; $\mathrm{E1}$ expands the root by combining candidates sampled from different root subtrees; $\mathrm{E2}$ expands a non-root node using an elite reference; $\mathrm{M1}$ and $\mathrm{M2}$ produce local variants of the selected node; and $\mathrm{S1}$ uses the sequence of nodes from the root to the selected node as context. In BU, an operator directly generates $\theta_u$ and then annotates it with a description $D_u$. In TD, the same operator first proposes $K_u$ and then realizes it as $\theta_u$ for evaluation.

\paragraph{Matched Node-Expansion Budget.}
The bottom-up implementation uses two LLM calls per expanded node,
\[
    \theta_u \rightarrow D_u,
\]
where code is generated first and the description is produced after evaluation. The top-down implementation also uses two LLM calls per expanded node,
\[
    K_u \rightarrow \theta_u,
\]
where knowledge is generated first and then implemented as code. Therefore, each expanded node uses two LLM calls and one empirical evaluation in both variants. With the operator schedule $\mathrm{E2}\times1$, $\mathrm{M1}\times k$, $\mathrm{M2}\times k$, and $\mathrm{S1}\times1$, each selected leaf expands $2k+2$ children, corresponding to $4k+4$ LLM calls and $2k+2$ evaluations, before accounting for any additional progressive-widening expansions.

\newpage
\paragraph{\emph{Sparse Evaluation for Population-Based Search}.}
We also implement sparse-evaluation variants of the population-based algorithms. Let $\eta\in[0,1]$ be the evaluation ratio. For a generated batch $\mathcal{C}$ with $|\mathcal{C}|=n$, define
\[
b_\eta(n)=
\begin{cases}
0, & \eta=0,\\
\min\{n,\max\{1,\lfloor \eta n\rceil\}\}, & \eta>0,
\end{cases}
\]
where $b_\eta(n)$ is the number of candidates evaluated immediately. The remaining candidates are kept as unevaluated artifacts.

\newcommand{\SA}{\mathrm{SA}_{\eta,\mathcal{D}}}
\newcommand{\SPBUCX}{\textsc{SpCX}_{\Theta}}
\newcommand{\SPBUMT}{\textsc{SpMT}_{\Theta}}
\newcommand{\SPTDCX}{\textsc{SpCX}_{\mathcal{K}}}
\newcommand{\SPTDMT}{\textsc{SpMT}_{\mathcal{K}}}

\begin{figure}[h]
\centering

\begin{minipage}[t]{0.49\linewidth}
\captionsetup{type=algorithm}
\rule{\linewidth}{1pt}\vspace{-5pt}
\captionof{algorithm}{Sparse bottom-up population search}
\vspace{-5pt}\rule{\linewidth}{1pt}\vspace{5pt}
\label{alg:sparse_bu_impl}
\resizebox{\linewidth}{!}{%
\begin{minipage}{1.03\linewidth}
\begin{algorithmic}
\Require $T$, $M$, $M_0$, $\mu$, $\eta$
\State $\mathcal{P}_{\Theta}^{0}\gets\SA(\textsc{Init}_{\Theta}(M_0,H^0))$
\For{$t=1,\ldots,T$}
    \State $\mathcal{B}^t\gets\textsc{SparsePair}(\mathcal{P}_{\Theta}^{t-1},M)$
    \State $\mathcal{C}_{\Theta}^{t,\mathrm{CX}}\gets\SPBUCX(\mathcal{B}^t,H^{t-1})$
    \State $\mathcal{P}_{\Theta}^{t,\mathrm{CX}}\gets\textsc{ReplaceSp}(\SA(\mathcal{C}_{\Theta}^{t,\mathrm{CX}}))$
    \State $R_t\gets\textsc{LT}(R_{t-1},\mathcal{B}^t,\mathcal{C}_{\Theta}^{t,\mathrm{CX}})$
    \State $M_{\mathrm{MT}}\gets\max(1,\lfloor\mu M\rfloor)$
    \State $\mathcal{C}_{\Theta}^{t,\mathrm{MT}}\gets\SPBUMT(\theta_t^\star,R_t,H^{t,\mathrm{CX}},M_{\mathrm{MT}})$
    \State $\mathcal{P}_{\Theta}^{t}\gets\textsc{ExtendSp}(\mathcal{P}_{\Theta}^{t,\mathrm{CX}},\SA(\mathcal{C}_{\Theta}^{t,\mathrm{MT}}))$
    \State $H^t\gets\textsc{Update}_{\mathrm{BU}}(H^{t-1},\mathcal{P}_{\Theta}^{t},R_t)$
\EndFor
\State \Return $\widehat{\theta}_{\mathrm{BU}}\gets\textsc{BestEval}(\mathcal{P}_{\Theta}^{T})$
\end{algorithmic}
\end{minipage}}
\end{minipage}
\hfill
\begin{minipage}[t]{0.49\linewidth}
\captionsetup{type=algorithm}
\rule{\linewidth}{1pt}\vspace{-5pt}
\captionof{algorithm}{Sparse top-down population search}
\vspace{-5pt}\rule{\linewidth}{1pt}\vspace{5pt}
\label{alg:sparse_td_impl}
\resizebox{\linewidth}{!}{%
\begin{minipage}{1.03\linewidth}
\begin{algorithmic}
\Require $T$, $M$, $M_0$, $\mu$, $\eta$
\State $\mathcal{P}_{\mathcal{K}}^{0}\gets\SA(\textsc{Init}_{\mathcal{K}}(M_0,H^0))$
\For{$t=1,\ldots,T$}
    \State $\mathcal{B}^t\gets\textsc{SparsePair}(\mathcal{P}_{\mathcal{K}}^{t-1},M)$
    \State $\mathcal{C}_{\mathcal{K}}^{t,\mathrm{CX}}\gets\SPTDCX(\mathcal{B}^t,H^{t-1})$
    \State $\mathcal{P}_{\mathcal{K}}^{t,\mathrm{CX}}\gets\textsc{ReplaceSp}(\SA(\mathcal{C}_{\mathcal{K}}^{t,\mathrm{CX}}))$
    \State $R_t\gets\textsc{LT}(R_{t-1},\mathcal{B}^t,\mathcal{C}_{\mathcal{K}}^{t,\mathrm{CX}})$
    \State $M_{\mathrm{MT}}\gets\max(1,\lfloor\mu M\rfloor)$
    \State $\mathcal{C}_{\mathcal{K}}^{t,\mathrm{MT}}\gets\SPTDMT(K_t^\star,R_t,H^{t,\mathrm{CX}},M_{\mathrm{MT}})$
    \State $\mathcal{P}_{\mathcal{K}}^{t}\gets\textsc{ExtendSp}(\mathcal{P}_{\mathcal{K}}^{t,\mathrm{CX}},\SA(\mathcal{C}_{\mathcal{K}}^{t,\mathrm{MT}}))$
    \State $H^t\gets\textsc{Update}_{\mathrm{TD}}(H^{t-1},\mathcal{P}_{\mathcal{K}}^{t},R_t)$
\EndFor
\State \Return $(\widehat{K}_{\mathrm{TD}},\widehat{\theta}_{\mathrm{TD}})\gets\textsc{BestEval}(\mathcal{P}_{\mathcal{K}}^{T})$
\end{algorithmic}
\end{minipage}}
\end{minipage}

\end{figure}

\vspace{-15pt}
\begin{figure}[h]
\centering
\begin{minipage}[t]{0.49\linewidth}
\rule{\linewidth}{1pt}
\end{minipage}
\hfill
\begin{minipage}[t]{0.49\linewidth}
\rule{\linewidth}{1pt}
\end{minipage}
\end{figure}
\vspace{-5 pt}

\paragraph{Sparse Assimilation.}
For a batch $\mathcal{C}=\{z_i\}_{i=1}^{n}$, sparse assimilation samples an evaluated subset $I_{\eta}\subseteq[n]$ with $|I_\eta|=b_\eta(n)$ and sets
\[
\SA(\mathcal{C})
=
\{(z_i,\widehat{L}_{\mathcal{D}}(\theta_i),\mathrm{eval}) : i\in I_\eta\}
\cup
\{(z_i,+\infty,\mathrm{unk}) : i\notin I_\eta\}.
\]
The value $+\infty$ here is only a placeholder for sorting; unevaluated candidates are not treated as failed candidates. The population keeps two parts,
\[
\mathcal{P}^t=\mathcal{P}_{\mathrm{eval}}^t\cup\mathcal{P}_{\mathrm{unk}}^t,
\qquad
|\mathcal{P}_{\mathrm{unk}}^t|\le S_{\mathrm{unk}},
\]
where $S_{\mathrm{unk}}$ is the reserved number of unevaluated slots. The elitist is updated only from $\mathcal{P}_{\mathrm{eval}}^t$.

\paragraph{Pair Selection under Uncertainty.}
Sparse pair selection returns $(z_i,z_j,\delta_{ij})$, where
\[
\delta_{ij}
=
\mathbb{I}\{s_i=\mathrm{eval},\,s_j=\mathrm{eval},\,\widehat{L}_i\ne\widehat{L}_j\}.
\]
If $\delta_{ij}=1$, the pair is ordered as $(z^-,z^+)$ by empirical score. If $\delta_{ij}=0$, the pair is unranked and no performance preference is inferred. Thus the sparse reflection operator has the form
\[
r_{ij}^t=
\begin{cases}
\Psi^{\mathrm{rank}}(z^-,z^+,\widehat{L}^-,\widehat{L}^+), & \delta_{ij}=1,\\
\Psi^{\mathrm{unk}}(z_i,z_j), & \delta_{ij}=0.
\end{cases}
\]
For BU, $z$ exposes code artifacts $A=\alpha(\theta)$; for TD, $z$ additionally exposes knowledge $K$, so unranked pairs can still support knowledge-level recombination.

\paragraph{Evaluation Budget.}
With full evaluation, each iteration evaluates
\[
E_{\mathrm{full}}=M+N,
\qquad
N=\max(1,\lfloor\mu M\rfloor).
\]
Under sparse evaluation, this becomes
\[
E_{\mathrm{sp}}
=
b_\eta(M)+b_\eta(N)
\approx
\eta(M+N),
\]
while the LLM-call budget remains unchanged. Therefore sparse evaluation isolates the effect of reducing empirical solver calls. The difference between BU and TD is in the information retained by unevaluated candidates:
\[
H_{\mathrm{BU}}^t\supseteq\{A_i:s_i=\mathrm{unk}\},
\qquad
H_{\mathrm{TD}}^t\supseteq\{K_i,A_i:s_i=\mathrm{unk}\}.
\]
This is why sparse evaluation is a natural stress test for knowledge-centric search: TD can still use unevaluated candidates as structural evidence through $K$, whereas BU mainly retains unevaluated code artifacts.

\appsubsection{Hyperparameters and Prompts}

\paragraph{Hyperparameters.}
Unless otherwise specified, all experiments in this paper use \texttt{GPT-4o-mini} (specifically, \texttt{GPT-4o-mini-2024-07-18}) as the default language model, since it is inexpensive, low-latency, broadly knowledgeable, and supports structured outputs. For all LLMs, unless explicitly stated otherwise, we use the default temperature setting. All reported results are averaged over 5 independent runs to improve statistical reliability. The framework-specific hyperparameters are summarized in the tables below. Note that MCTS-AHD generally consumes more tokens than ReEvo, and ReEvo also admits straightforward parallel evaluation whereas MCTS-AHD proceeds sequentially. To maintain a fair comparison under practical compute constraints, we therefore use a slightly smaller search budget for the MCTS-based settings.

\begin{table}[h]
\centering
\caption{Hyperparameters of the population-based search frameworks.}
\label{tab:pop_search_hparams}
\setlength{\tabcolsep}{4pt}
\renewcommand{\arraystretch}{1.2}
\resizebox{\linewidth}{!}{%
\begin{tabular}{|C{3.5cm}|C{1.55cm}|C{1.8cm}|C{1.55cm}|C{1.55cm}|C{1.85cm}|C{1.85cm}|C{1.85cm}|}
\hline
\textbf{Variant} & \textbf{Initial size} & \textbf{Iterations} & \textbf{Population size} & \textbf{Mutation rate} & \textbf{LLM calls / iteration} & \textbf{Total LLM calls} & \textbf{Codes generated} \\
\hline
Top-down ReEvo & 10 & 30 & 5 & 1.0 & 16 & 490 & 310 \\
\hline
Bottom-Up ReEvo & 10 & 30 & 5 & 1.0 & 16 & 490 & 310 \\
\hline
\end{tabular}%
}

\setlength{\tabcolsep}{4pt}
\renewcommand{\arraystretch}{1.2}
\resizebox{\linewidth}{!}{%
\begin{tabular}{|C{3.5cm}|C{1.55cm}|C{1.8cm}|C{1.55cm}|C{1.55cm}|C{1.85cm}|C{1.85cm}|C{1.85cm}|}
\hline
\textbf{Variant} & \textbf{Initial size} & \textbf{Max candidates} & \textbf{Elite size} & \textbf{$k$} & \textbf{$(\lambda_0,\alpha,d_{\max})$} & \textbf{Total LLM calls} & \textbf{Codes generated} \\
\hline
Top-down MCTS-AHD & 4 & 200 & 10 & 2 & $(0.1,0.5,10)$ & 400 & 200 \\
\hline
Bottom-up MCTS-AHD & 4 & 200 & 10 & 2 & $(0.1,0.5,10)$ & 400 & 200 \\
\hline
\end{tabular}%
}
\end{table}

Our study requires only a mid-range CPU and does not rely on GPUs, unless one chooses to run the LLM locally. Stronger CPUs can increase the degree of parallel code evaluation, but this does not materially affect the overall conclusions. Under the settings described above, a typical run of ReEvo (bottom-up or top-down) takes approximately 15--20 minutes, whereas a typical run of MCTS-AHD (bottom-up or top-down) takes about 40 minutes. The average API cost per run is around \$0.10 when using \texttt{GPT-4o-mini}, and both runtime and monetary cost increase accordingly when more expensive models are used.

\paragraph{Prompts for Top-Down FunSearch.}
Below we present the prompts used in our top-down FunSearch variant, where the model first proposes high-level ideas and then implements them as code.

\begin{prompt}[system-prompt]
You are an expert algorithm designer. You write clean, correct, and efficient Python code. Your goal is to minimize the objective score (lower is better).
\end{prompt}

\begin{prompt}[brainstorm-prompt]
\# Task: Implement \ph{func\_name}.

\ph{func\_sign}

\# Function Description: \ph{func\_desc}

\# Baseline: \ph{func\_seed}

Baseline score: \ph{baseline\_score}

\# Previous Ideas and Implementations:
\ph{top\_ideas\_with\_code}

\# Hint:
\ph{hint}

Brainstorm exactly \ph{n\_ideas} better ideas that can achieve a lower score than the current best. Each idea must be distinct and improve upon previous attempts. Do not repeat previous ideas.
\end{prompt}

\begin{prompt}[implement-prompt]
\# Task: Implement \ph{func\_name}.

\ph{func\_sign}

\# Function Description: \ph{func\_desc}

\# Baseline: \ph{func\_seed}

Baseline score: \ph{baseline\_score}

\# Current Best --- score: \ph{best\_score}
\ph{best\_idea}

\ph{best\_code}

\# Idea to Implement:
\ph{idea}

Implement the idea above as \ph{func\_name} to achieve a lower score than the current best.
\end{prompt}

\paragraph{Prompts for Top-Down ReEvo.}
Bottom-up ReEvo follows the standard ReEvo prompting scheme. Below we present the prompts used in our top-down variant, where the search object is knowledge rather than code. Gray placeholders denote runtime inputs.

\begin{prompt}[generator-system-prompt]
You are an expert in the domain of optimization heuristics. Your task is to design heuristics that can effectively solve optimization problems.
\end{prompt}

\begin{prompt}[reflector-system-prompt]
You are an expert in the domain of optimization heuristics. Your task is to give hints to design better heuristics.
\end{prompt}

\begin{prompt}[init-prompt]
\# Task: Write a \ph{func\_name} function for \ph{prob\_name}.

\ph{func\_desc}

\# Objective: \ph{objective\_desc}

\# Function Signature: \ph{func\_sign}

\# Baseline: \ph{func\_seed}

Baseline \ph{baseline\_score}

\# Hints: \ph{lt\_reflection}

Refer to the baseline above. Be very creative and give a meaningfully different and better \ph{func\_name} that achieves a lower score.
\end{prompt}

\begin{prompt}[short-term-reflection-prompt]
Below are two sets of knowledge about \ph{func\_name} for \ph{prob\_name}.

\ph{func\_desc}

\# Objective: \ph{objective\_desc}

Each knowledge set contains principles/insights used to design a heuristic. The second set led to better performance.

\# Worse knowledge --- \ph{worse\_score}
\ph{worse\_knowledge}

\# Better knowledge --- \ph{better\_score}
\ph{better\_knowledge}

Performance changed from \ph{worse\_score} to \ph{better\_score}. What is the worse knowledge getting wrong? Respond with one actionable hint, using less than 20 words.
\end{prompt}

\begin{prompt}[long-term-reflection-prompt]
Below is your prior long-term reflection on designing heuristics for \ph{prob\_name}.

\ph{prior\_lt\_reflection}

Below are some newly gained insights.

\ph{st\_reflections}

Write constructive hints for designing better heuristics, based on prior reflections and new insights and using less than 50 words.
\end{prompt}

\begin{prompt}[crossover-prompt]
\# Task: Write a \ph{func\_name} function for \ph{prob\_name}.

\ph{func\_desc}

\# Objective: \ph{objective\_desc}

\# Baseline: \ph{func\_seed}

Baseline \ph{baseline\_score}

\# Worse knowledge --- \ph{worse\_score}
\ph{worse\_knowledge}

\# Better knowledge --- \ph{better\_score}
\ph{better\_knowledge}

\# Reference implementation:
\ph{better\_code}

\# Reflection:
\ph{st\_reflection}

\# Synthesized knowledge:
Combine what works, discard what doesn't, and add one new insight of your own. Then, implement \ph{func\_name} based on your synthesized knowledge to achieve a lower score.
\end{prompt}

\begin{prompt}[mutation-prompt]
\# Task: Write a \ph{func\_name} function for \ph{prob\_name}.

\ph{func\_desc}

\# Objective: \ph{objective\_desc}

\# Baseline: \ph{func\_seed}

Baseline \ph{baseline\_score}

\# Prior reflection:
\ph{lt\_reflection\_plus\_hint}

\# Current best knowledge --- \ph{elitist\_score}
\ph{elitist\_knowledge}

\# Reference implementation:
\ph{elitist\_code}

\# New knowledge:
The current knowledge achieved score \ph{elitist\_score}. Try a different approach or formulation, not just a tweak. Then, implement \ph{func\_name} based on your new knowledge to achieve a lower score.
\end{prompt}

\paragraph{Prompts for Top-Down MCTS-AHD.}
Below we present the prompts used in the top-down MCTS-AHD variant. Each tree-expansion step first generates knowledge and then converts that knowledge into code through a shared implementation prompt.

\begin{prompt}[generator-system-prompt]
You are an expert in the domain of optimization heuristics. Your task is to design heuristics that can effectively solve optimization problems.
\end{prompt}

\begin{prompt}[i1-prompt]
\# Task: Write a \ph{func\_name} function for \ph{prob\_name}.

\ph{func\_desc}

\# Function Signature:
\ph{func\_sign}

\# Baseline: \ph{func\_seed}

Baseline score: \ph{baseline\_score}

Develop knowledge about \ph{prob\_name} that can guide the design of \ph{func\_name}.

Your response must contain two parts:

1. \textbf{Analysis}: What useful information from the input is the current approach ignoring or underusing? Think beyond the obvious. Considering: \ph{hint}

2. \textbf{Strategy}: Describe a new design idea that exploits the signal identified in your analysis.
\end{prompt}

\begin{prompt}[e1-prompt]
\# Task: Write a \ph{func\_name} function for \ph{prob\_name}.

\ph{func\_desc}

\# Function Signature:
\ph{func\_sign}

\# Baseline: \ph{func\_seed}

Baseline score: \ph{baseline\_score}

I have \ph{n\_nodes} existing sets of knowledge with their implementations:

\ph{knowledge\_nodes\_with\_code}

Develop fundamentally different knowledge about \ph{prob\_name} --- a fresh perspective, not a recombination of the above.

Your response must contain two parts:

1. \textbf{Analysis}: What useful information from the input is the current approach ignoring or underusing? Think beyond the obvious. Considering: \ph{hint}

2. \textbf{Strategy}: Describe a new design idea that exploits the signal identified in your analysis.
\end{prompt}

\begin{prompt}[e2-prompt]
\# Task: Write a \ph{func\_name} function for \ph{prob\_name}.

\ph{func\_desc}

\# Function Signature:
\ph{func\_sign}

\# Baseline: \ph{func\_seed}

Baseline score: \ph{baseline\_score}

\# Reference knowledge | score: \ph{reference\_score}
\ph{reference\_knowledge}

\ph{reference\_code}

\# Parent knowledge | score: \ph{parent\_score}
\ph{parent\_knowledge}

\ph{parent\_code}

\# Synthesized knowledge:
Combine the strongest insights from both and develop a deeper understanding.

Your response must contain two parts:

1. \textbf{Analysis}: What useful information from the input is the current approach ignoring or underusing? Think beyond the obvious. Considering: \ph{hint}

2. \textbf{Strategy}: Describe a new design idea that exploits the signal identified in your analysis.
\end{prompt}

\begin{prompt}[m1-prompt]
\# Task: Write a \ph{func\_name} function for \ph{prob\_name}.

\ph{func\_desc}

\# Function Signature:
\ph{func\_sign}

\# Baseline: \ph{func\_seed}

Baseline score: \ph{baseline\_score}

\# Current knowledge | score: \ph{node\_score}
\ph{node\_knowledge}

\ph{node\_code}

\# New knowledge:
The current strategy scored \ph{node\_score}. \ph{score\_guidance} Develop different knowledge --- not just a tweak.

Your response must contain two parts:

1. \textbf{Analysis}: What useful information from the input is the current approach ignoring or underusing? Think beyond the obvious. Considering: \ph{hint}

2. \textbf{Strategy}: Describe a new design idea that exploits the signal identified in your analysis.
\end{prompt}

\begin{prompt}[m2-prompt]
\# Task: Write a \ph{func\_name} function for \ph{prob\_name}.

\ph{func\_desc}

\# Function Signature:
\ph{func\_sign}

\# Baseline: \ph{func\_seed}

Baseline score: \ph{baseline\_score}

\# Current knowledge | score: \ph{node\_score}
\ph{node\_knowledge}

\ph{node\_code}

\# Refined knowledge:
The current strategy scored \ph{node\_score}. \ph{score\_guidance} Keep the same overall direction, but refine the quantitative aspects --- parameters, weights, thresholds.

Your response must contain two parts:

1. \textbf{Analysis}: What useful information from the input is the current approach ignoring or underusing? Think beyond the obvious. Considering: \ph{hint}

2. \textbf{Strategy}: Describe a new design idea that exploits the signal identified in your analysis.
\end{prompt}

\begin{prompt}[s1-prompt]
\# Task: Write a \ph{func\_name} function for \ph{prob\_name}.

\ph{func\_desc}

\# Function Signature:
\ph{func\_sign}

\# Baseline: \ph{func\_seed}

Baseline score: \ph{baseline\_score}

Below are \ph{n\_stages} stages of knowledge representing how understanding of \ph{prob\_name} deepened:

\ph{path\_knowledge\_with\_code}

\# Synthesized knowledge:
Combine the best insights from all stages and push further.

Your response must contain two parts:

1. \textbf{Analysis}: What useful information from the input is the current approach ignoring or underusing? Think beyond the obvious. Considering: \ph{hint}

2. \textbf{Strategy}: Describe a new design idea that exploits the signal identified in your analysis.
\end{prompt}

\begin{prompt}[implement-prompt]
\# Task: Write a \ph{func\_name} function for \ph{prob\_name}.

\ph{func\_desc}

\# Function Signature:
\ph{func\_sign}

\# Baseline: \ph{func\_seed}

Baseline score: \ph{baseline\_score}

\# Reference code 1 | score: \ph{ref\_score\_1}
\ph{ref\_code\_1}

\# Reference code 2 | score: \ph{ref\_score\_2}
\ph{ref\_code\_2}

\# Knowledge to implement:
\ph{knowledge}

Implement \ph{func\_name} based on the knowledge above to achieve a score lower than the baseline. Use the reference code(s) for structure and syntax guidance. Hint: \ph{hint}
\end{prompt}

\paragraph{Prompts for Top-Down ReEvo (Cross-Problem Transfer).}
Below we present the prompts used in the top-down cross-problem transfer variant, where successful source-problem knowledge is adapted to the target problem.

\begin{prompt}[transfer-system-prompt]
You are an expert in the domain of optimization heuristics. You transfer successful strategies from one problem to another.
\end{prompt}

\begin{prompt}[transfer-reflector-system-prompt]
You are an expert in the domain of optimization heuristics. You analyze why a heuristic works well or poorly when transferred across problems.
\end{prompt}

\begin{prompt}[init-prompt]
\# Source problem: \ph{src\_prob\_name}

Function: \ph{src\_func\_name}

\ph{src\_func\_sign}

\ph{src\_func\_desc}

\# Successful source knowledge:
\ph{source\_knowledge}

\# Target problem: \ph{tgt\_prob\_name}

Write a \ph{tgt\_func\_name} function for \ph{tgt\_prob\_name}.

\ph{tgt\_func\_desc}

\# Function signature:
\ph{tgt\_func\_sign}

\# Baseline: \ph{tgt\_func\_seed}

Baseline score: \ph{baseline\_score}

\# Hints:
\ph{lt\_reflection}

Identify which principles from the source knowledge transfer to \ph{tgt\_prob\_name} and which need rethinking. Considering: \ph{tgt\_hint} Then, implement \ph{tgt\_func\_name} based on your adapted knowledge.
\end{prompt}

\begin{prompt}[short-term-reflection-prompt]
\# Source problem: \ph{src\_prob\_name}

Function: \ph{src\_func\_name}

\ph{src\_func\_sign}

\ph{src\_func\_desc}

Two source-knowledge lineages were adapted to \ph{tgt\_prob\_name}.

\# Source knowledge A:
\ph{source\_knowledge\_a}

→ Adapted result scored: \ph{worse\_score}

\# Source knowledge B:
\ph{source\_knowledge\_b}

→ Adapted result scored: \ph{better\_score}

Which source principles transferred better to \ph{tgt\_prob\_name} and why? What does the worse adaptation miss about \ph{tgt\_prob\_name}? Respond with one actionable hint, using less than 20 words.
\end{prompt}

\begin{prompt}[long-term-reflection-prompt]
You are helping transfer heuristic strategies from \ph{src\_prob\_name} to \ph{tgt\_prob\_name}.

\# Prior transfer insights:
\ph{prior\_lt\_reflection}

\# New observations:
\ph{st\_reflections}

Summarize what transfers well and what doesn't between these problems. Write constructive hints for better adaptation, using less than 50 words.
\end{prompt}

\begin{prompt}[crossover-prompt]
\# Source problem: \ph{src\_prob\_name}

Function: \ph{src\_func\_name}

\ph{src\_func\_sign}

\ph{src\_func\_desc}

\# Source knowledge lineage A:
\ph{source\_knowledge\_a}

→ Adapted score: \ph{worse\_score}

\# Source knowledge lineage B:
\ph{source\_knowledge\_b}

→ Adapted score: \ph{better\_score}

\# Target problem: \ph{tgt\_prob\_name}

Write a \ph{tgt\_func\_name} function for \ph{tgt\_prob\_name}.

\ph{tgt\_func\_desc}

\# Function signature:
\ph{tgt\_func\_sign}

\# Baseline: \ph{tgt\_func\_seed}

Baseline score: \ph{baseline\_score}

\# Best so far on target:
\ph{best\_target\_score}

\# Transfer reflection:
\ph{st\_reflection}

Combine the best-transferring principles from both source knowledge sets. Then, implement \ph{tgt\_func\_name} based on the combined adapted knowledge.
\end{prompt}

\begin{prompt}[mutation-prompt]
\# Source problem: \ph{src\_prob\_name}

Function: \ph{src\_func\_name}

\ph{src\_func\_sign}

\ph{src\_func\_desc}

\# Source knowledge lineage to adapt:
\ph{source\_knowledge}

\# Target problem: \ph{tgt\_prob\_name}

Write a \ph{tgt\_func\_name} function for \ph{tgt\_prob\_name}.

\ph{tgt\_func\_desc}

\# Function signature:
\ph{tgt\_func\_sign}

\# Baseline: \ph{tgt\_func\_seed}

Baseline score: \ph{baseline\_score}

\# Best so far on target:
\ph{best\_target\_score}

\# Transfer insights:
\ph{lt\_reflection\_plus\_hint}

Adapt this source knowledge to \ph{tgt\_prob\_name}. Try a meaningfully different transfer strategy than previous attempts. Then, implement \ph{tgt\_func\_name} based on your adapted knowledge.
\end{prompt}

\paragraph{Prompts for Top-Down MCTS-AHD (Cross-Problem Transfer).}
Below we present the prompts used in the top-down cross-problem transfer variant of MCTS-AHD, where source-problem knowledge is adapted into target-problem knowledge before being implemented as code.

\begin{prompt}[transfer-system-prompt]
You are an expert in the domain of optimization heuristics. You transfer successful strategies from one problem to another.
\end{prompt}

\begin{prompt}[transfer-reflector-system-prompt]
You are an expert in the domain of optimization heuristics. You analyze why a heuristic works well or poorly when transferred across problems.
\end{prompt}

\begin{prompt}[i1-prompt]
\# Source problem: \ph{src\_prob\_name}

Function: \ph{src\_func\_name}

\ph{src\_func\_sign}

\ph{src\_func\_desc}

\# Successful source knowledge:
\ph{source\_knowledge}

\# Target problem: \ph{tgt\_prob\_name}

Write a \ph{tgt\_func\_name} function for \ph{tgt\_prob\_name}.

\ph{tgt\_func\_desc}

\# Function Signature:
\ph{tgt\_func\_sign}

\# Baseline: \ph{tgt\_func\_seed}

Baseline score: \ph{baseline\_score}

Adapt the source knowledge to develop knowledge about \ph{tgt\_prob\_name} that can guide the design of \ph{tgt\_func\_name}.

Your response must contain two parts:

1. \textbf{Analysis}: What transferable principles from the source problem is the current approach ignoring or underusing? Think beyond the obvious. Considering: \ph{tgt\_hint}

2. \textbf{Strategy}: A concrete strategy for \ph{tgt\_prob\_name} that exploits these transferred principles. Must differ structurally from previous attempts --- hyperparameter-only changes are \emph{not} new strategies.
\end{prompt}

\begin{prompt}[e1-prompt]
\# Source problem: \ph{src\_prob\_name}

Function: \ph{src\_func\_name}

\ph{src\_func\_sign}

\ph{src\_func\_desc}

\# Successful source knowledge:
\ph{source\_knowledge\_set}

\# Target problem: \ph{tgt\_prob\_name}

Write a \ph{tgt\_func\_name} function for \ph{tgt\_prob\_name}.

\ph{tgt\_func\_desc}

\# Function Signature:
\ph{tgt\_func\_sign}

\# Baseline: \ph{tgt\_func\_seed}

Baseline score: \ph{baseline\_score}

\# Existing adaptations on target:
\ph{target\_adaptation\_scores}

Develop fundamentally different knowledge about \ph{tgt\_prob\_name} by transferring overlooked principles from the source problem --- a fresh perspective, not a recombination of the above.

Your response must contain two parts:

1. \textbf{Analysis}: What transferable principles from the source problem is the current approach ignoring or underusing? Think beyond the obvious. Considering: \ph{tgt\_hint}

2. \textbf{Strategy}: A concrete strategy for \ph{tgt\_prob\_name} that exploits these transferred principles. Must differ structurally from previous attempts --- hyperparameter-only changes are \emph{not} new strategies.
\end{prompt}

\begin{prompt}[e2-prompt]
\# Source problem: \ph{src\_prob\_name}

Function: \ph{src\_func\_name}

\ph{src\_func\_sign}

\ph{src\_func\_desc}

\# Source knowledge A --- adapted score: \ph{reference\_score}
\ph{source\_knowledge\_a}

\# Source knowledge B --- adapted score: \ph{parent\_score}
\ph{source\_knowledge\_b}

\# Target problem: \ph{tgt\_prob\_name}

Write a \ph{tgt\_func\_name} function for \ph{tgt\_prob\_name}.

\ph{tgt\_func\_desc}

\# Function Signature:
\ph{tgt\_func\_sign}

\# Baseline: \ph{tgt\_func\_seed}

Baseline score: \ph{baseline\_score}

Synthesize the best-transferring principles from both source knowledge sets. Identify what applies to \ph{tgt\_prob\_name} and what needs rethinking. Considering: \ph{tgt\_hint}

Your response must contain two parts:

1. \textbf{Analysis}: What transferable principles from the source problem is the current approach ignoring or underusing? Think beyond the obvious.

2. \textbf{Strategy}: A concrete strategy for \ph{tgt\_prob\_name} that exploits these transferred principles. Must differ structurally from previous attempts --- hyperparameter-only changes are \emph{not} new strategies.
\end{prompt}

\begin{prompt}[m1-prompt]
\# Source problem: \ph{src\_prob\_name}

Function: \ph{src\_func\_name}

\ph{src\_func\_sign}

\ph{src\_func\_desc}

\# Source knowledge with transferable principles:
\ph{source\_knowledge}

\# Target problem: \ph{tgt\_prob\_name}

Write a \ph{tgt\_func\_name} function for \ph{tgt\_prob\_name}.

\ph{tgt\_func\_desc}

\# Function Signature:
\ph{tgt\_func\_sign}

\# Baseline: \ph{tgt\_func\_seed}

Baseline score: \ph{baseline\_score}

\# Current adapted knowledge | score: \ph{parent\_score}
\ph{parent\_knowledge}

Identify a principle from the source knowledge that the current adapted knowledge is missing or underusing.

Your response must contain two parts:

1. \textbf{Analysis}: What transferable principles from the source problem is the current approach ignoring or underusing? Think beyond the obvious. Considering: \ph{tgt\_hint}

2. \textbf{Strategy}: A concrete strategy for \ph{tgt\_prob\_name} that exploits these transferred principles. Must differ structurally from previous attempts --- hyperparameter-only changes are \emph{not} new strategies.
\end{prompt}

\begin{prompt}[m2-prompt]
\# Source problem: \ph{src\_prob\_name}

Function: \ph{src\_func\_name}

\ph{src\_func\_sign}

\ph{src\_func\_desc}

\# Source knowledge for parameter guidance:
\ph{source\_knowledge}

\# Target problem: \ph{tgt\_prob\_name}

Write a \ph{tgt\_func\_name} function for \ph{tgt\_prob\_name}.

\ph{tgt\_func\_desc}

\# Function Signature:
\ph{tgt\_func\_sign}

\# Baseline: \ph{tgt\_func\_seed}

Baseline score: \ph{baseline\_score}

\# Current adapted knowledge | score: \ph{parent\_score}
\ph{parent\_knowledge}

Refine the current knowledge by adjusting the strategy's parameters or priorities, guided by source problem insights.

Your response must contain two parts:

1. \textbf{Analysis}: What transferable principles from the source problem is the current approach ignoring or underusing? Think beyond the obvious. Considering: \ph{tgt\_hint}

2. \textbf{Strategy}: A concrete strategy for \ph{tgt\_prob\_name} that exploits these transferred principles. Must differ structurally from previous attempts --- hyperparameter-only changes are \emph{not} new strategies.
\end{prompt}

\begin{prompt}[s1-prompt]
\# Source problem: \ph{src\_prob\_name}

Function: \ph{src\_func\_name}

\ph{src\_func\_sign}

\ph{src\_func\_desc}

\# Source knowledge lineage for reference:
\ph{source\_knowledge\_set}

\# Target problem: \ph{tgt\_prob\_name}

Write a \ph{tgt\_func\_name} function for \ph{tgt\_prob\_name}.

\ph{tgt\_func\_desc}

\# Function Signature:
\ph{tgt\_func\_sign}

\# Baseline: \ph{tgt\_func\_seed}

Baseline score: \ph{baseline\_score}

\# Knowledge adaptation history (root → leaf):
\ph{path\_knowledge\_with\_scores}

Analyze how the adapted knowledge evolved. What transfer principles worked and what didn't? Synthesize the best insights.

Your response must contain two parts:

1. \textbf{Analysis}: What transferable principles from the source problem is the current approach ignoring or underusing? Think beyond the obvious. Considering: \ph{tgt\_hint}

2. \textbf{Strategy}: A concrete strategy for \ph{tgt\_prob\_name} that exploits these transferred principles. Must differ structurally from previous attempts --- hyperparameter-only changes are \emph{not} new strategies.
\end{prompt}

\begin{prompt}[implement-prompt]
\# Source problem: \ph{src\_prob\_name}

Function: \ph{src\_func\_name}

\ph{src\_func\_sign}

\ph{src\_func\_desc}

\# Source code for implementation reference:
\ph{source\_code}

\# Target problem: \ph{tgt\_prob\_name}

Write a \ph{tgt\_func\_name} function for \ph{tgt\_prob\_name}.

\ph{tgt\_func\_desc}

\# Function Signature:
\ph{tgt\_func\_sign}

\# Baseline: \ph{tgt\_func\_seed}

Baseline score: \ph{baseline\_score}

\# Previous adaptation 1 | score: \ph{ref\_score\_1}
\ph{ref\_code\_1}

\# Previous adaptation 2 | score: \ph{ref\_score\_2}
\ph{ref\_code\_2}

\# Knowledge to implement:
\ph{knowledge}

Implement \ph{tgt\_func\_name} that faithfully realizes the knowledge above, transferring patterns from the source code to \ph{tgt\_prob\_name}. The previous adaptations show what has been tried --- use them to avoid repeating mistakes, not as templates to copy. Hint: \ph{tgt\_hint}
\end{prompt}

\paragraph{Prompts for Dual Top-Down and Bottom-Up ReEvo.}
Below we present the prompts used in Dual ReEvo, where a top-down knowledge population and a bottom-up code population evolve cooperatively through crossover, grounding, and distillation.

\begin{prompt}[generator-system-prompt]
You are an expert in the domain of optimization heuristics. Your task is to design heuristics that can effectively solve optimization problems.
\end{prompt}

\begin{prompt}[reflector-system-prompt]
You are an expert in the domain of optimization heuristics. Your task is to give hints to design better heuristics.
\end{prompt}

\begin{prompt}[init-knowledge-prompt]
\# Task: Write a \ph{func\_name} function for \ph{prob\_name}.

\ph{func\_desc}

\# Function Signature:
\ph{func\_sign}

\# Baseline: \ph{func\_seed}

Baseline score: \ph{baseline\_score}

\# Hints:
\ph{lt\_reflection}

Describe a general principle for designing a good \ph{func\_name}. Keep it concise, at most 80 words, with no code inside the principle. Then, write one reference \ph{func\_name} implementing your principle.
\end{prompt}

\begin{prompt}[init-code-prompt]
\# Task: Write a \ph{func\_name} function for \ph{prob\_name}.

\ph{func\_desc}

\# Function Signature:
\ph{func\_sign}

\# Baseline: \ph{func\_seed}

Baseline score: \ph{baseline\_score}

\# Hints:
\ph{lt\_reflection}

Write a creative \ph{func\_name} that achieves a lower score than the baseline.
\end{prompt}

\begin{prompt}[implement-prompt]
\# Task: Write a \ph{func\_name} function for \ph{prob\_name}.

\ph{func\_desc}

\# Function Signature:
\ph{func\_sign}

\# Baseline: \ph{func\_seed}

Baseline score: \ph{baseline\_score}

\# Principle to implement:
\ph{knowledge}

Implement \ph{func\_name} following the principle above. Return the code only.
\end{prompt}

\begin{prompt}[knowledge-crossover-prompt]
\# Task: Write a \ph{func\_name} function for \ph{prob\_name}.

\ph{func\_desc}

\# Function Signature:
\ph{func\_sign}

\# Baseline: \ph{func\_seed}

Baseline score: \ph{baseline\_score}

\# Worse knowledge | score: \ph{worse\_score}
\ph{worse\_knowledge}

\# Better knowledge | score: \ph{better\_score}
\ph{better\_knowledge}

\# Reflection:
\ph{st\_reflection}

Synthesize a new principle that combines what works in both and discards what does not, adding one new insight of your own. Keep the principle at most 80 words, with no code inside it. Then, write one reference \ph{func\_name} implementing the synthesized principle.
\end{prompt}

\begin{prompt}[code-crossover-prompt]
\# Task: Write a \ph{func\_name} function for \ph{prob\_name}.

\ph{func\_desc}

\# Function Signature:
\ph{func\_sign}

\# Baseline: \ph{func\_seed}

Baseline score: \ph{baseline\_score}

\# Worse code | score: \ph{worse\_score}
\ph{worse\_code}

\# Better code | score: \ph{better\_score}
\ph{better\_code}

\# Reflection:
\ph{st\_reflection}

Write an improved \ph{func\_name} that achieves a lower score. Return the code only.
\end{prompt}

\begin{prompt}[grounding-prompt]
\# Task: Write a \ph{func\_name} function for \ph{prob\_name}.

\ph{func\_desc}

\# Function Signature:
\ph{func\_sign}

\# Baseline: \ph{func\_seed}

Baseline score: \ph{baseline\_score}

\# General principle:
\ph{knowledge}

\# Current best code | score: \ph{code\_score}
\ph{code}

\# Prior reflection:
\ph{lt\_reflection}

Write a new \ph{func\_name} that follows the general principle above and explores additional structure that the principle does not capture. The new code should be meaningfully different from the current best, not a tweak. Return the code only.
\end{prompt}

\begin{prompt}[distillation-prompt]
\# Task: Write a \ph{func\_name} function for \ph{prob\_name}.

\ph{func\_desc}

\# Function Signature:
\ph{func\_sign}

\# Baseline: \ph{func\_seed}

Baseline score: \ph{baseline\_score}

\# Current best knowledge | score: \ph{elite\_knowledge\_score}
\ph{elite\_knowledge}

\# Current best code | score: \ph{elite\_code\_score}
\ph{elite\_code}

\# Prior reflection:
\ph{lt\_reflection}

The best code above may embody strategies that the current best knowledge does not articulate. Distill a new, sharper principle that captures what makes the code effective, going beyond the current knowledge. Keep the principle at most 80 words, with no code inside it. Then, write one reference \ph{func\_name} implementing the distilled principle.
\end{prompt}

\begin{prompt}[short-term-knowledge-reflection-prompt]
Below are two knowledge principles for designing \ph{func\_name} for \ph{prob\_name}.

\ph{func\_desc}

The second principle led to better performance.

\# Worse knowledge | score: \ph{worse\_score}
\ph{worse\_knowledge}

\# Better knowledge | score: \ph{better\_score}
\ph{better\_knowledge}

What is the worse knowledge missing that the better one captures? Respond with one actionable hint, using fewer than 20 words.
\end{prompt}

\begin{prompt}[short-term-code-reflection-prompt]
Below are two \ph{func\_name} functions for \ph{prob\_name}.

\ph{func\_desc}

The second version performs better than the first.

\# Worse code | score: \ph{worse\_score}
\ph{worse\_code}

\# Better code | score: \ph{better\_score}
\ph{better\_code}

Respond with one hint for designing better heuristics, using fewer than 20 words.
\end{prompt}

\begin{prompt}[long-term-reflection-prompt]
You are refining a long-term reflection on how to design better \ph{func\_name} heuristics for \ph{prob\_name}.

\# Prior long-term reflection:
\ph{prior\_lt\_reflection}

\# New short-term hints from this iteration:
\ph{st\_reflections}

Update the long-term reflection by integrating the new hints with the prior one. Keep it actionable and concise, at most 50 words. Respond with the updated reflection only.
\end{prompt}

\paragraph{Prompts for Sparse Evaluation ReEvo.}
This setting reuses the same initialization, mutation, long-term reflection, and all fully evaluated pairwise prompts as standard ReEvo. The only new prompts arise when at least one candidate in a selected pair has not been evaluated yet; in that case, the model must reason under partial feedback.

\begin{prompt}[bottom-up-uncertain-short-term-reflection-prompt]
Below are two candidate \ph{func\_name} implementations for \ph{prob\_name}.

\ph{func\_desc}

\# Objective: \ph{objective\_desc}

At least one candidate has not been evaluated yet, so performance is unknown.

\# Candidate A | \ph{status\_a}
\ph{code\_a}

\# Candidate B | \ph{status\_b}
\ph{code\_b}

Compare them structurally and give one actionable hint for designing a stronger heuristic in fewer than 20 words.
\end{prompt}

\begin{prompt}[top-down-uncertain-short-term-reflection-prompt]
Below are two candidate knowledge sets for \ph{func\_name} on \ph{prob\_name}.

\ph{func\_desc}

\# Objective: \ph{objective\_desc}

At least one knowledge set has not been validated by evaluation yet.

\# Knowledge A | \ph{status\_a}
\ph{knowledge\_a}

\# Implementation A
\ph{code\_a}

\# Knowledge B | \ph{status\_b}
\ph{knowledge\_b}

\# Implementation B
\ph{code\_b}

Compare the two hypotheses and give one actionable hint for the more promising design direction in fewer than 20 words.
\end{prompt}

\begin{prompt}[bottom-up-uncertain-crossover-prompt]
\# Task: Write a \ph{func\_name} function for \ph{prob\_name}.

\ph{func\_desc}

\# Objective: \ph{objective\_desc}

\# Baseline: \ph{func\_seed}

Baseline \ph{baseline\_score}

\# Candidate A | \ph{status\_a}
\ph{code\_a}

\# Candidate B | \ph{status\_b}
\ph{code\_b}

\# Reflection:
\ph{st\_reflection}

\# Improved code:
Combine the promising ideas from both candidates, resolve obvious weaknesses, and write a new \ph{func\_name} with lower expected score.
\end{prompt}

\begin{prompt}[top-down-uncertain-crossover-prompt]
\# Task: Write a \ph{func\_name} function for \ph{prob\_name}.

\ph{func\_desc}

\# Objective: \ph{objective\_desc}

\# Baseline: \ph{func\_seed}

Baseline \ph{baseline\_score}

\# Knowledge A | \ph{status\_a}
\ph{knowledge\_a}

\# Implementation A
\ph{code\_a}

\# Knowledge B | \ph{status\_b}
\ph{knowledge\_b}

\# Implementation B
\ph{code\_b}

\# Reflection:
\ph{st\_reflection}

\# Synthesized knowledge:
Combine the most plausible principles from both candidates, add one clarifying insight, then implement \ph{func\_name} with lower expected score.
\end{prompt}
\newpage
\paragraph{System Prompts for SR.}
Below are the system prompts used for symbolic regression.

\begin{prompt}[sr-generator-system-prompt]
You are an expert scientist in data-driven equation discovery. You propose equation skeletons as Python functions, drawing on scientific prior knowledge of the problem domain (physics, biology, chemistry, materials, etc.) and refining them against observed data. Numeric constants in your skeletons are placeholders that will be fitted by a numerical optimizer afterwards --- you design the form, not the values.
\end{prompt}

\begin{prompt}[sr-reflector-system-prompt]
You are an expert scientist in data-driven equation discovery. You analyze fitted equation skeletons to identify which functional terms capture the underlying scientific relationship and which terms are redundant or missing.
\end{prompt}

\paragraph{System Prompts for PE.}
Below are the system prompts used for protein engineering.

\begin{prompt}[pe-generator-system-prompt]
You are an expert in protein sequence design and Bayesian-style experimental prioritization. You design vectorized acquisition formulas that guide a fixed Bayesian optimization loop for choosing which protein sequence to test next.
\end{prompt}

\begin{prompt}[pe-reflector-system-prompt]
You are an expert in protein sequence design and experimental prioritization. You analyze acquisition formulas to identify which Bayesian optimization trade-offs improve the sequences discovered under a fixed budget.
\end{prompt}

\appsection{Additional Experimental Results}
\label{app:add_exp}

\appsubsection{Supplementary Evaluation}

\paragraph{Cross-Backbone Transfer under Shared Function Signatures.}
We further study transfer from TSP ACO to TSP GLS, TSP POMO, and TSP DIFUSCO, where the function signature is shared but the semantic role of the generated heuristic changes across solver backbones. This setting tests whether the transferred object captures reusable design knowledge rather than merely matching an interface. As shown in Table~\ref{tab:app:tsp_transfer_results}, top-down improves over bottom-up in $8/9$ target-size settings for both ReEvo and MCTS-AHD. Using the nine matched settings as paired observations and testing $H_1:\mathrm{gap}_{\mathrm{TD}}<\mathrm{gap}_{\mathrm{BU}}$ with a one-sided Wilcoxon signed-rank test, we obtain $p=0.0059$ for ReEvo and $p=0.0098$ for MCTS-AHD. The gains are especially clear on TSP DIFUSCO, suggesting that top-down transfer is useful when the target solver changes substantially: the same signature can hide different algorithmic semantics, while knowledge-level transfer can preserve higher-level principles that remain useful across backbones.

\begin{table}[h]
\centering
\caption{Cross-backbone transfer from TSP ACO to TSP GLS, POMO, and DIFUSCO under shared function signatures. Entries report optimal gap (\%, lower is better), shown as mean $\pm$ standard deviation over five runs.}
\label{tab:app:tsp_transfer_results}
\setlength{\tabcolsep}{5pt}
\renewcommand{\arraystretch}{1.2}
\resizebox{\linewidth}{!}{%
\begin{tabular}{|l|*{9}{C{1.8cm}|}}
\hline
\multirow{2}{*}{\textbf{Method}}
  & \multicolumn{3}{c|}{\textbf{GLS}}
  & \multicolumn{3}{c|}{\textbf{POMO}}
  & \multicolumn{3}{c|}{\textbf{DIFUSCO}} \\
\cline{2-10}
 & 100 & 200 & 500 & 100 & 200 & 500 & 100 & 200 & 500 \\
\hline
ReEvo BU + CPT
  & $2.15 \pm 0.02$ & $4.04 \pm 0.07$ & $4.43 \pm 0.15$
  & $1.66 \pm 0.07$ & $4.29 \pm 0.04$ & $12.57 \pm 0.34$
  & $6.03 \pm 0.64$ & $7.47 \pm 0.20$ & $10.36 \pm 0.08$ \\
ReEvo TD + CPT
  & $2.36 \pm 0.07$ & $3.76 \pm 0.19$ & $3.74 \pm 0.16$
  & $1.48 \pm 0.05$ & $4.06 \pm 0.07$ & $10.91 \pm 0.25$
  & $5.02 \pm 0.16$ & $6.94 \pm 0.08$ & $8.12 \pm 0.27$ \\
\hline
MCTS-AHD BU + CPT
  & $2.40 \pm 0.05$ & $3.95 \pm 0.08$ & $4.32 \pm 0.13$
  & $1.49 \pm 0.04$ & $4.19 \pm 0.13$ & $11.41 \pm 0.23$
  & $6.38 \pm 0.12$ & $7.32 \pm 0.09$ & $9.94 \pm 0.08$ \\
MCTS-AHD TD + CPT
  & $2.23 \pm 0.05$ & $3.46 \pm 0.08$ & $4.01 \pm 0.17$
  & $1.74 \pm 0.08$ & $3.96 \pm 0.02$ & $11.06 \pm 0.13$
  & $5.66 \pm 0.25$ & $6.95 \pm 0.02$ & $8.04 \pm 0.32$ \\
\hline
\end{tabular}%
}
\end{table}

\paragraph{Sparse Evaluation on Discrete Location Problems.}
We additionally evaluate sparse population-based search on four discrete location (DL) problems: $p$-Center, $p$-Cover, $p$-Dispersion, and $p$-Median. Table~\ref{tab:app:dl_results} reports performance gap (\%, lower is better) on instance sizes $100$ and $200$. The results show that sparse TD and dual TD--BU remain competitive despite the reduced evaluation budget. In particular, ReEvo TD (sparse) improves over ReEvo BU (sparse) on $5/8$ settings, while ReEvo TD \& BU (sparse) achieves the best or tied-best result on $5/8$ settings, including all $p$-Dispersion and $p$-Median cases. This suggests that knowledge-level search is especially useful when empirical feedback is limited. At the same time, the mixed behavior on $p$-Center and $p$-Cover indicates that sparse evaluation can be task-dependent, making it a useful stress test for whether top-down abstractions preserve performance-relevant structure.

\begin{table}[h]
\centering
\caption{Performance gap (\%) on $p$-Center, $p$-Cover, $p$-Dispersion, and $p$-Median. Results are reported as mean $\pm$ standard deviation over five runs.}
\label{tab:app:dl_results}
\setlength{\tabcolsep}{5pt}
\renewcommand{\arraystretch}{1.2}
\resizebox{\linewidth}{!}{%
\begin{tabular}{|l|*{8}{C{1.9cm}|}}
\hline
\multirow{2}{*}{\textbf{Method}}
  & \multicolumn{2}{c|}{\textbf{$p$-Center}}
  & \multicolumn{2}{c|}{\textbf{$p$-Cover}}
  & \multicolumn{2}{c|}{\textbf{$p$-Dispersion}}
  & \multicolumn{2}{c|}{\textbf{$p$-Median}} \\
\cline{2-9}
 & 100 & 200 & 100 & 200 & 100 & 200 & 100 & 200 \\
\hline
ReEvo BU
  & $0.242 \pm 0.235$ & $0.142 \pm 0.342$
  & $0.132 \pm 0.198$ & $0.232 \pm 0.534$
  & $0.642 \pm 0.531$ & $0.724 \pm 0.699$
  & $0.424 \pm 0.442$ & $0.590 \pm 0.651$ \\
ReEvo TD
  & $0.314 \pm 0.497$ & $0.452 \pm 0.421$
  & $0.031 \pm 0.011$ & $0.065 \pm 0.023$
  & $0.566 \pm 0.552$ & $0.676 \pm 0.705$
  & $0.364 \pm 0.308$ & $0.536 \pm 0.514$ \\
ReEvo TD \& BU
  & $0.142 \pm 0.298$ & $0.164 \pm 0.178$
  & $0.077 \pm 0.068$ & $0.104 \pm 0.144$
  & $0.531 \pm 0.607$ & $0.656 \pm 0.713$
  & $0.069 \pm 0.133$ & $0.097 \pm 0.178$ \\
\hline
ReEvo BU (sparse)
  & $0.191 \pm 0.297$ & $0.009 \pm 0.315$
  & $0.022 \pm 0.017$ & $0.000 \pm 0.020$
  & $0.572 \pm 0.614$ & $0.693 \pm 0.538$
  & $0.171 \pm 0.138$ & $1.238 \pm 0.607$ \\
ReEvo TD (sparse)
  & $0.000 \pm 0.233$ & $0.029 \pm 0.386$
  & $0.000 \pm 0.040$ & $0.073 \pm 0.030$
  & $0.192 \pm 0.598$ & $0.225 \pm 0.267$
  & $0.369 \pm 0.306$ & $0.570 \pm 0.352$ \\
ReEvo TD \& BU (sparse)
  & $0.079 \pm 0.309$ & $0.000 \pm 0.306$
  & $0.265 \pm 0.236$ & $0.382 \pm 0.218$
  & $0.000 \pm 0.169$ & $0.000 \pm 0.098$
  & $0.000 \pm 0.196$ & $0.000 \pm 0.333$ \\
\hline
\end{tabular}%
}
\end{table}

For statistical comparison, we treat each problem--size pair as one matched block, giving eight paired settings in total. Using the mean gap in each block, a Friedman test over the full-evaluation variants gives a significant method effect ($p=0.0439$), with mean ranks $2.625$, $2.000$, and $1.375$ for ReEvo BU, ReEvo TD, and ReEvo TD \& BU, respectively, where lower rank is better.

\appsubsection{Sensitivity Analysis}

\paragraph{Sensitivity to LLM Backbones.}
Figure~\ref{fig:app_02} evaluates whether the top-down advantage is sensitive to the choice of LLM backbone. We compare bottom-up and top-down variants under two lightweight models, \texttt{gemini-2.5-flash-lite} and \texttt{claude-3-5-haiku}, on TSP transfer tasks. The trend is broadly stable across both ReEvo and MCTS-AHD: top-down search usually achieves lower optimal gap than its bottom-up counterpart, especially on TSP DIFUSCO and larger problem sizes. At the same time, the magnitude of improvement varies across solvers and models, indicating that top-down search still depends on the quality of the LLM's domain abstractions and their realization as code.

\begin{figure}[h]
  \centering
  \includegraphics[width=\textwidth]{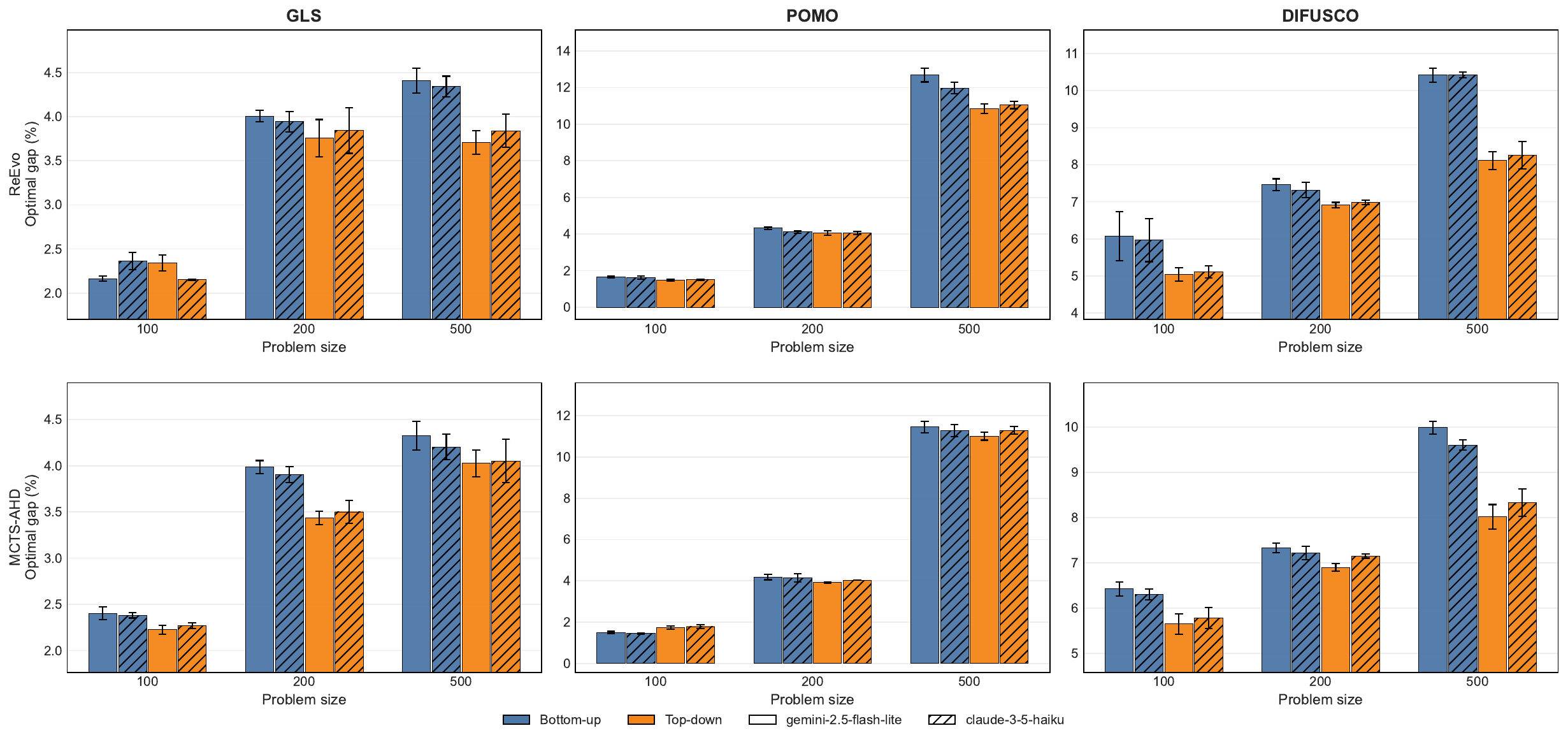}
  \caption{Sensitivity to LLM backbones on TSP transfer tasks. We compare bottom-up and top-down search using \texttt{gemini-2.5-flash-lite} and \texttt{claude-3-5-haiku} across ReEvo and MCTS-AHD. Results report optimal gap (\%, lower is better) on TSP GLS, POMO, and DIFUSCO over problem sizes $100$, $200$, and $500$.}
  \label{fig:app_02}
\end{figure}

\appsubsection{Failure Cases}
\label{app:failure}

\paragraph{Transfer under Misleading Source Information.}
We analyze a failure mode of top-down transfer where the target solver receives transferred information whose source-problem description is inaccurate or semantically mismatched. This setting is challenging for TD because the transferred object is knowledge $K$: if the source description induces an incorrect abstraction, the resulting prior can bias subsequent search before code is even evaluated. Table~\ref{tab:routing_results} shows this effect clearly. ReEvo TD underperforms ReEvo BU in $10/12$ settings, and MCTS-AHD TD underperforms MCTS-AHD BU in $9/12$ settings. The degradation is especially visible on TSP and CVRP, where misleading transferred knowledge can distort core assumptions about the routing structure. These results support the limitation discussed in Appendix~\ref{app:qa}: top-down search is effective when $K$ preserves task-relevant structure, but can be harmful when the knowledge abstraction is built from incorrect or misaligned problem information.

\begin{table}[h]
\centering
\caption{Failure cases for cross-problem transfer when the source-problem information is intentionally misleading or mismatched. Entries report optimal gap (\%, lower is better) on ACO TSP, CVRP, OVRP, and LVRP across problem sizes $100$, $200$, and $500$. TSP ACO
uses the TSP constructive heuristic as source, while all other settings use TSP ACO as source.}
\label{tab:routing_results}
\setlength{\tabcolsep}{5pt}
\renewcommand{\arraystretch}{1.2}
\resizebox{\linewidth}{!}{%
\begin{tabular}{|l|*{12}{C{1.25cm}|}}
\hline
\multirow{2}{*}{\textbf{Method}}
  & \multicolumn{3}{c|}{\textbf{TSP}}
  & \multicolumn{3}{c|}{\textbf{CVRP}}
  & \multicolumn{3}{c|}{\textbf{OVRP}}
  & \multicolumn{3}{c|}{\textbf{LVRP}} \\
\cline{2-13}
 & 100 & 200 & 500 & 100 & 200 & 500 & 100 & 200 & 500 & 100 & 200 & 500 \\
\hline
ReEvo BU + CPT
  & 0.75 & 4.21 & 9.08
  & 3.80 & 7.11 & 9.34
  & 4.39 & 10.42 & 20.96
  & 2.80 & 5.03 & 7.92 \\
ReEvo TD + CPT
  & 1.81 & 8.59 & 17.83
  & 6.58 & 11.17 & 10.98
  & 8.11 & 10.77 & 16.84
  & 2.97 & 5.14 & 5.86 \\
\hline
MCTS-AHD BU + CPT
  & 0.43 & 2.65 & 6.19
  & 4.91 & 8.67 & 10.27
  & 4.71 & 6.61 & 8.70
  & 2.10 & 4.51 & 6.42 \\
MCTS-AHD TD + CPT
  & 0.55 & 3.84 & 9.67
  & 5.63 & 9.82 & 11.13
  & 1.96 & 4.99 & 7.15
  & 5.40 & 8.75 & 8.88 \\
\hline
\end{tabular}%
}
\end{table}

\begin{tcolorbox}[
  enhanced,
  colback=white,
  colframe=gray!55,
  boxrule=0.6pt,
  arc=5pt,
  left=8pt,
  right=8pt,
  top=6pt,
  bottom=6pt
]
This section may be further updated as additional experiments are completed. We may also incorporate new experimental results, extended analyses, and supplementary discussions into subsequent revisions of the appendix.
\end{tcolorbox}

\appsection{Qualitative Analysis}
\label{app:qualitative_anal}

\appsubsection{Case Studies of Generated Code and Knowledge}

The best knowledge and code found for the ACO LVRP by ReEvo TD + CPT:

\begin{prompt}[best-knowledge]
The heuristic for the Duration-Limited Vehicle Routing Problem (LVRP) integrates several key principles: 

1. Inverse Cubed Distance Component emphasizes shorter distances, raising the score of closer edges significantly. 

2. Combined Demand Optimization incorporates customer demand and vehicle capacity, penalizing edges leading to nodes that exceed the vehicle's capacity if added to the route. 

3. Maximum Duration Constraint factors in the total distance of routes, with penalties for edges that lead to routes exceeding the total allowable duration (including return to depot). 

4. Connectivity Assessment evaluates edges based on their ability to connect to groups of customers, promoting routes that efficiently utilize both the depot and the customer demands. 

5. Path Diversity encourages exploration of diverse routes, especially underused connections, preventing early looping back to the depot and enhancing coverage.
\end{prompt}

\begin{python}
import numpy as np

def compute_heuristic_matrix(
    dist_mat: np.ndarray,
    demands: np.ndarray,
    vehicle_capacity: float,
    max_duration: float,
) -> np.ndarray:
    n_nodes = dist_mat.shape[0]
    epsilon = 1e-10  # to avoid division by zero
    heuristic_matrix = np.zeros((n_nodes, n_nodes))

    # Calculate the distance heuristic, emphasizing proximity
    distance_heuristic = 1 / (dist_mat + epsilon)**3  # Inverse cubed distance

    for i in range(n_nodes):
        for j in range(n_nodes):
            if i != j:
                # Factor in vehicle's capacity and customer demand
                if demands[j] + demands[i] <= vehicle_capacity:
                    capacity_penalty = 1
                else:
                    capacity_penalty = 0.5  # penalize if demand exceeds capacity

                # Factor in maximum duration constraint
                path_duration = dist_mat[i][0] + dist_mat[i][j] + dist_mat[j][0]
                duration_penalty = 1 if path_duration <= max_duration else 0.5  # penalize over max duration

                # Combine the factors to establish the heuristic score
                heuristic_score = (distance_heuristic[i][j] 
                                   * capacity_penalty 
                                   * duration_penalty)
                heuristic_matrix[i][j] = heuristic_score

    return heuristic_matrix
\end{python}

The best knowledge and code found for the ACO CVRP by MCTS-AHD TD + CPT:

\begin{prompt}[best-knowledge]
For the Capacitated Vehicle Routing Problem (CVRP), an effective heuristic should guide route construction toward compact and feasible solutions:

1. Inverse Distance Prioritization favors shorter edges by assigning larger scores to closer customer pairs, encouraging compact route construction.

2. Capacity-Aware Demand Scaling incorporates customer demand relative to vehicle capacity, reducing the attractiveness of nodes with larger demand and eliminating infeasible demand cases.

3. Depot-Based Clustering Assessment evaluates whether two customers form a structurally coherent pair with respect to their distances from the depot, promoting customer groupings that are geographically meaningful for route formation.

4. Angular Efficiency encourages transitions between customers that appear to lie in compatible routing directions, helping avoid inefficient turns or poorly aligned route extensions.

5. Combined Structural Scoring multiplies distance, demand, clustering, and angular factors into a single edge heuristic, ensuring that selected edges are not only short but also capacity-aware and structurally suitable for CVRP routes.
\end{prompt}

\begin{python}
import numpy as np

def compute_heuristic_matrix(
    dist_mat: np.ndarray,
    demands: np.ndarray,
    vehicle_capacity: float,
) -> np.ndarray:
    # Adding a small epsilon value to avoid division by zero
    epsilon = 1e-10
    n_nodes = dist_mat.shape[0]

    # Step 1: Inverse distance heuristic
    inverse_distance = 1 / (dist_mat + epsilon)

    # Step 2: Demand factor normalization
    demand_factor = 1 - (demands / vehicle_capacity)  # Penalty for higher demands
    demand_factor[demands > vehicle_capacity] = 0  # Set demand exceeding capacity to zero

    # Step 3: Calculate clustering coefficient
    clustering_coefficient = np.zeros((n_nodes, n_nodes))
    for i in range(n_nodes):
        for j in range(n_nodes):
            if i != j:
                # Simple clustering score based on distance from depot
                clustering_coefficient[i, j] = (1 - (dist_mat[0, i] + dist_mat[0, j]) / (dist_mat[i, j] + epsilon)) ** 2

    # Step 4: Angular efficiency (favoring closer nodes that are in the same routing direction)
    angular_efficiency = np.ones((n_nodes, n_nodes))  # Placeholder for angular efficiency
    for i in range(1, n_nodes):  # Ignore depot
        for j in range(1, n_nodes):  # Ignore depot
            angular_efficiency[i, j] = np.cos(np.pi * dist_mat[i, j] / np.max(dist_mat))  # Example of an angular concept

    # Step 5: Combine heuristic scores
    heuristic_matrix = (
        inverse_distance * 
        demand_factor[:, np.newaxis] * 
        clustering_coefficient * 
        angular_efficiency
    )

    # Step 6: Ensure no self-connections
    np.fill_diagonal(heuristic_matrix, 0)

    return heuristic_matrix
\end{python}

The best knowledge and code found for the ACO OVRP by MCTS-AHD TD + CPT:

\begin{prompt}[best-knowledge]
In Open Vehicle Routing Problem (OVRP), route construction is driven by customer-to-customer transitions that remain capacity-feasible while avoiding unnecessary return-to-depot bias:

1. Inverse Squared Distance emphasizes short transitions, strongly favoring nearby customers to maintain compact routes.

2. Residual Capacity Filtering evaluates whether a candidate customer can be feasibly served given the remaining capacity, down-weighting or excluding infeasible extensions.

3. Progressive Demand Accumulation approximates the load already carried along the route, discouraging transitions that would likely violate capacity constraints later.

4. Demand-Aware Scoring prioritizes customers with smaller demands relative to remaining capacity, enabling more flexible route continuation.

5. Exploration Weighting promotes connections toward less dominant regions by incorporating a global interaction term over distances and demands, reducing premature concentration on local clusters.
\end{prompt}

\begin{python}
import numpy as np

def compute_heuristic_matrix(
    dist_mat: np.ndarray,
    demands: np.ndarray,
    vehicle_capacity: float,
) -> np.ndarray:
    epsilon = 1e-5  # Small value to avoid division by zero
    n_nodes = dist_mat.shape[0]

    # Check if demand array matches distance matrix dimensions
    if n_nodes != demands.shape[0]:
        raise ValueError("Demand array must match the size of the distance matrix.")

    # Initialize heuristic matrix
    heuristic_matrix = np.zeros_like(dist_mat)

    for i in range(n_nodes):  # Current node
        for j in range(n_nodes):  # Next possible node
            if i != j:
                # Check feasibility of visiting node j from node i
                if i == 0:
                    total_demand = 0  # At depot, no demand yet
                else:
                    total_demand = np.sum(demands[np.where(dist_mat[i, :] < dist_mat[i, j])])

                # Ensure we can pick up the demand of node j
                remaining_capacity = vehicle_capacity - total_demand
                if demands[j] <= remaining_capacity:
                    # Inverse squared distance for closer nodes
                    distance_score = 1 / (dist_mat[i, j] ** 2 + epsilon)
                    # Demand factor to prioritize less demand
                    demand_score = (remaining_capacity - demands[j]) / vehicle_capacity

                    # Visit weight to encourage connecting lesser-visited nodes
                    visit_weight = np.sum(1 / (dist_mat[i, :] + epsilon) * (demands - np.min(demands)))

                    # Combine the factors into the heuristic score
                    heuristic_score = distance_score * demand_score * (1 + visit_weight)  # Emphasizing distance and demand

                    # Assign to the heuristic matrix
                    heuristic_matrix[i, j] = heuristic_score

    # Normalize by row sums to ensure comparability
    row_sums = np.sum(heuristic_matrix, axis=1, keepdims=True)
    heuristic_matrix /= (row_sums + epsilon)  # Avoid division by zero if all scores are zero

    # No self-connections
    np.fill_diagonal(heuristic_matrix, 0)

    return heuristic_matrix
\end{python}

The best knowledge and code found for the GLS TSP by ReEvo TD sparse:

\begin{prompt}[best-knowledge]
To improve edge prioritization in the Traveling Salesman Problem, we can leverage advanced clustering techniques along with a few heuristics regarding edge structure. The following key strategies will inform our new heuristic: 

1. Geometric Distance Analysis: Rather than purely using distance, we can adjust weights by considering the relative positions of cities in 2D space, minimizing poorly connected edges due to geographical arrangements. 

2. Connection Density: By analyzing local connectivity density and adjusting penalties based on edge density within clusters, we can identify edges that bridge sparsely connected clusters. 

3. Edge-to-Clustering Ratio: An edge-centric measure that captures how the edge compares to others in its locality, using both distance and connectivity strength, can reveal less reliable edges. 

4. Integration of Angles and Distances: By focusing explicitly on angular deviation, which can signal potential inefficiencies, along with using lengths, we can comprehensively analyze structural integrity. 

5. Cluster Quality: We can assess the quality of cities within clusters, favoring those induced by DBSCAN where clusters display compactness, ensuring stronger structural edges are prioritized instead of weak cross-cluster links. 

Overall, our approach will consider both local structures and larger cluster formations for more reliable results, guiding the heuristic towards edges that critically impact overall tour efficiency.
\end{prompt}

\begin{python}
import numpy as np
from sklearn.cluster import DBSCAN
from numpy.linalg import norm

def compute_penalty_guide(dist_mat: np.ndarray) -> np.ndarray:
    n = dist_mat.shape[0]  # number of cities
    edge_priority = np.zeros_like(dist_mat)

    # Step 1: Perform clustering using DBSCAN on distance matrix (treating distances as a metric)
    dbscan = DBSCAN(eps=1.5, min_samples=3, metric='precomputed')
    cluster_labels = dbscan.fit_predict(dist_mat)

    # Step 2: Evaluate edges based on distance and connection density
    for i in range(n):
        for j in range(n):
            if i == j:
                continue  # No self-loop

            edge_distance = dist_mat[i][j]  

            # Angle and density-based penalty
            if cluster_labels[i] != cluster_labels[j]:
                edge_priority[i][j] += edge_distance * 3  # Strong penalty for cross-cluster edges

            # Local neighborhood analysis
            local_distances = np.delete(dist_mat[i], i)  # Remove self-referencing
            nearest_neighbors_indices = np.argsort(local_distances)[:5]  # Get 5 nearest neighbors
            local_avg_distance = np.mean(local_distances[nearest_neighbors_indices])
            connection_density = np.sum(local_distances[nearest_neighbors_indices] < local_avg_distance) / len(nearest_neighbors_indices)
            deviation = edge_distance - local_avg_distance

            # Apply penalties
            if deviation > 0:
                angle_penalty = np.arccos(np.clip((edge_distance**2 - local_avg_distance**2 - local_avg_distance**2) / (-2 * local_avg_distance * local_avg_distance), -1, 1))
                edge_priority[i][j] += edge_distance * (1 + (deviation / local_avg_distance) + angle_penalty * 3 + (1 - connection_density) * 2)

    # Normalize edge priority values while preventing division by zero
    edge_priority = edge_priority / (edge_priority.max() + 1e-9)
    return edge_priority
\end{python}

The best knowledge and code found for the DIFUSCO TSP Diagonal by ReEvo TD\&BU (multi-task):

\begin{prompt}[best-general-knowledge]
To improve edge selection in TSP across various distributions, leverage logarithmic transformations on distances for closer neighbors, while applying systematic negative penalties for distant edges. Enhance local density awareness by calculating neighbor proximity, and normalize biases relative to the local connectivity context. Aim for a balanced bias impact across the graph to adapt to distinct spatial patterns without overfitting to any single distribution.
\end{prompt}

\begin{python}
import torch

def compute_edge_bias(distance_matrix: torch.Tensor) -> torch.Tensor:
    num_nodes = distance_matrix.shape[0]
    bias_matrix = torch.zeros_like(distance_matrix)
    K = min(num_nodes // 10 + 1, 5)  # Adaptive K based on node density
    
    # Pre-compute normalized distances to avoid repeated calculations
    max_distance = distance_matrix.max().item() + 1e-10
    scaled_distance = distance_matrix / max_distance
    
    # Calculate average distance and variance to capture structure
    avg_distance = distance_matrix.mean(dim=1)
    var_distance = distance_matrix.var(dim=1)
    
    # Enhanced bias calculation with local clustering and proximity
    for i in range(num_nodes):
        # Identifying top K nearest neighbors, considering only distance for selection
        top_k_indices = torch.topk(distance_matrix[i], k=K, largest=False).indices
        top_k_distances = distance_matrix[i, top_k_indices]
        
        # Calculate statistical characteristics of distances 
        local_mean = avg_distance[i]
        local_var = var_distance[i] 
        density_adjustment = torch.exp(-local_var / (local_mean + 1e-10))  # Local density effect 
        
        # Applying logarithmic transformations for biasing close distances
        bias_matrix[i, top_k_indices] = -torch.log1p(top_k_distances) - (top_k_distances / (local_mean + 1e-10))
        bias_matrix[i, top_k_indices] += density_adjustment * (1.5 - (top_k_distances / (local_mean + 1e-10)))  
        
        # Aggressive negative bias for distant nodes, imposing stronger penalties
        for j in range(num_nodes):
            if j not in top_k_indices:
                penalty = -torch.exp(5 * scaled_distance[i, j] - (local_var / (local_mean + 1e-10))) * density_adjustment  # Adjusted decay based on local clustering
                bias_matrix[i, j] += penalty  
                
        # Centralizing bias for balance across the row
        bias_mean = bias_matrix[i].mean()  # Centering around zero
        bias_matrix[i] -= bias_mean
    
    return bias_matrix
\end{python}

\appsubsection{Case Studies of Knowledge Evolution}

\begin{figure}[h]
  \centering
  \includegraphics[width=\textwidth]{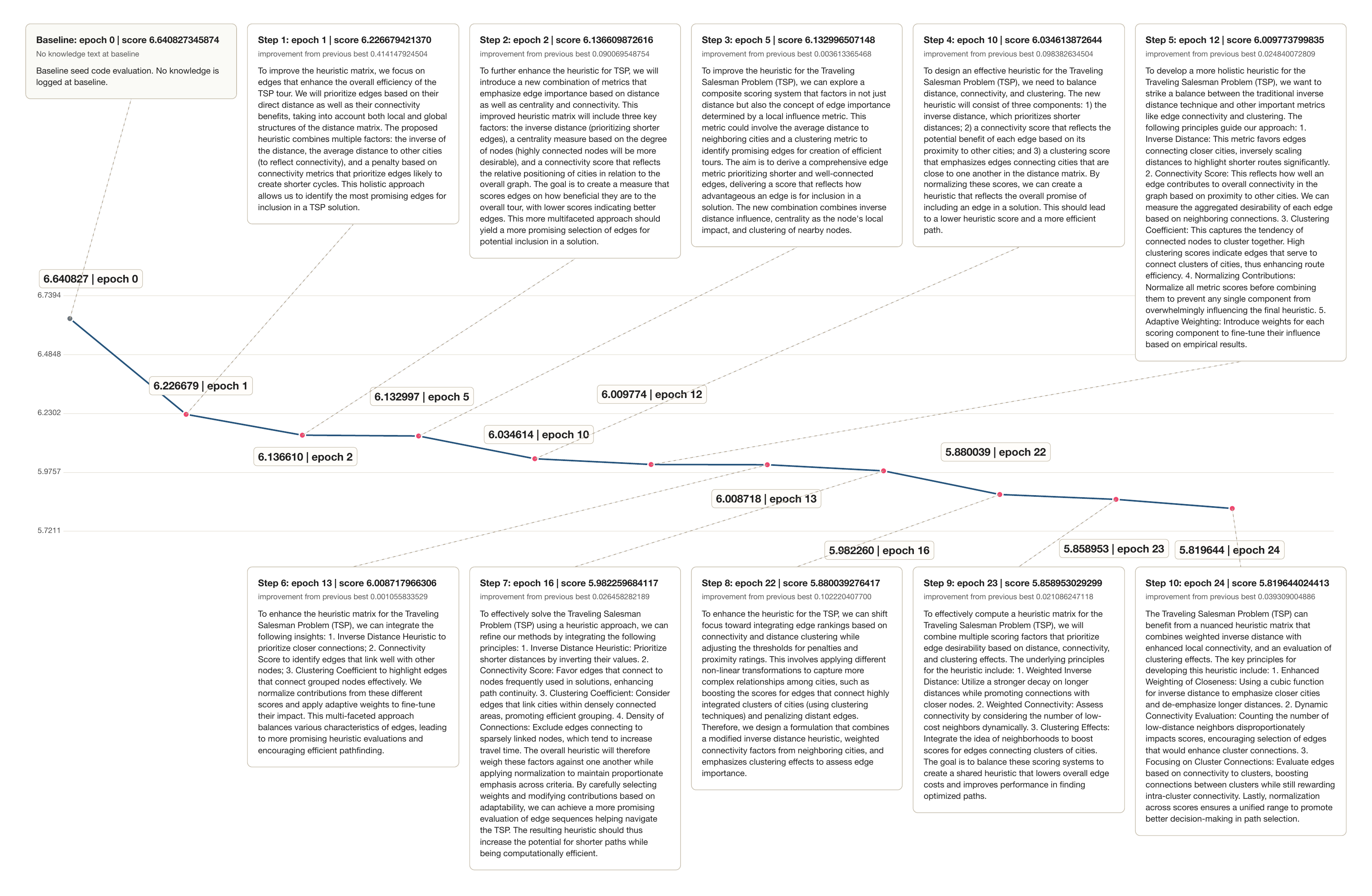}
\caption{Knowledge evolution path for TSP heuristic search on the ACO backbone, showing how selected knowledge states are progressively refined over time and gradually improve the score.}
\end{figure}

\newpage
\appsection{Additional Discussions}
\label{app:add_discuss}

\paragraph{Limitations.}
The main limitation of our approach is the \emph{fidelity of knowledge abstraction}. Top-down AHD is useful when a knowledge state $K$ preserves the performance-relevant structure of a heuristic, but it can fail when $K$ is too coarse, misleading, or difficult to realize as executable code $\theta$. In such cases, the compression benefit of searching in $\mathcal{K}$ may be outweighed by the distortion introduced by abstraction, and bottom-up code-level search can be preferable. This limitation is especially visible when the task description or transferred source information induces an incorrect prior, as discussed in Appendix~\ref{app:failure}.

A second limitation is that top-down search still depends on the LLM's domain understanding and code-generation reliability. Although our implementation keeps LLM-call budgets matched across BU and TD variants, the quality of generated knowledge, the faithfulness of the realization step $K\mapsto\theta$, and the robustness of generated programs can vary across models, tasks, and solver backbones. Finally, our experiments focus on algorithmic design settings where candidate heuristics can be automatically evaluated. Extending the framework to domains with expensive, noisy, delayed, or safety-critical feedback requires additional safeguards beyond the current evaluation protocol.

\paragraph{Broader Impacts.}
This work aims to improve automatic heuristic design by making reusable algorithmic knowledge an explicit search object. A positive impact is that such systems may reduce the amount of manual effort required to design effective heuristics for combinatorial optimization and scientific discovery tasks. The top-down formulation may also make AHD more interpretable, since intermediate knowledge states $K$ expose the design principles that lead to generated programs, rather than only returning opaque code artifacts.

At the same time, automatic heuristic generation can produce incorrect, brittle, or inefficient code if used without validation. In high-stakes applications, generated heuristics should not be deployed solely based on training-set performance; they should be tested under distribution shift, stress cases, and implementation-level constraints. The framework may also increase computational demand through repeated LLM calls and solver evaluations, although sparse evaluation is one step toward reducing empirical evaluation cost. More generally, our results should be interpreted as evidence for a research direction in AHD, not as a guarantee that LLM-generated heuristics are reliable in all downstream settings.

\paragraph{LLM Usage.}
LLMs are a core methodological component of this work. They are used as conditional generators for code $\theta$, knowledge states $K$, short-term reflections, long-term reflections, and realization steps from knowledge to executable implementations. In the bottom-up variants, the LLM primarily generates or modifies code artifacts $A=\alpha(\theta)$ and produces reflections from evaluated code. In the top-down variants, the LLM first generates or refines knowledge $K$ and then realizes it as code $\theta$ for empirical evaluation. All reported comparisons control the number of LLM calls and evaluation budgets as described in Appendix~\ref{app:setup}.

We use LLMs only within the automatic heuristic design pipeline and for ordinary writing or editing assistance during manuscript preparation. The latter does not affect the scientific claims, experimental results, or methodological contributions. All generated heuristics are evaluated by executable benchmark scripts, and the reported results are based on empirical scores rather than on LLM self-assessment.

\paragraph{Reproducibility.}
To support reproducibility, we implement all variants under a unified codebase with shared evaluator interfaces, matched budgets, and task-specific configuration files. For each run, we log the generated code $\theta$, the empirical score $\widehat{L}_{\mathcal{D}}(\theta)$, and, for top-down variants, the associated knowledge state $K$. We also save the best discovered artifact, intermediate populations or elite archives when applicable, and the hyperparameters controlling population size, mutation rate, evaluation budget, sparse-evaluation ratio, and tree-search parameters. The prompts, benchmark scripts, configuration files, and raw experimental logs will be released with the codebase to enable independent verification and extension of the reported results.

\newpage

\begin{center}
This page is intentionally left blank so the paper reaches 75 pages, a beautiful number to me.

You can find my homepage and other publications at: \url{https://haiau2501.github.io/}
\end{center}


\end{document}